\newtheorem{theorem}{Theorem}
\newtheorem{lemma}{Lemma}
\newtheorem{proposition}{Proposition}
\newtheorem{corollary}{Corollary}
\theoremstyle{definition}
\newtheorem{definition}{Definition}
\newtheorem{remark}{Remark}
\newtheorem{example}{Example}
\newcommand{\reals}{{\mathbb{R}}}
\newcommand{\naturals}{{\mathbb{N}}}
\newcommand{\identity}{\mathbf I}
\newcommand{\diff}{{\rm d}}
\newcommand{\red}{\color{red}}
\newcommand{\blue}{\color{blue}}
\newcommand{\nbo}[1]{{\sf\color{orange}[#1]}}
\newcommand{\nb}[1]{{\sf\blue[#1]}}
\newcommand{\ls}[1]{\nbo{LS: #1}} 
\newcommand{\nbr}[1]{{\sf\red[#1]}}
\newcommand{\Expect}{\mathbb{E}}
\newcommand{\expect}[1]{\mathbb{E}\left[ #1 \right]}
\newcommand{\expects}[2]{\mathbb{E}_{#2}\left[ #1 \right]}
\newcommand{\Prob}{\mathbb{P}}
\newcommand{\prob}[1]{ \mathbb{P}\left\{ #1 \right\} }
\newcommand{\ie}{i.e.\xspace}
\newcommand{\pth}[1]{\left( #1 \right)}
\newcommand{\qth}[1]{\left[ #1 \right]}
\newcommand{\sth}[1]{\left\{ #1 \right\}}
\newcommand{\abth}[1]{\left | #1 \right |}
\newcommand{\norm}[1]{\left\|{#1} \right\|_2}
\newcommand{\Norm}[1]{\|{#1} \|_2}
\newcommand{\lnorm}[2]{\left\|{#1} \right\|_{{#2}}}
\newcommand{\Fnorm}[1]{\lnorm{#1}{\rm F}}
\newcommand{\fnorm}[1]{\|#1\|_{\rm F}}
\newcommand{\iprod}[2]{\left \langle #1, #2 \right\rangle}
\newcommand{\indc}[1]{{\mathbf{1}_{\left\{{#1}\right\}}}}
\newcommand{\diag}[1]{\mathsf{diag} \left\{ {#1} \right\} }
\newcommand{\calB}{{\mathcal{B}}}
\newcommand{\calD}{{\mathcal{D}}}
\newcommand{\calG}{{\mathcal{G}}}
\newcommand{\calH}{{\mathcal{H}}}
\newcommand{\calI}{{\mathcal{I}}}
\newcommand{\calL}{{\mathcal{L}}}
\newcommand{\calN}{{\mathcal{N}}}
\newcommand{\calP}{{\mathcal{P}}}
\newcommand{\calR}{{\mathcal{R}}}
\newcommand{\calS}{{\mathcal{S}}}
\newcommand{\calT}{{\mathcal{T}}}
\newcommand{\calV}{{\mathcal{V}}}
\newcommand{\calX}{{\mathcal{X}}}
\newcommand{\calY}{{\mathcal{Y}}}
\newcommand{\Tr}{\mathsf{Tr}}
\renewcommand{\hat}{\widehat}
\renewcommand{\tilde}{\widetilde}
\newcommand{\FG}{\mathsf{FG}}
\newcommand{\FedProx}{{FedProx}\xspace}
\newcommand{\FedAvg}{{FedAvg}\xspace}
\newcommand{\hnorm}[1]{\left\|{#1} \right\|_{\mathcal{H}}}
\newcommand{\hprod}[2]{\left \langle #1, #2 \right\rangle_{\mathcal{H}}}
\newcommand{\Hnorm}[1]{\|{#1} \|_{\mathcal{H}}}
\newcommand{\Hprod}[2]{\langle #1, #2 \rangle_{\mathcal{H}}}
\newcommand{\abs}[1]{\left| #1 \right|}
\newcommand{\opnorm}[1]{\lnorm{#1}{\rm op}}
\newcommand{\tha}{{\bar \theta}}
\newcommand{\bx}{{\bf x}}
\newcommand{\fa}{{\bar f}}
\newenvironment{FedLS}[1][htb]
  {
   \begin{algorithm}[#1]%
  }{\end{algorithm}}
\newenvironment{FedLC}[1][htb]
  {
   \begin{algorithm}[#1]%
  }{\end{algorithm}}
\begin{document}

\title{
A Non-parametric View of FedAvg and FedProx: \\Beyond Stationary Points
}

\author{
Lili Su \\
Electrical and Computer Engineering \\
Northeastern University  
\and 
Jiaming Xu\\
The Fuqua School of Business\\
Duke University 
\and 
Pengkun Yang\thanks{Correspondence author. Email: \url{yangpengkun@tsinghua.edu.cn}.  J.~Xu is supported by the NSF Grants IIS-1838124, CCF-1850743, CCF-1856424,
and CCF-2144593.
P.~Yang is supported by the NSFC Grant 12101353.
} \\
Center for Statistical Science \\ 
Tsinghua University 
}

\date{\today}

\maketitle 

\begin{abstract}

Federated Learning (FL) is a promising decentralized learning framework and 
has great potentials in privacy preservation and in lowering the 
computation load at the cloud. Recent work showed that FedAvg and FedProx -- the two widely-adopted FL algorithms -- fail to reach the stationary points of the global optimization  objective
even for homogeneous linear regression problems. Further,  it is concerned that the common model learned might not generalize well locally at all in the presence of heterogeneity. 
 
In this paper, we analyze the convergence and statistical efficiency of FedAvg and FedProx, addressing the above two concerns. Our analysis is based on the standard non-parametric regression  in a reproducing kernel Hilbert space (RKHS), and allows for 
heterogeneous local data distributions and unbalanced local datasets. 
We prove that the estimation errors, measured in either the 
empirical
norm or the RKHS norm, decay
with a rate of $1/t$ in general and exponentially for finite-rank kernels. 
In certain heterogeneous settings, these upper bounds also imply that both FedAvg and FedProx achieve the optimal error rate. 
To further analytically quantify the impact of the heterogeneity  at each client,
we propose and characterize a novel notion-federation gain, defined as the reduction of the estimation error for a client to join the FL. We discover that when the data heterogeneity is moderate,  a client with limited local data can  benefit from a common model with a large federation gain.
Numerical experiments further corroborate our theoretical findings. 
\end{abstract}



\section{Introduction}
\label{sec: intro}

Federated Learning (FL) is a rapidly developing  decentralized learning framework in which a parameter server (PS) coordinates with a massive collection of end devices 
in executing machine learning tasks 
\cite{konevcny2016federatedb, konevcny2016federateda,mcmahan2017communication,kairouz2019advances}. 
In FL, instead of uploading data to the PS, the end devices work at the front line in processing their own local data and periodically report their local updates
to the PS. The PS then effectively aggregates those
updates to obtain a fine-grained model and broadcasts the fine-grained model to the end devices for further model updates. 
On the one hand, FL has 
great potentials in privacy-preservation and in lowering the computation load at the cloud, 
both of which are crucial for modern machine learning applications.  
On the other hand, the defining characters of FL, i.e., costly communication, massively-distributed system architectures, highly unbalanced and heterogeneous data across devices, make it extremely challenging to understand the theoretical foundations of  popular FL algorithms. 

\FedAvg and \FedProx are two widely-adopted FL algorithms \cite{mcmahan2017communication,li2018federated}. They center around minimizing a global objective function
$\ell(f) \triangleq \sum_{i=1}^M w_i \ell_i(f)$, where $\ell_i(f)$ is the local empirical risk of model $f$ evaluated at client $i$'s local data \cite{kairouz2019advances,mcmahan2017communication,li2018federated,karimireddy2020scaffold} and $w_i$ is the weight assigned to client $i$.
Specifically, in each round $t$, starting from $f_{t-1}$ each client $i$ computes its local model update $f_{i,t}$, which is then aggregated by the PS to produce $f_t$. 
To save communication, \FedAvg  
only aggregates the local updates 
every $s$-th step of local gradient descent, where $s\ge 1$; when $s=1$, \FedAvg reduces to the standard stochastic gradient descent (SGD) algorithm. 
\FedProx is a proximal-variant of \FedAvg where the local gradient descent is replaced by a proximal operator. 

Despite ample recent effort and some progress, 
the convergence and statistical efficiency of these two FL algorithms
remain elusive \cite{kairouz2019advances,pathak2020fedsplit}. In particular, existing attempts 
often impose either impractical or restricted assumptions such as balanced local data \cite{li2019convergence}, bounded gradients dissimilarity (i.e., $\nabla \ell_i \approx \nabla \ell$ for all $i$)  \cite{li2018federated,karimireddy2020scaffold,stich2018local,zinkevich2010parallelized}, and fresh data \cite{kairouz2019advances}, and mostly ignore  the impact of the model dimension (see Section \ref{subsec: related work} for detailed discussions). 
More concerningly, recent work \cite{pathak2020fedsplit,karimireddy2020scaffold,zhao2018federated} showed,
both experimentally and theoretically, that both  \FedAvg and \FedProx fail to reach the stationary point of $\ell(f)$ 
even for the simple homogeneous linear regression  problems. 
This observation is also illustrated in Fig.\,\ref{fig:f1}, wherein we plot the trajectories of the gradient magnitudes $\norm{\nabla \ell(f_t)}$ versus the communication rounds $t$ under \FedProx and \FedAvg with aggregation period $s$ being $1, 5, 10$, respectively. 
While the gradient magnitude of \FedAvg with $s=1$
quickly drops to $0$, the gradient magnitudes under 
\FedProx and \FedAvg with $s=5, 10$ 
stay well above $0.$ 
Does the failure of reaching stationary points lead to unsuccessful learning? 
We plot the evolution of the estimation error illustrated in Fig.\,\ref{fig:f2}.
Surprisingly, both \FedAvg with $s= 5,10$ and \FedProx quickly converge to almost the same estimation error as \FedAvg with $s= 1$ (i.e.\ the standard SGD). 
Moreover, the convergence time of \FedAvg with $s=5, 10$ shrinks roughly by a factor of $s$
compared to $s=1$, indicating that \FedAvg enjoys significant saving in communication cost. 
\emph{Why \FedAvg and \FedProx can achieve low estimation errors despite the failure of reaching stationary points?} 
The current paper aims to demystify this paradox. 

\begin{figure}
  \begin{subfigure}[b]{0.45\textwidth}
    \includegraphics[width=\textwidth]{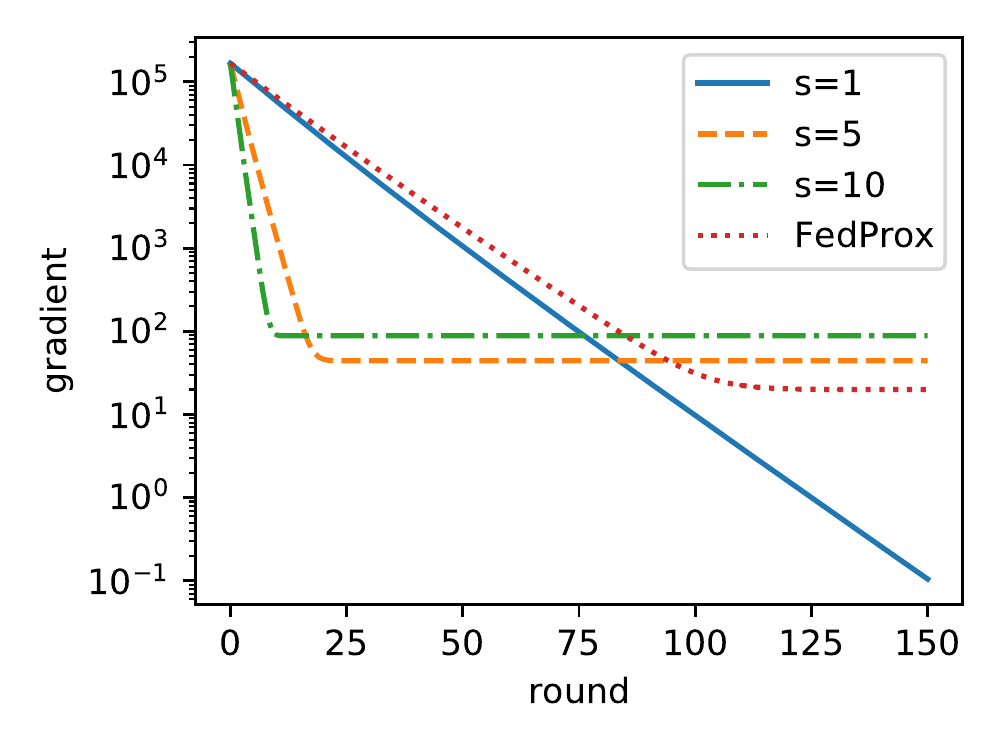}
    \caption{Plots of the gradient magnitudes versus the communication rounds}
    \label{fig:f1}
  \end{subfigure}
    \hfill
  \begin{subfigure}[b]{0.45\textwidth}
    \includegraphics[width=\textwidth]{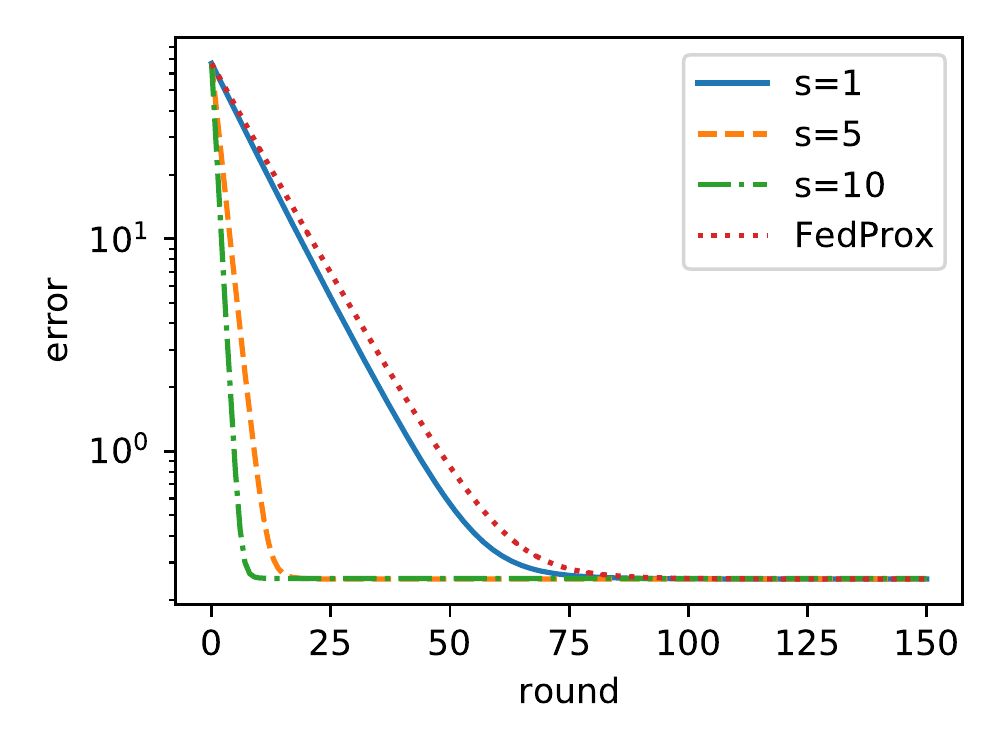}
    \caption{Plots of the estimation errors versus the communication rounds}
    \label{fig:f2}
  \end{subfigure}
\caption{Plots of linear regression under \FedProx and \FedAvg.
Experiment specifications: 25 clients, covariate dimension is 100, local sample size is 500, and observation noise follows $\calN(0, 0.25I)$. Detailed specifications can be found in Section \ref{subsec: stationary points v.s. prediction errors}. 
}
\label{fig: figure group 1}
\end{figure}

Our study is further motivated by the 
concern on the lack of model personalization. 
Note that under both \FedAvg and \FedProx, a common model is trained but is used to serve all the participating clients  without further tailoring to their local datasets. 
The tension between such standardized model and the data heterogeneity across clients leads to ever-increasing concern on the generalization performance of the common FL model at  different 
clients.   
In fact, on highly skewed heterogeneous data, evidence has been found suggesting that 
a common model could be problematic \cite{zhao2018federated,fallah2020personalized,deng2020adaptive,dinh2020personalized}. 
\emph{Under what scenarios can a client benefit from a common model in the presence of heterogeneity?}
The current paper seeks to address this question by quantifying the benefits and the impact of heterogeneity  via a novel notion -- {\em federation gain}. 


\paragraph*{Contributions} 
In this paper, we analyze the convergence and statistical efficiency of
\FedAvg and \FedProx by combining the optimization and  statistical perspectives.
Specifically, we assume that each client $i$ has $n_i$ local data points $\{x_{ij}, y_{ij}\}_{j=1}^{n_i}$ such that 
$
y_{ij} = f_i^*\left( x_{ij} \right) + \xi_{ij},$
where $f^*_i$ 
is the true model and $\xi_{ij}$ is the noise.
 We allow 
$n_i$,  $x_{ij}$, $f_i^*$, and $\xi_{ij}$
to vary across different clients $i$, 
capturing the unbalanced data partition, 
covariate heterogeneity, and model heterogeneity, 
which are three most important types of heterogeneity 
survey~\cite{kairouz2019advances}. 
We base our analysis on the standard non-parametric
regression  setup and assume that $f_i^*$ belongs to a
reproducing kernel Hilbert space (RKHS) $\calH$~\cite{wainwright2019high}.

We first show in Section \ref{sec:pred_error_emp} 
that the existence of heterogeneity does not prevent the convergence of $f_t$ to a good common model $f$ under \FedAvg and \FedProx. 
Specifically, we show in Theorem \ref{thm:pred_error_emp} that with a proper early stopping rule, the estimation error decays with a rate of $1/t$. 
This further implies that: (i) in the presence of only unbalanced data partition and covariate heterogeneity where $f_i^* \equiv f^*$, 
 $f_t$ converges to $f^*$; (ii)
 with additional model heterogeneity, $f_t$ approaches a common $f$ that balances the model discrepancy up to a
residual estimation error.
For finite-rank kernel matrices, we  further improve the convergence rate to be exponential  without early stopping in \prettyref{thm: exponential}. 
High probability bounds 
are derived in \prettyref{thm: Light-tailed noises} for both light-tailed and heavy-tailed noises.

We show in Section \ref{sec:theta_err} that the finite-rankness of the kernel  also enables us to derive an explicit expression of the common $\bar{f}$ that perfectly balances out the heterogeneity across clients. In fact, in~\prettyref{thm:conv_theta} we  establish
the convergence in RKHS norm - a strictly stronger notion of convergence. In particular, we show that the estimation error $\|f_t-\bar{f}\|_{\calH}$ decays exponentially fast to $O(\sqrt{d/N})$ for $N=\sum_i n_i$,
provided that the sample covariance matrix is well-conditioned. This error rate coincides with the minimax-optimal rate in the centralized setting. We further present two exemplary settings where the well-conditionedness assumption is shown to hold with high probability.

Moreover, we bound the difference $\|\bar f -f_j^*\|_{\calH}$, showing that when the model heterogeneity is moderate, a client with
limited local data can still benefit from a common model. To formally study the benefits of joining FL, in Section \ref{subsec: Characterization of Federation gains} we propose and characterize the \emph{federation gain}, defined as the reduction of the estimation error for a client to join the FL. 
We establish a threshold on the heterogeneity in terms of model dimensions and local data sizes under which the federation gain exceeds one. 
Our characterization of federation gain serves as a guidance in encouraging end devices owners to make their participation decisions.

Finally, using numerical experiments, we corroborate our theoretical findings. 
Specifically, 
in Section \ref{subsec: stationary points v.s. prediction errors} we demonstrate that both \FedAvg and \FedProx can achieve low estimation errors despite the failure of reaching stationary points. The same phenomenon is found to still persist when minibatches 
are used in local updates. In Sections \ref{subsec: experiments gains} and \ref{app: experiments: data heterogeneity}, we adapt the experiment setup to allow for unbalanced local data partition, 
covariate heterogeneity, and model heterogeneity. 
For both \FedAvg and \FedProx, we empirically observe that 
the federation gains are large when a client has a small local data size and the data heterogeneity is moderate, matching our theoretical predictions. In Section \ref{app: experiments: general regression}, we fit nonlinear models
and confirm that 
both FedAvg and FedProx can continue to attain nearly optimal estimation rates.

\subsection{Related work}
\label{subsec: related work}
\paragraph*{On convergence of \FedAvg and \FedProx}
\FedAvg has emerged as the algorithm of choice for FL \cite{kairouz2019advances,karimireddy2020scaffold}. 
Both empirical successes and failures of convergence have been reported \cite{mcmahan2017communication, li2018federated, karimireddy2020scaffold}, 
however, the theoretic characterization of its convergence (for general $s$) turns out to be notoriously difficult.   
In the absence of data heterogeneity, convergence is shown in 
\cite{zinkevich2010parallelized,stich2018local} under the name local SGD.  In particular, \cite{zinkevich2010parallelized} proves the asymptotic convergence. 
Convergence in the non-asymptotic regime is derived in \cite{stich2018local} under assumptions of strong convexity and bounded gradients.  
The proof techniques of \cite{zinkevich2010parallelized,stich2018local} are adapted to data heterogeneity setting 
by postulating the variances of gradients are bounded or the dissimilarities of gradients/Hessian are 
uniformly bounded~\cite{karimireddy2020scaffold}.
Even stronger assumptions are adopted for the convergence proof of \FedProx \cite{li2018federated}. 
Most of these results are derived in the context of optimization and focus on the training errors only.  Other work assumes fresh data in each update for technical convenience \cite{kairouz2019advances}. Both the randomness in the design matrix, which is harder to handle, and the impacts of the covariate dimension are mostly neglected. In particular, when the randomness in the design matrix is taken into account, ensuring uniformly bounded dissimilarity requires the local data size to be much larger than the model dimension --  excluding their applicability to locally data scarce applications such as Internet of Things and mobile healthcare.

\paragraph*{Personalization} In the context of Model Agnostic Meta Learning (MAML), personalized Federated Learning is investigated both experimentally \cite{chen2018federated,jiang2019improving} and theoretically 
\cite{fallah2020personalized,lin2020collaborative}. MAML-type personalized FL 
finds a {\em shared initial  model} that a participating device can quickly get personalized by running a few updates on its local data. 
Adaptive Personalized Federated Learning (APFL) is proposed in \cite{deng2020adaptive} under which each end device trains its local model while contributing to the global model. A personalized model is then learned as a mixture of optimal local and global models. Other personalization techniques include model division, contextualization, and multi-task learning. Due to space limitation, readers are referred to \cite{kairouz2019advances} for details. 
In this paper, we 
show that without introducing additional personalization techniques, an end device can still benefit from joining FL under certain mild conditions.

\section{Problem Formulation}
\label{sec: problem formulation}
\paragraph{System model.}
A federated learning system consists of a parameter server (PS) and $M$ clients. 
Each client $i\in \{1, \cdots, M\} \triangleq [M]$ locally keeps its personal data $\calS_i = \sth{\pth{x_{ij}, y_{ij}}}_{j=1}^{n_i}$,  where $n_i = \abth{\calS_i}$ is referred to as local data volume of client $i$. 
Let $N\triangleq \sum_{i=1}^M n_i$. 
It is possible that $n_i\not=n_j$ for some $i\not=j$, i.e., the local data volume at different clients could be highly unbalanced. The magnitude of $n_i$ varies with different real-world applications: when $\calS_i$ are records of recently browsed websites, $n_i$ is typically moderate;  
when $\calS_i$ are records of recent places visited by walk in pandemic, the volume of $\calS_i$ is low. Observing this, in this work, we consider a wide range of $n_i$ which covers both the small and moderate $n_i$ regions as special cases.  

\paragraph{Data heterogeneity.}
We consider both covariate heterogeneity (a.k.a.\ covariate shift) and response heterogeneity (a.k.a.\ concept shift) \cite{kairouz2019advances}.  
Formally, at each client $i$, 
$$
y_{ij} = f_i^*\left( x_{ij} \right) + \xi_{ij}, \quad 1 \le j \le n_i,
$$
where $f_i^*$ 
is the underlying mechanism governing the true responses, $x_{ij}\in \calX$ is the covariate, and $\xi_{ij}$ is the observation noise. 
We impose the mild assumptions that 
$\xi_{i1}, \cdots, \xi_{in_i}$ are independent yet possibly non-identically distributed, zero-mean,
and have variance up to $\sigma^2.$ 

\paragraph{Non-parametric regression.}
We base our analysis on the standard non-parametric
regression setup and assume that $f_i^*$ belongs to a 
reproducing kernel Hilbert spaces (RKHS)  $\calH$ with a defining positive semidefinite kernel function
$k: \calX \times \calX \to \reals$.
For completeness, we present the relevant fundamentals of RKHS 
(see~\cite[Chapter 12]{wainwright2019high} for an
in-depth exposition). 
Let $\iprod{\cdot}{\cdot}_{\calH}$ denote the inner product of the RKHS $\calH$. 
At any $x\in \calX$, $k(\cdot, x)$ acts as the representer of evaluation at $x$, i.e., 
\begin{align}
\iprod{f}{k(\cdot,x)}_{\calH} = f(x), \qquad \forall f\in \calH. \label{eq:kernel_eval}
\end{align}
Let $\|g\|_{\calH} = \sqrt{\iprod{g}{g}_{\calH}}$ denote the norm of function $g$ in 
$\calH$. 
For a given distribution $\mathbb{P}$ on $\calX$,   
let $\|g\|_2 = \pth{\int_{\calX}g(x)^2d\mathbb{P}(x)}^{1/2}$ denote the norm in $L^2(\mathbb{P})$. 
In this paper, we take the following minimal assumptions that are common in literature \cite{wainwright2019high}.   
We assume that $\calX$ is compact, $k$ is continuous, $\sup_{x \in \calX} k(x,x) <\infty$, 
and that $\int_{\calX\times \calX} k^2(x, z)d\mathbb{P}(x)d\mathbb{P}(z) < \infty$.  
Mercer's theorem shows that
such kernel $k$ admits an expansion 
\begin{equation}
\label{eq:k-expand}
k(x, z) = \sum_{\ell=1}^{\infty} \mu_{\ell}\varphi_{\ell}(x)\varphi_{\ell}(z),
\end{equation}
where $\{\varphi_\ell\}$ forms an orthonormal basis   of $L^2(\mathbb{P})$, and $\{\mu_\ell\}$ are the non-negative eigenvalues. 
Notably, $\iprod{\varphi_{\ell}}{\varphi_{\ell^{\prime}}}_{\calH} =0$ for $\ell\not=\ell^{\prime}$ and $\iprod{\varphi_{\ell}}{\varphi_{\ell}}_{\calH} =\frac{1}{\mu_{\ell}}$ for all $\ell$ such that $\mu_{\ell}\not=0$ \cite[Corollary 12.26]{wainwright2019high}.
Define the feature mapping $\phi: \calX \to \ell^2(\naturals)$ as 
$
\phi(x)=\left[\sqrt{\mu_1} \varphi_1(x), \sqrt{\mu_2} \varphi_2(x), \ldots \right],
$
where $\ell^2(\naturals)$ denotes the space of square-summable sequences.
Then for any $f \in \calH$ with $f(x)=\sum_{\ell=1}^{\infty} \beta_{\ell}\varphi_{\ell}(x)$ such that $\sum_{\ell=1}^{\infty}\frac{\beta_{\ell}^2}{\mu_{\ell}}<\infty$\footnote{With a little abuse of notation,  $\sum_{\ell=1}^{\infty}\frac{\beta_{\ell}^2}{\mu_{\ell}}$ sums over all $\ell$ such that $\mu_{\ell}>0$.},
we have $f(x) = \sum_{\ell=1}^{\infty}\theta_{\ell}\phi_{\ell}(x)$, where  $\theta_{\ell} = \frac{\beta_{\ell}}{\sqrt{\mu_{\ell}}}$. Hence,  
\begin{align}
\|f\|_2^2= \sum_{\ell=1}^{\infty} \beta_{\ell}^2, ~~~ \text{and} ~~~ \|f\|^2_{\calH} = \sum_{\ell=1}^{\infty} \theta_{\ell}^2.  \label{eq:norm_def}
\end{align}

The above non-parametric setting can be used to approximate more sophisticated settings. 
In particular, it is applicable to random kernels by using the corresponding eigenvalues and
thus covers the neural tagent kernels (NTKs) to approximate the NNs in certain regimes. For instance, the NTK for two-layer NNs is
$k(x,y)=\expects{\sigma'(w^{\top}x)\sigma'(w^{\top}y)}{w\sim \calN(0,I)}$,
where $\sigma$ is the activation function. 


\paragraph{Additional notation} 
Let $\bx_i=(x_{i1},\dots,x_{in_i})\in\calX^{n_i}$ denote the covariate of the local data at client $i$; all data covariate $\bx = (\bx_1,\dots,\bx_M)\in\calX^N$. Similarly, let $y_i\in \reals^{n_i}$ and $y\in \reals^N$ be the vectors that stack the responses of the local data at client $i$ and the total data, respectively.  
For $a,b\in\reals^d$, let $a\cdot b\triangleq \sum_{i=1}^d a_ib_i$. 
Given a multivariate function $f:\calX\mapsto \reals^d$, we use $f_j$ to denote the $j$-th components of $f$;
for $a\in\reals^d$, define $a\cdot f:\calX\mapsto \reals$ as $(a\cdot f) (x)=a\cdot f(x)$; 
for $A\in\reals^{n\times d}$, define $A f:\calX\mapsto \reals^n$ as $(A f)_i(x)\triangleq \sum_{j=1}^d A_{ij}f_j(x)$ for $i\in[n]$.
For $x\in\calX$, let $k_x\triangleq k(\cdot,x):\calX\mapsto \reals$; for $\bx=(x_1,\dots,x_n)\in\calX^n$, let $k_\bx\triangleq (k_{x_1},\dots,k_{x_n}):\calX\mapsto \reals^n$, and $K_\bx$ be the normalized Gram matrix of size $n\times n$ with $(K_\bx)_{ij}=\frac{1}{n}k(x_i,x_j)$.
Given a mapping $f:\calX\mapsto\calY$ and $\bx=(x_1,\dots,x_n)\in\calX^n$, let $f(\bx)\triangleq (f(x_1),\dots,f(x_n))$; in particular, when $\calY=\reals^d$, let $f(\bx)$ be a matrix of size $n\times d$ that stacks $f(x_i)$ in rows.
For an operator $\calL:\calH\mapsto \calH$ and $f=(f_1,\dots,f_n)\in\calH^n$, let $\calL f \triangleq (\calL f_1,\dots,\calL f_n)$.
Let $\norm{v}$ and $\norm{V}$ denote the $\ell^2$ norm of a vector $v$
and the spectral norm of matrix $V$, respectively. 
The operator norm is denoted by $\opnorm{\cdot}$.
For a positive definite matrix $A$, let $A^{1/2}$ denote the unique square root of $A$. 
Throughout this paper, we use $c, c_1, ...$ to denote absolute constants. For ease of exposition, the specific values of these absolute constants might vary across different concrete contexts in this paper. 
\section{\FedAvg and \FedProx}
\label{sec: algorithms}
%
\FedAvg can be viewed as a communication-light implementation of the standard 
SGD. 
Different from the standard SGD, wherein the updates at different clients are aggregated right after {\em every} local step, 
in \FedAvg the local updates are only aggregated after {\em every $s$-th} local step, where $s\ge 1$ is an algorithm parameter.  
\FedProx is a distributed proximal algorithm wherein a round-varying proximal term is introduced to control the deviation of the local updates 
from the most recent global model. 


Recall from Section \ref{sec: intro} that $\ell_i(f) = \frac{1}{2n_i} \sum_{j=1}^{n_i} \pth{f(x_{ij}) - y_{ij}}^2$ is the local empirical risk function for each $f\in \calH$.  
Let $f_t$ denote the global model at the end of the $t$-th communication round, and let $f_0$ denote the initial global model.  
At the beginning of each round $t\ge 1$, the PS 
broadcasts $f_{t-1}$ to each of the $M$ clients. 
At the end of round $t$, upon receiving the local updates $f_{i,t}$ from each client $i$, the PS 
updates the global model as 
\begin{align}
f_t= \sum_{i=1}^M w_i f_{i,t}\, , \label{eq:avg_step}
\end{align} 
where $w_i = \frac{n_i}{N}$ -- recalling that $N$ is the number of all the data tuples in the FL system.  
The local updates $f_{i,t}$ under \FedAvg and \FedProx are obtained as follows. 


\paragraph{\FedAvg}
From $f_{t-1}$ each client $i$ runs $s$ local gradient descent steps on $\ell_i(f)$, and reports its updated model to the PS.  
Concretely, we denote the mapping of one-step local gradient descent by
$\calG_i(f) = f-\eta \nabla \ell_i(f)$,
where $\eta>0$ is the stepsize. 
%
After $s$ local steps, 
the locally updated model at client $i$ is given by 
$$
f_{i,t} = \calG_i^s (f_{t-1}).
$$

\paragraph{\FedProx} 
%
From $f_{t-1}$, each client $i$ locally updates the model as 
\begin{align}
\label{eq:Fedprox local obj}
    f_{i,t} ~ = ~ \arg\min_{f\in \calH} ~ 
    \ell_i(f) 
    + \frac{1}{2\eta}\|f - f_{t-1}\|_{\calH}^2. 
\end{align}
Notably, $\eta>0$ controls the regularization and can be interpreted as a step size in the \FedProx: As $\eta$ increases, the penalty for moving away from $f_{t-1}$ decreases and hence the local update $f_{i,t}$ will be farther way from $f_{t-1}.$ 
In practice, the local optimization problem in \eqref{eq:Fedprox local obj} might not be 
solved exactly in each round. We would like to study the impacts of inexactness of solving \eqref{eq:Fedprox local obj} in future work.

\section{Recursive Dynamics of \FedAvg and \FedProx}
\label{sec: error iterate}
In this section, we derive expressions for the 
recursive dynamics of $f_t$ in \eqref{eq:avg_step}
under \FedAvg and \FedProx, respectively. 
All missing proofs of this section can be found in Appendix \ref{app: error iterate}. 
We first introduce two local linear operators. 
Within iteration $t$ of \FedAvg, the one-step local gradient descent on client $i$ is given by an affine mapping 
\begin{equation}
\label{eq:Pi-local}
\calG_i(f_{t-1})=f_{t-1}-\frac{\eta}{n_i}\sum_{j=1}^{n_i}(f_{t-1}(x_{ij})-y_{ij})k_{x_{ij}}
= \calL_i f_{t-1}+\frac{\eta}{n_i}\sum_{j=1}^{n_i} y_{ij}k_{x_{ij}},
\end{equation}
where $\calL_i$ denotes the local operator
\[
\calL_i f ~\triangleq ~ 
f-\frac{\eta}{n_i}\sum_{j=1}^{n_i}f(x_{ij})k_{x_{ij}}.
\]
For \FedProx,  the global model dynamics involves the inverse of local operator $\tilde\calL_i$ where
\[
\tilde\calL_i f 
\triangleq f+\frac{\eta}{n_i}\sum_{j=1}^{n_i}f(x_{ij})k_{x_{ij}}.
\]

Recall that $w_i = \frac{n_i}{N}$. The following proposition characterizes the dynamics of $f_t$. 
\begin{proposition}
\label{prop:theta_recursion}
The global model $f_t$ satisfies the following recursion:
\begin{equation}
\label{eq:iteration-f}
f_{t}=\calL f_{t-1} +  y\cdot \Psi \, ,
\end{equation}
where $\Psi=(w_1\Psi_1,\dots,w_M\Psi_{M}):\calX\mapsto \reals^N$ 
and
\begin{align}
    \calL=
    \begin{cases}
    \sum_{i=1}^M w_i \calL_i^s  & \\
    \sum_{i=1}^M w_i \tilde\calL_i^{-1} &
    \end{cases} \label{eq:def_A}
    \quad
    \text{ and }
    \quad
    \Psi_{i}=
    \begin{cases}
    \frac{\eta}{n_i}\sum_{\tau=0}^{s-1} \calL_i^\tau k_{\bx_{i}}  & \text{ for \FedAvg} , \\
    \frac{\eta}{n_i} \tilde\calL_i^{-1}k_{\bx_{i}} &\text{ for \FedProx} .
    \end{cases}
\end{align}
\end{proposition}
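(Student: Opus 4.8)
The plan is to treat \FedAvg and \FedProx separately, but in both cases the strategy is identical: first show that the single-client map $f_{t-1}\mapsto f_{i,t}$ is \emph{affine} in $f_{t-1}$, with a linear part equal to $\calL_i^s$ (resp.\ $\tilde\calL_i^{-1}$) and an inhomogeneous part that is linear in the local responses $y_i$; then take the weighted average \eqref{eq:avg_step} over clients to read off $\calL$ and $\Psi$. The only RKHS-specific input needed throughout is the reproducing property \eqref{eq:kernel_eval} and the linearity of the operators $\calL_i$, $\tilde\calL_i^{-1}$.

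For \FedAvg I would first compute the RKHS gradient of $\ell_i$. Since $f(x_{ij})=\iprod{f}{k_{x_{ij}}}_{\calH}$, the Fréchet derivative of $f\mapsto f(x_{ij})$ is $k_{x_{ij}}$, so $\nabla\ell_i(f)=\frac{1}{n_i}\sum_{j=1}^{n_i}(f(x_{ij})-y_{ij})k_{x_{ij}}$. This yields $\calG_i(f)=\calL_i f+\frac{\eta}{n_i}\,y_i\cdot k_{\bx_i}$, i.e.\ exactly \eqref{eq:Pi-local}: an affine map with linear part $\calL_i$ and constant $b_i=\frac{\eta}{n_i}\,y_i\cdot k_{\bx_i}$. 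A straightforward induction on the iteration count gives the affine-iteration identity $\calG_i^s(f)=\calL_i^s f+\sum_{\tau=0}^{s-1}\calL_i^\tau b_i$. Pulling the linear operator $\calL_i^\tau$ through the finite combination $y_i\cdot k_{\bx_i}=\sum_j y_{ij}k_{x_{ij}}$ gives $\calL_i^\tau b_i=\frac{\eta}{n_i}\,y_i\cdot(\calL_i^\tau k_{\bx_i})$, so $f_{i,t}=\calL_i^s f_{t-1}+y_i\cdot\big(\frac{\eta}{n_i}\sum_{\tau=0}^{s-1}\calL_i^\tau k_{\bx_i}\big)=\calL_i^s f_{t-1}+y_i\cdot\Psi_i$, matching the \FedAvg definition of $\Psi_i$.

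For \FedProx the local objective in \eqref{eq:Fedprox local obj} is $\frac{1}{\eta}$-strongly convex, hence has a unique minimizer characterized by the stationarity condition $\nabla\ell_i(f_{i,t})+\frac{1}{\eta}(f_{i,t}-f_{t-1})=0$. Using the same gradient formula and multiplying by $\eta$, this rearranges to $f_{i,t}+\frac{\eta}{n_i}\sum_j f_{i,t}(x_{ij})k_{x_{ij}}=f_{t-1}+\frac{\eta}{n_i}\,y_i\cdot k_{\bx_i}$, i.e.\ $\tilde\calL_i f_{i,t}=f_{t-1}+\frac{\eta}{n_i}\,y_i\cdot k_{\bx_i}$. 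The operator $\tilde\calL_i=\mathrm{I}+\frac{\eta}{n_i}\sum_j \iprod{\cdot}{k_{x_{ij}}}_{\calH}k_{x_{ij}}$ is the identity plus a positive-semidefinite finite-rank operator (indeed $\iprod{f}{\sum_j f(x_{ij})k_{x_{ij}}}_{\calH}=\sum_j f(x_{ij})^2\ge 0$), so $\tilde\calL_i\succeq\mathrm{I}$ and is invertible. Applying $\tilde\calL_i^{-1}$ and again pulling it through $y_i\cdot k_{\bx_i}$ gives $f_{i,t}=\tilde\calL_i^{-1}f_{t-1}+y_i\cdot\big(\frac{\eta}{n_i}\tilde\calL_i^{-1}k_{\bx_i}\big)=\tilde\calL_i^{-1}f_{t-1}+y_i\cdot\Psi_i$, matching the \FedProx definition.

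In both cases the aggregation \eqref{eq:avg_step} completes the argument: $f_t=\sum_i w_i f_{i,t}=\big(\sum_i w_i\calL_i^s\big)f_{t-1}+\sum_i w_i\,(y_i\cdot\Psi_i)$ for \FedAvg (and with $\sum_i w_i\tilde\calL_i^{-1}$ for \FedProx), which identifies $\calL$; and since $y=(y_1,\dots,y_M)$ and $\Psi=(w_1\Psi_1,\dots,w_M\Psi_M)$, the inhomogeneous term is precisely $\sum_i (y_i\cdot w_i\Psi_i)=y\cdot\Psi$. I expect the genuinely substantive points—rather than routine bookkeeping—to be the RKHS gradient computation via the reproducing property, the affine-iteration identity for \FedAvg, and, most importantly for \FedProx, the well-posedness of the local subproblem: existence and uniqueness of the minimizer and invertibility of $\tilde\calL_i$, both of which follow from strong convexity and the positive-semidefiniteness of $\sum_j \iprod{\cdot}{k_{x_{ij}}}_{\calH}k_{x_{ij}}$. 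Everything else reduces to linearity of $\calL_i$ and $\tilde\calL_i^{-1}$ passing through finite sums.
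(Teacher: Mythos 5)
Your proposal is correct and follows essentially the same route as the paper's proof: derive the affine form $\calG_i(f)=\calL_i f+\frac{\eta}{n_i}\,y_i\cdot k_{\bx_i}$ from the RKHS gradient, iterate it $s$ times for \FedAvg, use the first-order stationarity condition $\tilde\calL_i f_{i,t}=f_{t-1}+\frac{\eta}{n_i}\,y_i\cdot k_{\bx_i}$ for \FedProx, and aggregate via \eqref{eq:avg_step}. Your explicit justification of the invertibility of $\tilde\calL_i$ (identity plus a positive-semidefinite finite-rank operator) is a welcome extra that the paper's proof of this proposition takes for granted, deferring the positivity argument to the proof of \prettyref{lmm:eigen-KP}.
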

\FedAvg with $s=1$ coincides with the standard distributed gradient descent, which naturally fuses a global model as $\sum_i w_i\calG_i$ effectively aggregates all local data. 
However, for $s>1$, analyzing the dynamics in Proposition \ref{prop:theta_recursion} directly is challenging as the local updates involve high-order operators $\calG_i^s$. 
This makes the global model fusion more difficult because $\sum_i w_i \calG_i^s$ aggregates local progress in a nontrivial manner and further drives $f_t$ away from the stationary points of the global objective function $\ell(f).$
Similar challenges also appear in \FedProx due to the inverse of $\tilde\calL_i$.



Fortunately, from Proposition \ref{prop:theta_recursion} we can derive compact expressions of 
the evolution of the in-sample prediction values under \FedAvg and \FedProx, 
which serve as the foundation for the convergence analysis in Section \ref{sec: main results}.
Under \FedAvg with $s=1$, 
it is well-known in the literature of \emph{kernel methods} 
\cite[Chapter 12]{Hastie2009}
(and also follows from \eqref{eq:Pi-local}) that
\begin{equation}
\label{eq:kernel-GD}
f_t(\bx) = (I-\eta K_\bx)f_{t-1}(\bx) + \eta K_\bx y,
\end{equation}
For $s>1$, there is no immediate extension of \eqref{eq:kernel-GD} to $s>1$ and to \FedProx. 
The key step in our derivation is 
a set of identities for $\calL f_{t-1} (\bx)$ and $\Psi(\bx)$, which are stated in the next lemma. Those identities are also used in our convergence proofs, and  could be of independent interest to a broader audience.

\begin{lemma}\label{lmm:block_magic}
For any $f\in\calH$, the following identities are true:
\begin{align*}
& \Psi = \frac{\eta}{N} P k_\bx,
\qquad \qquad ~~
\Psi(\bx)=\eta K_\bx P,
\\
& f(\bx)\cdot \Psi =f-\calL f,
\qquad 
\calL f(\bx)=(I-\eta K_\bx P)f(\bx),
\end{align*}
where 
$P \in \reals^{N\times N}$ is a block diagonal matrix whose $i$-th diagonal block of size $n_i\times n_i$ is 
\begin{align}
\label{eq: def: Pii}
    P_{ii} =
    \begin{cases}
    \sum_{\tau=0}^{s-1} [ I - \eta K_{\bx_i}]^\tau & \text{ for \FedAvg},\\
    [ I+ \eta K_{\bx_i}]^{-1} & \text{ for \FedProx}.
    \end{cases}
\end{align}
\end{lemma}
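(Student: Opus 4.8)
The engine behind all four identities is a single observation: since $\calL_i$ and $\tilde\calL_i$ interact with a function only through its values on the local sample $\bx_i$, they act on the finite-dimensional subspace $\mathrm{span}\{k_{x_{i1}},\dots,k_{x_{in_i}}\}$ exactly as the matrices $I-\eta K_{\bx_i}$ and $I+\eta K_{\bx_i}$ act on coefficient vectors. Concretely, the reproducing property \eqref{eq:kernel_eval} gives $k_{x_{il}}(x_{im})=k(x_{im},x_{il})=n_i(K_{\bx_i})_{lm}$, whence, writing $k_{\bx_i}=(k_{x_{i1}},\dots,k_{x_{in_i}})$ as a tuple of functions acted on by matrices,
\[
\calL_i k_{\bx_i}=(I-\eta K_{\bx_i})k_{\bx_i},\qquad \tilde\calL_i k_{\bx_i}=(I+\eta K_{\bx_i})k_{\bx_i}.
\]
Once this base identity is in place, everything else is careful bookkeeping.

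First I would iterate the FedAvg identity to obtain $\calL_i^\tau k_{\bx_i}=(I-\eta K_{\bx_i})^\tau k_{\bx_i}$ for all $\tau\ge 0$, and, since $\tilde\calL_i=I+(\text{p.s.d.})$ is invertible, verify $\tilde\calL_i^{-1}k_{\bx_i}=(I+\eta K_{\bx_i})^{-1}k_{\bx_i}$ by applying $\tilde\calL_i$ to the right-hand side. Summing the FedAvg version over $\tau=0,\dots,s-1$ and recalling the definition of $P_{ii}$ in \eqref{eq: def: Pii} collapses both cases into $\sum_{\tau=0}^{s-1}\calL_i^\tau k_{\bx_i}=P_{ii}k_{\bx_i}$ and $\tilde\calL_i^{-1}k_{\bx_i}=P_{ii}k_{\bx_i}$. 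Substituting into the definition of $\Psi_i$ from Proposition~\ref{prop:theta_recursion} gives $\Psi_i=\frac{\eta}{n_i}P_{ii}k_{\bx_i}$; since $w_i=n_i/N$ and $P$ is block diagonal, stacking the blocks yields the first identity $\Psi=\frac{\eta}{N}Pk_\bx$. The second identity follows by evaluating this at $\bx$: the tuple $k_\bx$ evaluated at the $N$ sample points is the unnormalized Gram matrix $N K_\bx$, so that $\Psi(\bx)=\frac{\eta}{N}P(N K_\bx)=\eta K_\bx P$ after using the symmetry of both $K_\bx$ and the block-diagonal $P$.

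For the third identity I would compute the one-round displacement $f-\calL_i^s f$ (FedAvg) and $f-\tilde\calL_i^{-1}f$ (FedProx) directly. Unrolling $\calL_i$ through the base identity gives $\calL_i^s f=f-\frac{\eta}{n_i}\sum_m [P_{ii}f(\bx_i)]_m k_{x_{im}}$, i.e.\ $f-\calL_i^s f=\frac{\eta}{n_i}f(\bx_i)^\top P_{ii}k_{\bx_i}$ (using that $P_{ii}$ is a symmetric function of $K_{\bx_i}$); the FedProx computation produces the identical right-hand side because $(\tilde\calL_i^{-1}f)(\bx_i)=P_{ii}f(\bx_i)$. Weighting by $w_i$, summing over $i$, and invoking the block-diagonality of $P$ collapses the sum into $\frac{\eta}{N}f(\bx)^\top Pk_\bx$, which equals $f(\bx)\cdot\Psi$ by the first identity; this proves $f(\bx)\cdot\Psi=f-\calL f$. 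The fourth identity is then immediate: evaluating the third at $\bx$ and noting that $(f(\bx)\cdot\Psi)(\bx)=\Psi(\bx)f(\bx)$, I substitute $\Psi(\bx)=\eta K_\bx P$ from the second identity to obtain $\calL f(\bx)=(I-\eta K_\bx P)f(\bx)$.

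The only genuinely delicate point — and the step I expect to be the main obstacle — is justifying this ``operator equals matrix on coefficients'' reduction for the high-order map $\calL_i^s$ and for the inverse $\tilde\calL_i^{-1}$, and carrying out the unrolling while keeping the two Gram-matrix normalizations (local $1/n_i$ versus global $1/N$) and the weights $w_i=n_i/N$ straight. Once the base identity and the clean closed form $\Psi_i=\frac{\eta}{n_i}P_{ii}k_{\bx_i}$ are established, identities 1--4 reduce to index-tracking and the block-diagonal structure of $P$.
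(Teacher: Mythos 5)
Your proposal is correct and follows essentially the same route as the paper's proof: the base identity $\calL_i k_{\bx_i}=(I-\eta K_{\bx_i})k_{\bx_i}$ (with $\tilde\calL_i k_{\bx_i}=(I+\eta K_{\bx_i})k_{\bx_i}$ for \FedProx), induction and inversion to obtain $\Psi_i=\frac{\eta}{n_i}P_{ii}k_{\bx_i}$, and then stacking together with the symmetry of $P$ and $K_\bx$ for the evaluated identities. The only cosmetic difference is in proving $f(\bx)\cdot\Psi=f-\calL f$: the paper uses the telescoping sum $f-\calL_i^s f=\sum_{\tau=0}^{s-1}\calL_i^\tau(f-\calL_i f)=f(\bx_i)\cdot\Psi_i$, which by linearity lands on $\Psi_i$ directly without the coefficient recursion and the appeal to symmetry of $P_{ii}$ that your direct unrolling of $\calL_i^s f$ requires, but both computations are valid.
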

\begin{proposition}
\label{prop: prediction error iterate}
The prediction value satisfies the following recursion:
\begin{align}
\label{eq:pred_error_recursion_1}
f_{t}(\bx)= \left[I - \eta K_\bx P\right]f_{t-1}(\bx) + \eta K_\bx P y.
\end{align}
\end{proposition}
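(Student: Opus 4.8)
The plan is to obtain the prediction recursion by evaluating the abstract functional recursion of Proposition \ref{prop:theta_recursion} at the design points $\bx$ and invoking the two evaluation identities supplied by Lemma \ref{lmm:block_magic}. First I would start from $f_t = \calL f_{t-1} + y\cdot\Psi$ and apply the evaluation map $g \mapsto g(\bx)\in\reals^N$ to both sides. Since stacking the values $g(x_{ij})$ is linear in $g$, this map distributes over the sum, giving $f_t(\bx) = (\calL f_{t-1})(\bx) + (y\cdot\Psi)(\bx)$. It then remains to rewrite each of the two summands.

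For the first summand I would invoke the identity $\calL f(\bx) = (I - \eta K_\bx P)f(\bx)$ from Lemma \ref{lmm:block_magic}, instantiated at $f = f_{t-1}$, which immediately yields $(\calL f_{t-1})(\bx) = (I - \eta K_\bx P)f_{t-1}(\bx)$. For the second summand the only real bookkeeping is to reconcile the scalar-function notation $y\cdot\Psi$ with the matrix $\Psi(\bx)$. Unpacking the definitions, $y\cdot\Psi:\calX\to\reals$ sends $x$ to $\sum_k y_k\Psi_k(x)$, so its value at the $j$-th design point is $\sum_k \Psi_k(x_j)y_k$, which is exactly the $j$-th coordinate of $\Psi(\bx)y$, where $\Psi(\bx)$ is the $N\times N$ matrix whose rows are $\Psi(x_1),\dots,\Psi(x_N)$. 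Hence $(y\cdot\Psi)(\bx) = \Psi(\bx)y$, and the identity $\Psi(\bx) = \eta K_\bx P$ turns this into $\eta K_\bx P y$. Combining the two pieces gives $f_t(\bx) = (I - \eta K_\bx P)f_{t-1}(\bx) + \eta K_\bx P y$, as claimed.

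I do not expect a genuine obstacle at the level of this proposition: essentially all of the analytic work has been absorbed into Lemma \ref{lmm:block_magic}, so the proposition is a one-line substitution once those identities are in hand. The actual difficulty lives in the lemma itself, whose derivation — expanding $\calL$ as $\sum_i w_i\calL_i^s$ (\FedAvg) or $\sum_i w_i\tilde\calL_i^{-1}$ (\FedProx), expressing each $\calL_i$ through the representers $k_{\bx_i}$ via the reproducing property \eqref{eq:kernel_eval}, and collapsing the resulting telescoping (for \FedAvg) or Neumann-type (for \FedProx) sums into the block-diagonal matrix $P$ of \eqref{eq: def: Pii} — is where the care is required.

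The one point I would double-check while writing this up is the consistency of the vector/matrix conventions: specifically that the ``$\cdot$'' pairing in $y\cdot\Psi$ contracts the $\reals^N$ index of $\Psi$ and not the evaluation index, so that $(y\cdot\Psi)(\bx)=\Psi(\bx)y$ rather than $\Psi(\bx)^\top y$. A transpose slip there is the only way the final formula could come out incorrectly, and since $K_\bx$ and the blocks of $P$ are symmetric it would be easy to overlook, so I would verify it coordinatewise as above.
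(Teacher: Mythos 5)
Your proposal is correct and follows essentially the same route as the paper: the paper likewise evaluates the recursion of Proposition~\ref{prop:theta_recursion} coordinatewise at the design points, writing $f_t(x_{ij}) = \calL f_{t-1}(x_{ij}) + \Psi(x_{ij})\cdot y$, stacks to get $f_t(\bx) = \calL f_{t-1}(\bx) + \Psi(\bx)y$, and then substitutes the identities $\calL f(\bx) = (I-\eta K_\bx P)f(\bx)$ and $\Psi(\bx) = \eta K_\bx P$ from Lemma~\ref{lmm:block_magic}. Your coordinatewise check that $(y\cdot\Psi)(\bx) = \Psi(\bx)y$ (rather than $\Psi(\bx)^\top y$) is a worthwhile verification that the paper leaves implicit, but it does not change the argument.
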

\begin{proof}
By \prettyref{prop:theta_recursion},
\[
f_{t}(x_{ij})=\calL f_{t-1}(x_{ij}) + \Psi(x_{ij})\cdot y.
\]
Consequently,
applying \prettyref{lmm:block_magic} yields that
\[
f_{t}(\bx)=\calL f_{t-1}(\bx) + \Psi(\bx)y
=\left[I - \eta K_\bx P\right]f_{t-1}(\bx) + \eta K_\bx P y.\qedhere
\]
\end{proof}

The dynamics of the model $f_t$ in \eqref{eq:iteration-f} and the corresponding in-sample prediction values $f_t(\bx)$ in \eqref{eq:pred_error_recursion_1} are both governed by linear time invariant (LTI) systems with $y\in \reals^N$ as the constant system input.
Those autoregressions converge if all eigenvalues of $\calL$ and $I-\eta K_\bx P$ are less than one in absolute value, and locations of the eigenvalues such as the distance to the unit circle
have important implications for the model evolution \cite{BD2009,BD2016}.
Although it is challenging to characterize the eigenvalues of $\calL$ due to the insufficiency of local data, system heterogeneity, and the involved aggregation of high-order or inverse operators, the eigenvalues of $I-\eta K_\bx P$ in the evolution of prediction values are more tractable.

Compared with 
the classical kernel gradient descent, here the crucial difference is the effect of matrix $P$, which arises from multiple local updates of \FedAvg and the proximal term in the local update of \FedProx. 
In particular, it is essential to characterize the spectrum of $K_{\bx}P$. 
When $P$ is positive definite, 
analagous to the normalized graph Laplacians (see e.g.~\cite[Section 3.2]{von2007tutorial}),
the eigenvalues of $K_\bx P $ coincide
with those of the symmetric matrix $ P^{1/2} K_\bx P^{1/2}$, and hence must be real and non-negative. 
It follows that the eigenvalues of $I-\eta K_\bx P$
are no more than $1$. 
Define 
\[
\gamma\triangleq \eta \max_{i\in[M]}\norm{K_{\bx_i}}.
\]
By the block diagonal structure of $P$, 
$\gamma<1$ guarantees that $P\succ 0$, and furthermore both $\calL$ and $I-\eta K_\bx P$ have non-negative eigenvalues only.

\begin{lemma}
\label{lmm:eigen-KP}
If $\gamma<1$, 
then 
all eigenvalues of $\calL$ and $I-\eta K_\bx P$ are within $[0,1]$.
\end{lemma}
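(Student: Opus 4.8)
The plan is to prove the two spectral claims by different routes: the bound on $\calL$ follows from the functional calculus of the local covariance operators, while the bound on $I-\eta K_\bx P$ is obtained by reducing the matrix inequality $\eta K_\bx P\preceq I$ to a clean per-client inequality and then lifting it across clients by convexity. The guiding observation is that $\gamma<1$ makes every relevant operator self-adjoint with controlled spectrum, so all ``eigenvalues'' are automatically real.

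First I would introduce the local sample covariance operator $\Sigma_i f\triangleq \frac{1}{n_i}\sum_{j=1}^{n_i} f(x_{ij})\,k_{x_{ij}}$ on $\calH$. Using the reproducing property $f(x_{ij})=\iprod{f}{k_{x_{ij}}}_{\calH}$, one checks that $\Sigma_i$ is self-adjoint and positive semidefinite, and that its nonzero eigenvalues coincide with those of $K_{\bx_i}$, so $\opnorm{\Sigma_i}=\norm{K_{\bx_i}}$. Then $\calL_i=I-\eta\Sigma_i$ and $\tilde\calL_i=I+\eta\Sigma_i$. Under $\gamma<1$ we have $\eta\opnorm{\Sigma_i}\le\gamma<1$, so $\calL_i$ is self-adjoint with spectrum in $[1-\gamma,1]\subseteq[0,1]$ and $\tilde\calL_i$ is self-adjoint with spectrum in $[1,1+\gamma]$. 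Consequently $\calL_i^s$ (for \FedAvg) and $\tilde\calL_i^{-1}$ (for \FedProx) are self-adjoint with spectrum in $[0,1]$, and since $\sum_i w_i=1$, the convex combination $\calL$ inherits $0\preceq\calL\preceq I$. This disposes of the first half.

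For $I-\eta K_\bx P$, I would first recall from the discussion preceding the lemma that $\gamma<1$ forces $P\succ0$, so the eigenvalues of $K_\bx P$ equal those of $P^{1/2}K_\bx P^{1/2}\succeq0$ and are real and non-negative; hence it suffices to show $\eta K_\bx P\preceq I$, i.e.\ $\eta K_\bx\preceq P^{-1}$. Since $P^{-1}$ is block diagonal with blocks $P_{ii}^{-1}$, the heart of the matter is the per-client bound $\eta K_{\bx_i}\preceq P_{ii}^{-1}$. For \FedProx this is immediate, as $P_{ii}^{-1}=I+\eta K_{\bx_i}\succeq\eta K_{\bx_i}$. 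For \FedAvg it follows by spectral calculus on $K_{\bx_i}$: writing $a\in[0,\gamma]$ for an eigenvalue of $\eta K_{\bx_i}$, we have $1-a\in(0,1]$, and the telescoping identity
\begin{align*}
a\sum_{\tau=0}^{s-1}(1-a)^\tau = 1-(1-a)^s \le 1
\end{align*}
shows $\eta K_{\bx_i}P_{ii}\preceq I$, equivalently $\eta K_{\bx_i}\preceq P_{ii}^{-1}$ (the two commute and are positive semidefinite).

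The remaining step, lifting the per-client bounds to the global inequality $\eta K_\bx\preceq P^{-1}$, is the one I expect to be the main obstacle, because $K_\bx$ carries the cross-client blocks and the naive bound $\opnorm{\eta K_\bx P}\le\eta\opnorm{K_\bx}\opnorm{P}$ degrades to $s\gamma$, which need not be below $1$. The resolution is a convexity argument aligned with the weights $w_i=n_i/N$: for $v=(v_1,\dots,v_M)$ with $v_i\in\reals^{n_i}$, set $g_i\triangleq\sum_{j=1}^{n_i}(v_i)_j\,k_{x_{ij}}\in\calH$, so that $\|g_i\|_{\calH}^2=n_i\,v_i^\top K_{\bx_i}v_i$ and $v^\top K_\bx v=\frac1N\bigl\|\sum_i g_i\bigr\|_{\calH}^2$. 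Applying Jensen's inequality to $\|\cdot\|_{\calH}^2$ with weights $w_i$ gives $\bigl\|\sum_i g_i\bigr\|_{\calH}^2\le\sum_i w_i^{-1}\|g_i\|_{\calH}^2$, whence
\begin{align*}
\eta\, v^\top K_\bx v \;\le\; \eta\sum_{i=1}^M v_i^\top K_{\bx_i} v_i \;\le\; \sum_{i=1}^M v_i^\top P_{ii}^{-1} v_i \;=\; v^\top P^{-1} v,
\end{align*}
using the per-client bound in the middle step. This establishes $\eta K_\bx\preceq P^{-1}$, hence $\eta K_\bx P\preceq I$, so all eigenvalues of $I-\eta K_\bx P$ lie in $[0,1]$, completing the proof.
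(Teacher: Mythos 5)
Your proof is correct, but it takes a genuinely different route from the paper's on the second claim. For the bound on $\calL$, your spectral calculus via the local covariance operator $\Sigma_i$ is essentially the paper's argument in different packaging: the paper bounds the quadratic forms $\hprod{f}{\calL_i f}$ directly (using \prettyref{lmm:K-matrix-op-norm} for the lower bound, and the substitution $g=\tilde\calL_i f$ for \FedProx), which amounts to the same statement that $\calL_i$ and $\tilde\calL_i^{-1}$ are positive contractions, so that the convex combination $\calL$ is too. The real divergence is in handling $I-\eta K_\bx P$. The paper never isolates the cross-client blocks: it reduces $\Norm{\eta P^{1/2}K_\bx P^{1/2}}$ via \prettyref{lmm:K-matrix-op-norm} to $\max_{\hnorm{f}\le 1}\frac{\eta}{N}f(\bx)^\top P f(\bx)$, then invokes the identity $f(\bx)\cdot\Psi=f-\calL f$ from \prettyref{lmm:block_magic} to rewrite this as $\hnorm{f}^2-\hprod{f}{\calL f}\le \hnorm{f}^2$, so the second claim is deduced from the positivity of $\calL$ established in the first. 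You instead decouple the two claims: the per-client telescoping bound $a\sum_{\tau=0}^{s-1}(1-a)^\tau=1-(1-a)^s\le 1$ gives $\eta K_{\bx_i}\preceq P_{ii}^{-1}$ (immediate for \FedProx), and your Jensen-type domination $v^\top K_\bx v\le \sum_i v_i^\top K_{\bx_i} v_i$ --- which works precisely because the aggregation weights $w_i=n_i/N$ match the Gram normalizations $1/N$ and $1/n_i$ --- lifts the bound across clients despite the off-diagonal blocks of $K_\bx$. Your route is more elementary and self-contained (no appeal to \prettyref{lmm:block_magic}), correctly anticipates that the naive bound $\eta\norm{K_\bx}\norm{P}\le \gamma s$ is useless, and isolates the weighted block-domination of the Gram matrix as a clean standalone fact; the paper's route buys economy and unity, reusing the block-magic identities needed anyway for the recursion analysis and tying the spectrum of $\eta K_\bx P$ conceptually to the operator $\calI-\calL$, which is the object that actually drives the dynamics.
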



Throughout this paper, we assume $\gamma<1$\footnote{For \FedProx,
our results continue to hold without any assumption on $\gamma.$ In particular, the matrix $P$ is always positive definite regardless of $\gamma$.
In a sense, \FedProx is more stable than \FedAvg. Yet, the conditioning of $P$ degrades with $\gamma$.}.
The local update and the global aggregation
are stable if $P$ is well-conditioned, e.g., $P=I$ for the gradient descent.
In general, we have the following upper bound on the condition number of $P$.
\begin{lemma}
\label{lmm:condition-P}
\begin{equation}
\label{eq:def_kappa}
\norm{P}\norm{P^{-1}}
\le 
\kappa  \triangleq 
\begin{cases}
 \frac{\gamma s}{1- (1-\gamma)^s} &  \text{for \FedAvg}, \\
 1+\gamma &  \text{for \FedProx}. 
\end{cases} 
\end{equation}
Moreover, we have
\begin{align}
\Lambda_{i} \in
\begin{cases}
\left[  \lambda_{i} s/\kappa , \lambda_i s \right] &  \text{for \FedAvg}, \\
\left[ \lambda_{i} /\kappa , \lambda_i \right] &  \text{for \FedProx}, 
 \end{cases} 
 \label{eq:eigen_comp}
\end{align}
where $\lambda_i \ge 0$ and $\Lambda_i \ge 0$
 are the $i$-th largest eigenvalue of $K_\bx$ and $K_\bx P$, respectively.
\end{lemma}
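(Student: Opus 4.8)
The plan is to exploit the block-diagonal structure of $P$ together with the fact that, under $\gamma<1$, $P$ is symmetric positive definite (this is exactly the property established in \prettyref{lmm:eigen-KP}). Since $P$ is block diagonal, its spectrum is the union of the spectra of its diagonal blocks $P_{ii}$, and each $P_{ii}$ is a fixed scalar function applied spectrally to the symmetric PSD matrix $\eta K_{\bx_i}$. Concretely, if $\nu$ ranges over the eigenvalues of $\eta K_{\bx_i}$, then the corresponding eigenvalue of $P_{ii}$ is $g(\nu)=\sum_{\tau=0}^{s-1}(1-\nu)^\tau=\frac{1-(1-\nu)^s}{\nu}$ for \FedAvg and $\tilde g(\nu)=\frac{1}{1+\nu}$ for \FedProx. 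By the definition $\gamma=\eta\max_i\norm{K_{\bx_i}}$, every such $\nu$ lies in $[0,\gamma]\subset[0,1)$.

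First I would bound the extreme eigenvalues of $P$ by analyzing these scalar functions on $[0,\gamma]$. Both $g$ and $\tilde g$ are non-increasing there (each summand $(1-\nu)^\tau$ is non-increasing, and $1/(1+\nu)$ is decreasing), so $\lambda_{\max}(P)\le g(0)=s$ and $\lambda_{\min}(P)\ge g(\gamma)=\frac{1-(1-\gamma)^s}{\gamma}$ for \FedAvg, while $\lambda_{\max}(P)\le\tilde g(0)=1$ and $\lambda_{\min}(P)\ge\tilde g(\gamma)=\frac{1}{1+\gamma}$ for \FedProx. Since $P\succ 0$ is symmetric, $\norm{P}\norm{P^{-1}}=\lambda_{\max}(P)/\lambda_{\min}(P)$, and substituting these bounds gives exactly $\kappa$ in each case; for \FedAvg this is $s\big/\frac{1-(1-\gamma)^s}{\gamma}=\frac{\gamma s}{1-(1-\gamma)^s}$, and for \FedProx it is $1+\gamma$.

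For the eigenvalue comparison \eqref{eq:eigen_comp}, the key observation is that $K_\bx P$ is similar to the symmetric matrix $P^{1/2}K_\bx P^{1/2}$, and this in turn shares its spectrum with $K_\bx^{1/2}PK_\bx^{1/2}$ (same nonzero eigenvalues as $PK_\bx$), so $\Lambda_i=\lambda_i(K_\bx^{1/2}PK_\bx^{1/2})$. I would then run a congruence (Ostrowski-type) argument: conjugating the Loewner sandwich $\lambda_{\min}(P)\,I\preceq P\preceq\lambda_{\max}(P)\,I$ by $K_\bx^{1/2}$ preserves the order, yielding $\lambda_{\min}(P)\,K_\bx\preceq K_\bx^{1/2}PK_\bx^{1/2}\preceq\lambda_{\max}(P)\,K_\bx$. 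Weyl's monotonicity theorem then gives, index by index, $\lambda_{\min}(P)\,\lambda_i\le\Lambda_i\le\lambda_{\max}(P)\,\lambda_i$. Plugging in the bounds from the previous step reproduces $[\lambda_i s/\kappa,\lambda_i s]$ for \FedAvg (using $s/\kappa=\frac{1-(1-\gamma)^s}{\gamma}$) and $[\lambda_i/\kappa,\lambda_i]$ for \FedProx.

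The main obstacle is the eigenvalue comparison, since it is a claim about each ordered eigenvalue separately rather than about operator norms or traces; the Loewner-order/Weyl step is what makes the per-index statement rigorous, and it crucially relies on $P\succ 0$ so that $K_\bx^{1/2}PK_\bx^{1/2}$ is a genuine congruence transform of $K_\bx$ with the spectrum of $K_\bx P$. The remaining ingredients---monotonicity of $g$ and $\tilde g$ on $[0,\gamma]$ and the closed form of the geometric sum---are elementary.
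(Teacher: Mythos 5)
Your proposal is correct and follows essentially the same route as the paper: bound $\norm{P}$ and $\norm{P^{-1}}$ blockwise via the spectral image of $\eta K_{\bx_i}$ under the scalar maps $g$ and $\tilde g$ (the paper's inequality $\lambda_{\min}(P_{ii})\ge\sum_{\tau=0}^{s-1}(1-\gamma)^\tau$ is exactly your monotonicity argument in compact form), and then pass to the symmetric matrix $P^{1/2}K_\bx P^{1/2}$ for the per-index eigenvalue comparison. The only difference is cosmetic: where the paper cites Ostrowski's inequality \cite[Theorem 4.5.9]{horn2012matrix}, you re-derive the needed special case by conjugating the Loewner sandwich $\lambda_{\min}(P)\,I\preceq P\preceq\lambda_{\max}(P)\,I$ and applying Weyl monotonicity (after the valid swap to $K_\bx^{1/2}PK_\bx^{1/2}$, which shares its full spectrum with $K_\bx P$ since both products of the same two factors are $N\times N$), giving a self-contained substitute for the citation.
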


From \prettyref{lmm:condition-P} and the definition of $\gamma$, $\kappa$ -- the upper bound to the conditioning number of $P$ -- approaches 1 with properly chosen small learning rate $\eta$ and small number of local steps $s$ in \FedAvg. 
Larger $\eta$ and $s$ accelerate the optimization and reduce the communication rounds at the expense of worsening  the conditioning of $P$ and incurring a larger statistical error; this tradeoff will be quantified in Section \ref{sec: main results}. 


\begin{remark}
When $k$ is a neural tangent kernel (NTK) \cite{Du2018,Du2019}, the kernel matrix $K_\bx$ is positive definite provided that the input training data is non-parallel. Therefore, the series of the gradient descent \eqref{eq:kernel-GD} given by
\[
f_t(\bx) = (I-\eta K_\bx)^t f_0(\bx) + (I-(I-\eta K_\bx)^t)y
\] 
converge to $y$ and thus attain zero training error for a properly small learning rate $\eta$.
It immediately follows from \eqref{eq:pred_error_recursion_1} that both \FedAvg and \FedProx attain zero training error for NTKs. 
\end{remark}


\section{Convergence Results}
\label{sec: main results}
In this section we present our results on the convergence of \FedAvg and \FedProx in terms of both the global model $f_t$ and the model coefficients $\theta_t$ -- recalling that $f_t = \iprod{\phi}{\theta_t}$, where $\phi$ is the feature mapping. For ease of exposition, we state our results for \FedAvg and \FedProx in a unified and compact form with $s$  as one characterizing parameter. 
 Recall that $s$ is the algorithm parameter of \FedAvg only. To recover the formal statements and involved quantities for \FedProx,  
we only need to set $s=1.$

%
%
%

\begin{itemize}
    \item To study the convergence of $f_t$, we 
compare 
$f_t$ with any given function $f \in \calH$ at the {\em observed covariates}. In particular, we study the prediction error,
as measured in the (empirical) $L^2(\mathbb{P}_N)$ norm, that is
\begin{align}
\lnorm{f_t-f}{N}^2 \triangleq \frac{1}{N} \norm{f_{t}(\bx)- f(\bx) \,}^2 
= \frac{1}{N} \sum_{i=1}^M \sum_{j=1}^{n_i}\pth{f_t(x_{ij}) -f(x_{ij})}^2. \label{eq:pred_L2}
\end{align}
Note that the $L^2(\mathbb{P}_N)$ norm is a commonly adopted performance metric in regression (See e.g.~\cite[Sections 7.4 and 13.2]{wainwright2019high}). 
Different from the training error $(1/N)\norm{f_t(\bx)- y}^2 $,
 the prediction error under \prettyref{eq:pred_L2} is able to reflect the over-fitting phenomenon. Concretely, when an algorithm is over-fitting noises, the training error could approach 0 whereas the prediction error under \prettyref{eq:pred_L2} would stay large.  
\item When the RKHS  $\calH$ is of finite dimension, we further study the convergence of $f_t$ in the RKHS $\calH$ norm. This is equivalent to the convergence of the model coefficient $\theta_t$ in the $L^2$ norm in view of~\prettyref{eq:norm_def}.
One can readily check that the convergence of the model $f_t$ in the $\calH$ norm is 
stronger than that in the $L^2(\mathbb{P}_N)$ norm.\footnote{By the reproducing property of kernels (i.e., the identity~\prettyref{eq:kernel_eval}) and the Cauchy-Schwarz inequality, we have 
\begin{align}
\left(f_t(x)-f(x)\right)^2 = \iprod{f_t-f}{k_x}_{\calH}^2 \le \lnorm{f_t-f}{\calH}^2 \lnorm{k_x}{\calH}^2 = \lnorm{f_t-f}{\calH}^2 k(x,x), ~~~~ \text{for any } x \in \calX. 
\label{eq:conv_comp}
\end{align}
Since $\sup_{x\in \calX} k(x,x)<\infty$, 
the convergence of $f_t$ to $f$ in $\calH$ norm implies the convergence in $L^2(\mathbb{P}_N)$ norm.}
\end{itemize}

\subsection{Convergence of prediction error 
} \label{sec:pred_error_emp}
%
%
%
%

The following proposition bounds the expected prediction error in terms of the eigenvalues of $K_\bx P$, 
denoted as $\Lambda_{1} \ge \Lambda_{2} \ge \cdots \ge\Lambda_{N} \ge 0$ as per \prettyref{lmm:condition-P}. 
\begin{proposition}
\label{prop:pred_error_general}
For any $f \in \calH$, 
it holds that for all $t \ge 1$
\begin{align}
\expects{\lnorm{f_t-f}{N}^2}{\xi} 
\le 3 \kappa \left( \delta_1 (t) \lnorm{f_0-f}{\calH}^2
+   \delta_2 (t)  \sigma^2
+\frac{1}{N} \norm{\Delta_{f}}^2
\right), 
\label{eq:error_bound_general}
\end{align}
where 
\begin{align}
\delta_1 (t) &=  \frac{1}{s} \max_{1 \le i \le N} \left( 1 - \eta \Lambda_{i} \right)^{2t} \Lambda_{i} \le \frac{1}{2e\eta ts}, \label{eq:delta_1_bound}\\
\delta_2(t) & = \frac{1}{N} \sum_{i=1}^N \left( 1 - \left( 1- \eta \Lambda_{i} \right)^{t} \right)^2 \le \frac{1}{N} \sum_{i=1}^N \min \left\{1, \eta t \Lambda_{i} \right\},
\label{eq:delta_2_bound} \\
\Delta_f &=\left( f_1^*(\bx_1), f_2^*(\bx_2), \cdots, f_M^*(\bx_M)\right) - f(\bx). \label{eq:def-Delta-f}
\end{align}
\end{proposition}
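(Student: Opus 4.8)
The plan is to solve the linear recursion of Proposition~\ref{prop: prediction error iterate} in closed form and then control the three resulting error sources separately. Writing $A \triangleq I-\eta K_\bx P$, the recursion $f_t(\bx)=Af_{t-1}(\bx)+\eta K_\bx P y$ unrolls, via $\eta K_\bx P=I-A$ and the telescoping identity $\sum_{\tau=0}^{t-1}A^\tau(I-A)=I-A^t$, to $f_t(\bx)=A^tf_0(\bx)+(I-A^t)y$. Substituting $y=y^*+\xi$ with $y^*\triangleq(f_1^*(\bx_1),\dots,f_M^*(\bx_M))$ and regrouping (adding and subtracting $A^tf(\bx)$), the residual splits into an initialization-bias, a heterogeneity-bias, and a noise term:
\[
f_t(\bx)-f(\bx)=A^t\big(f_0(\bx)-f(\bx)\big)+(I-A^t)\Delta_f+(I-A^t)\xi,
\]
where $\Delta_f$ is as in \eqref{eq:def-Delta-f}. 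First I would apply the pointwise inequality $\norm{a+b+c}^2\le 3(\norm a^2+\norm b^2+\norm c^2)$ and then take $\expects{\cdot}{\xi}$; since $\xi$ is zero-mean, $\expects{\norm{(I-A^t)\xi}^2}{\xi}=\Tr\big((I-A^t)^\top(I-A^t)\Cov(\xi)\big)\le\sigma^2\fnorm{I-A^t}^2$, using $\Cov(\xi)\preceq\sigma^2 I$. It then remains to bound each of the three $\reals^N$-quantities and divide by $N$.

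The central obstacle is that $K_\bx P$, and hence $A$, is \emph{not} symmetric, so $\norm{A^t}$ is not governed by its spectral radius. The key device is the similarity transform $A=P^{-1/2}BP^{1/2}$ with $B\triangleq I-\eta P^{1/2}K_\bx P^{1/2}$ symmetric and positive semidefinite; by Lemma~\ref{lmm:eigen-KP} its eigenvalues are exactly $1-\eta\Lambda_i\in[0,1]$. Every occurrence of $A^t$ or $I-A^t$ then factors as $P^{-1/2}(\cdot)P^{1/2}$ acting on a symmetric power of $B$, and the cost of shuttling the $P^{\pm1/2}$ factors across is precisely the conditioning bound $\norm P\norm{P^{-1}}\le\kappa$ of Lemma~\ref{lmm:condition-P}. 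This is the single source of the overall $\kappa$ prefactor.

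For the heterogeneity term, $\norm{(I-A^t)\Delta_f}\le\norm{P^{-1/2}}\,\norm{I-B^t}\,\norm{P^{1/2}}\,\norm{\Delta_f}\le\sqrt\kappa\,\norm{\Delta_f}$ since $\norm{I-B^t}\le1$; squaring gives the $\tfrac{\kappa}{N}\norm{\Delta_f}^2$ contribution. For the noise term I would compute $\fnorm{I-A^t}^2=\Tr\big((I-B^t)P^{-1}(I-B^t)P\big)\le\norm P\norm{P^{-1}}\Tr\big((I-B^t)^2\big)\le\kappa\sum_i\big(1-(1-\eta\Lambda_i)^t\big)^2=\kappa N\delta_2(t)$, where the middle step uses that $(I-B^t)P^{-1}(I-B^t)\succeq0$. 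The initialization term is the most delicate: writing $g=f_0-f$ through the feature map as $g(\bx)=\Phi\theta$ with $\lnorm{g}{\calH}=\norm\theta$ and $K_\bx=\tfrac1N\Phi\Phi^\top$, and diagonalizing $\widetilde K\triangleq P^{1/2}K_\bx P^{1/2}=\tfrac1N(P^{1/2}\Phi)(P^{1/2}\Phi)^\top$, the projection of $P^{1/2}g(\bx)$ onto each eigendirection of $\widetilde K$ carries an extra factor $N\Lambda_i$. This yields
\[
\tfrac1N\norm{A^t g(\bx)}^2\le\norm{P^{-1}}\,\max_i\big[(1-\eta\Lambda_i)^{2t}\Lambda_i\big]\,\lnorm{g}{\calH}^2,
\]
and combining with $\norm{P^{-1}}\le\kappa/s$ (the minimal eigenvalue of $P$ is at least $s/\kappa$, as in the computation behind Lemma~\ref{lmm:condition-P}) produces exactly $\kappa\,\delta_1(t)\,\lnorm{f_0-f}{\calH}^2$. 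The extra $\Lambda_i$ in $\delta_1$, absent from a naive operator-norm bound, is what this step is engineered to extract. Summing the three contributions gives \eqref{eq:error_bound_general}.

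Finally, the scalar estimates are elementary: $x(1-\eta x)^{2t}\le xe^{-2\eta tx}\le\tfrac{1}{2e\eta t}$ (maximized at $x=\tfrac{1}{2\eta t}$) yields \eqref{eq:delta_1_bound}, while $0\le1-(1-\eta\Lambda_i)^t\le\min\{1,\eta t\Lambda_i\}$ together with squaring a number in $[0,1]$ yields \eqref{eq:delta_2_bound}. I expect the initialization-bias analysis---faithfully transferring the $\reals^N$-geometry of $A^tg(\bx)$ back to the RKHS norm while accounting for the non-symmetry through the factor $\kappa/s$---to be the main technical step; the remaining pieces are routine once the symmetrization $A=P^{-1/2}BP^{1/2}$ is in place.
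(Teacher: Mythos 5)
Your proposal is correct and is essentially the paper's own proof: the same unrolled three-term decomposition $f_t(\bx)-f(\bx)=L^t\left(f_0(\bx)-f(\bx)\right)+(I-L^t)(\Delta_f+\xi)$ with $L=I-\eta K_\bx P$, the same symmetrization through $P^{1/2}K_\bx P^{1/2}$ with $\norm{P}\le s$ and $\norm{P^{-1}}\le \kappa/s$ supplying every $\kappa$ factor, the same trace/Frobenius treatment of the noise term, and the same scalar estimates for $\delta_1,\delta_2$. Your feature-map derivation of the bias bound (the orthogonality of the directions $(P^{1/2}\phi(\bx))^\top u_i$, which converts the sum over eigendirections into the maximum of $(1-\eta\Lambda_i)^{2t}\Lambda_i$) is precisely the content of the paper's \prettyref{lmm:K-matrix-op-norm}, i.e.\ the domination $\frac{1}{N}g(\bx)g(\bx)^\top \preceq \lnorm{g}{\calH}^2 K_\bx$ used in \prettyref{lmm:bias}, written in different notation.
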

%
The expectation in~\prettyref{prop:pred_error_general} is only taken over the observation noise $\xi$ which has zero mean and bounded variance. 
The above result nicely separates 
the impact of bias, variance, and heterogeneity on the error dynamics. 
\begin{itemize}
    \item In \prettyref{eq:error_bound_general}, the first term on the right hand side is of the order $\delta_1(t) \lnorm{f_0-f}{\calH}^2$ and is 
related to the bias in estimation. 
 As indicated in~\prettyref{eq:delta_1_bound}, $\delta_1(t)$ decreases to  
$0$ as iterations proceed. 
The upper bound of $\delta_1(t)$ in \eqref{eq:delta_1_bound}, which decreases at a rate $c/t$, for a constant $c$ independent of $N$ and the kernel function $k$.  When the kernel matrix $K_{\bx}$ is of rank $d$, the convergence rate can be improved to be $\exp(-c_d t)$ for a constant $c_d$ independent of $N$.



\item The second term on the right hand side of~\prettyref{eq:error_bound_general} is of the order $\delta_2(t) \sigma^2$ and characterizes the variance in estimation. 
Note that $\delta_2(t)$ is capped at 1 and is increasing in $t$. Specifically, it converges to $1$ as $ t \to \infty$, capturing the phenomenon of over-fitting to noises.

\item The third term on the right hand side of~\prettyref{eq:error_bound_general} is of the order $\norm{\Delta_{f}}^2/N$ 
and quantifies the impact of the heterogeneity with respect to $f$. 
In the presence of only unbalanced data partition and covariate heterogeneity, we have $f_i^*=f^*$ for all $i$ and naturally $\Delta_{f^*}=0$.
Somewhat surprisingly, even under additional model heterogeneity that $f_i^*\neq f_j^*$, with assumptions such as invertibility of $\calI-\calL$,  there exists a choice of $f$ under which  $\Delta_f=0$ (cf.~\prettyref{eq:fa-equality}).
\end{itemize}

\medskip
To 
prevent over-fitting,  i.e., to control $\delta_2(t)$,
we can terminate the algorithms at some time $T$ before they enter the over-fitting phase.  The stopping time $T$ needs to be carefully chosen to balance the bias and variance \cite{raskutti2014early}.  
Note that $\delta_1(t) \le \frac{1}{2e\eta ts}$. 
To further control 
$\delta_2(t)$, we need to introduce the empirical Rademacher  complexity~\cite{bartlett2005local} defined as 
\begin{align}
\calR (\epsilon) = \sqrt{\frac{1}{N} \sum_{i=1}^N \min\{ \lambda_i, \epsilon^2 \}}, \label{eq:R_K_def}
\end{align}
where $\lambda_1 \ge \lambda_2 \ge \cdots \ge \lambda_N \ge 0$ are the eigenvalues of kernel matrix $K_\bx$  as per \prettyref{lmm:condition-P}.  
Intuitively, $\calR(\epsilon)$ is a data-dependent 
complexity measure of the underlying RKHS  and decreases with faster eigenvalue decay and smoother kernels. Recall from~\prettyref{eq:eigen_comp} that
$\Lambda_{i} \le \lambda_i s$. Hence it follows from~\prettyref{eq:delta_2_bound} that 
$$
\delta_2(t) \le \eta t s \calR^2 \left( 1/\sqrt{\eta t s} \right).
$$
Therefore, we can set $T$ as follows: 
 \begin{align}
 T \triangleq \max \left\{t \in \naturals:  \calR \left( 1/\sqrt{\eta t s} \right)  \le  \frac{1}{\sqrt{2e}\sigma \eta ts}\right\}.
 \label{eq:def_early_stopping}
 \end{align}
That is, we choose $T$ to be the largest time index $t$ so
that roughly the bias $\frac{1}{\eta ts}$ dominates the variance $\eta t s \calR^2\left( 1/\sqrt{\eta t s} \right) \sigma^2.$ 


\medskip
With early-stopping, we can specialize the general convergence in~\prettyref{prop:pred_error_general} as follows. 


\begin{theorem}[With early-stopping]
\label{thm:pred_error_emp}
For any $f \in \calH$, 
it holds that for all $1 \le t \le T$,
$$
\expects{\lnorm{f_t-f}{N}^2}{\xi} 
\le \frac{3 \kappa }{2e\eta ts} \left( \lnorm{f_0-f}{\calH}^2 +1 \right) + 
\frac{3\kappa}{N} \norm{\Delta_{f}}^2. 
$$
\end{theorem}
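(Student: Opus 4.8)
The plan is to start from the general error decomposition in Proposition~\ref{prop:pred_error_general} and to control each of its three terms separately over the admissible range $1 \le t \le T$. Recalling
\begin{align*}
\expects{\lnorm{f_t-f}{N}^2}{\xi} \le 3\kappa\left(\delta_1(t)\lnorm{f_0-f}{\calH}^2 + \delta_2(t)\sigma^2 + \tfrac{1}{N}\norm{\Delta_f}^2\right),
\end{align*}
the heterogeneity term already appears in the desired form $\frac{3\kappa}{N}\norm{\Delta_f}^2$ and needs no further work. The bias term is dispatched immediately by the bound $\delta_1(t)\le \frac{1}{2e\eta ts}$ from~\eqref{eq:delta_1_bound}, contributing $\frac{3\kappa}{2e\eta ts}\lnorm{f_0-f}{\calH}^2$. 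Hence the whole content of the theorem reduces to showing that the variance term obeys $\delta_2(t)\sigma^2 \le \frac{1}{2e\eta ts}$ for every $t\le T$, since then it merges with the bias term to produce the factor $(\lnorm{f_0-f}{\calH}^2+1)$.

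For the variance term I would first invoke the elementary bound $\delta_2(t)\le \eta ts\,\calR^2(1/\sqrt{\eta ts})$, which follows by combining~\eqref{eq:delta_2_bound} with the eigenvalue comparison $\Lambda_i\le \lambda_i s$ from~\eqref{eq:eigen_comp}: indeed $\min\{1,\eta t\Lambda_i\}\le \min\{1,\eta ts\lambda_i\}=\eta ts\min\{\lambda_i,1/(\eta ts)\}$, and averaging over $i$ recovers $\eta ts\,\calR^2(1/\sqrt{\eta ts})$. The definition of $T$ in~\eqref{eq:def_early_stopping} then supplies $\calR(1/\sqrt{\eta ts})\le \frac{1}{\sqrt{2e}\,\sigma\eta ts}$, and squaring and substituting gives
\begin{align*}
\delta_2(t)\sigma^2 \le \eta ts\,\sigma^2\cdot\frac{1}{2e\,\sigma^2(\eta ts)^2}=\frac{1}{2e\eta ts},
\end{align*}
exactly as needed. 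Summing the three controlled contributions then yields the claimed bound verbatim.

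The single step that requires genuine care — and the main obstacle — is that the defining inequality of $T$ in~\eqref{eq:def_early_stopping} is asserted only at the maximizing index, whereas the theorem needs the variance bound for \emph{all} $1\le t\le T$. I therefore need the inequality $\calR(1/\sqrt{\eta ts})\le \frac{1}{\sqrt{2e}\,\sigma\eta ts}$ to propagate downward to every $t\le T$, i.e.\ the defining set must be a prefix $\{1,\dots,T\}$. This follows from the star-shaped (monotonicity) property of the empirical Rademacher complexity: since $\min\{\lambda_i,\epsilon^2\}/\epsilon^2=\min\{\lambda_i/\epsilon^2,1\}$ is non-increasing in $\epsilon$, the ratio $\calR(\epsilon)/\epsilon$ is non-increasing, hence $\calR(\epsilon)/\epsilon^2$ is non-increasing in $\epsilon$. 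Writing $\epsilon_t=1/\sqrt{\eta ts}$, which is decreasing in $t$, the quantity $\calR(\epsilon_t)/\epsilon_t^2$ is therefore non-decreasing in $t$; since the defining inequality is precisely $\calR(\epsilon_t)/\epsilon_t^2\le 1/(\sqrt{2e}\,\sigma)$, its validity at $T$ forces its validity for all $t\le T$. With this monotonicity established, the variance estimate above holds throughout the stated range and the proof concludes.
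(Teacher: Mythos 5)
Your proof is correct and follows essentially the same route as the paper's: both plug the bounds $\delta_1(t)\le \frac{1}{2e\eta ts}$ and $\delta_2(t)\le \eta ts\,\calR^2\bigl(1/\sqrt{\eta ts}\bigr)$ into \prettyref{prop:pred_error_general} and then use the definition of $T$ in \prettyref{eq:def_early_stopping} to force $\delta_2(t)\sigma^2\le \frac{1}{2e\eta ts}$ for all $t\le T$. The only difference is cosmetic: you explicitly verify that $\calR(\epsilon)/\epsilon^2$ is non-increasing and that this makes the defining set of $T$ downward-closed, a step the paper asserts without proof.
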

%


Our result in~\prettyref{thm:pred_error_emp} shows that the average prediction error decays at a rate of $O(1/t)$ and eventually saturates at the heterogeneity term $\frac{3\kappa}{N}\norm{\Delta_f}^2$.  This encompasses as a special case the existing convergence result of the centralized gradient descent for non-parametric regression~\cite{raskutti2014early} wherein similar early stopping is adopted with the specification $s=1$ and $f_i^*=f^*$ for all $i\in [M]$. 
%
%
\prettyref{thm:pred_error_emp} also reassures the common folklore and confirms 
our empirical observation in~\prettyref{fig:f2} on \FedAvg.
Specifically, with multiple local steps $s$ up to a certain threshold,
the convergence rate increases proportionally to $s$ while the final convergence error stays almost the same, i.e., we can recoup the accuracy loss while
 enjoying the saving of the communication cost. 
We cannot set $s$ to be arbitrarily large because 
as $s$ gets larger, the prediction error increases by a factor of $\kappa $, which is an increasing function of $s.$

\begin{remark}[Convergence in $L^2(\mathbb{P})$ norm]
We can also establish a uniform bound to the RKHS norm of $f_t-f$ up to the early stopping time $T$, as stated in~\prettyref{lmm:bound_ft_H} in the appendix. 
Furthermore, when $\bx_i$'s are i.i.d., this allows us to apply the empirical process theory to extend the bounds of \prettyref{eq:pred_L2} to those of the prediction error evaluated
at the unseen data, i.e., $\expects{(f_t(x)-f(x))^2}{x \sim \mathbb{P}}$ 
(see e.g.~\cite{raskutti2014early} and~\cite[Chapter 14]{wainwright2019high}).
For many 
kernels including polynomials and Sobolev classes, this yields the centralized minimax-optimal estimation error rate  \cite{yang1999information,raskutti2012minimax}. 
\end{remark}

%
%

%
\prettyref{thm:pred_error_emp} bounds the prediction error in expectation. 
In practice, the distributional structures of $\xi$ vary across different applications. High-probability bounds on $\lnorm{f_t-f}{N}^2$
can be obtained accordingly. 
\begin{theorem}[High-probability bounds]
\label{thm: Light-tailed noises}
For any $f\in \calH$ and any $t<T$, let 
\begin{align}
\varepsilon_t = \prob{\lnorm{f_t-f}{N}^2
\ge \frac{3\kappa }{2e \eta ts}
\left( \lnorm{f_0-f}{\calH}^2 +3 \right)+ \frac{3\kappa}{N} \norm{\Delta_{f}}^2 }.
\label{eq:def_epsilon_t}
\end{align}
\begin{itemize}
\item (Sub-Gaussian noise): Suppose the coordinates of the noise vector $\xi$
are $N$ independent zero-mean and sub-Gaussian variables (with sub-Gaussian norm bounded by
$\sigma$). There exists a  universal constant $c>0$ such that 
\begin{align*}
\varepsilon_t \le \exp\left(-c N/(\sigma^2 \eta t s) \right).
\end{align*}
\item (Heavy-tailed noise): Suppose the coordinates of $\xi$
are $N$ independent random variables with  $\Expect[\xi_i]=0$, $\Expect[\xi_i^2] \le \sigma^2$, and $\Expect|\xi_i|^p\le M_p <\infty$ for $p\ge 4.$ There exists a constant $c_p$ that only depends on $p$
such that 
$$
\varepsilon_t  \le c_p M_p \left( \frac{  \eta ts }{ N\sigma^2 } \right)^{p/4} .
$$
\end{itemize}
\end{theorem}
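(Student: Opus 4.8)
The plan is to reduce the whole statement to a concentration estimate for a single quadratic-plus-linear form in the noise vector $\xi$, exploiting the closed-form recursion for the in-sample predictions. First I would iterate the linear recursion of Proposition~\ref{prop: prediction error iterate} and use the telescoping identity $\sum_{\tau=0}^{t-1}(I-\eta K_\bx P)^\tau \eta K_\bx P = I-(I-\eta K_\bx P)^t$ to obtain
\[
f_t(\bx) = (I-\eta K_\bx P)^t f_0(\bx) + \left(I-(I-\eta K_\bx P)^t\right) y .
\]
Substituting $y=\left(f_1^*(\bx_1),\dots,f_M^*(\bx_M)\right)+\xi$ splits the prediction error into a deterministic part $u_t$ and a stochastic part $B_t\xi$, where $B_t\triangleq I-(I-\eta K_\bx P)^t$, so that
\[
\lnorm{f_t-f}{N}^2 = \tfrac{1}{N}\norm{u_t}^2 + \tfrac{2}{N}\iprod{u_t}{B_t\xi} + \tfrac{1}{N}\norm{B_t\xi}^2 .
\]
Its $\xi$-expectation is exactly the quantity bounded in Theorem~\ref{thm:pred_error_emp}, and evaluating Proposition~\ref{prop:pred_error_general} at $\xi=0$ controls the deterministic piece $\tfrac1N\norm{u_t}^2$. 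The entire task is therefore to show that the fluctuation of the last two (random) terms is absorbed by the extra slack $\tfrac{3\kappa}{e\eta t s}$ that distinguishes \eqref{eq:def_epsilon_t} from the mean bound (the ``$+3$'' versus ``$+1$'').

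The deterministic ingredient is a pair of spectral bounds on the non-symmetric matrix $B_t$. Since $K_\bx P$ is similar, via $P^{1/2}$, to the symmetric PSD matrix $P^{1/2}K_\bx P^{1/2}$ (the similarity already used for Lemma~\ref{lmm:eigen-KP}), we may write $B_t=P^{-1/2}C_t P^{1/2}$ with $C_t\triangleq I-(I-\eta P^{1/2}K_\bx P^{1/2})^t$ symmetric and having eigenvalues $1-(1-\eta\Lambda_i)^t\in[0,1]$. Combining this with $\norm{P}\norm{P^{-1}}\le\kappa$ from Lemma~\ref{lmm:condition-P} gives $\norm{B_t}\le\sqrt\kappa$, and through the identity $\Tr(B_t^\top B_t)=\Fnorm{P^{1/2}C_tP^{-1/2}}^2\le\kappa\,\Fnorm{C_t}^2$ the trace bound $\Tr(B_t^\top B_t)\le\kappa\sum_i(1-(1-\eta\Lambda_i)^t)^2=\kappa N\delta_2(t)$ in view of \eqref{eq:delta_2_bound}. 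These yield $\norm{\tfrac1N B_t^\top B_t}\le\kappa/N$ and $\Fnorm{\tfrac1N B_t^\top B_t}\le\kappa\sqrt{\delta_2(t)/N}$, which are precisely the quantities that size the tail.

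For sub-Gaussian $\xi$ I would then handle the cross term $\tfrac2N\iprod{B_t^\top u_t}{\xi}$ by the standard sub-Gaussian linear-form bound, with variance proxy $\tfrac{4}{N^2}\norm{B_t}^2\norm{u_t}^2\sigma^2$ controlled through $\norm{u_t}^2$ and the deterministic bound above, and the centered quadratic term $\tfrac1N(\norm{B_t\xi}^2-\Expect\norm{B_t\xi}^2)$ by the Hanson--Wright inequality applied to $\tfrac1N B_t^\top B_t$. Choosing each deviation level to be a constant multiple of $\tfrac{\kappa}{\eta t s}$, the operator-norm branch of Hanson--Wright together with $\norm{\tfrac1N B_t^\top B_t}\le\kappa/N$ produces the exponent $-cN/(\sigma^2\eta t s)$, while the early-stopping relation $\delta_2(t)\lesssim 1/(\eta t s\sigma^2)$ for $t<T$ ensures the Frobenius branch is no worse. (Equivalently and more compactly, $\xi\mapsto\lnorm{f_t-f}{N}$ is a convex $\sqrt{\kappa/N}$-Lipschitz function of $\xi$, so Talagrand's convex-Lipschitz concentration applies directly, after which one passes from the norm to its square using $\Expect\lnorm{f_t-f}{N}\le\sqrt{\Expect\lnorm{f_t-f}{N}^2}$ and the slack in \eqref{eq:def_epsilon_t}.) For heavy-tailed $\xi$, exponential concentration is unavailable, so I would instead bound moments: apply Markov's inequality to the fluctuation at order $p/4$ and estimate the relevant moments of the linear form via Rosenthal's inequality and of the quadratic form $\norm{B_t\xi}^2$ via a decoupling / Marcinkiewicz--Zygmund argument, feeding in $\Expect|\xi_i|^p\le M_p$; the powers of $\eta t s$, $N$, and $\sigma^2$ then combine through the spectral bounds to give $c_p M_p(\eta t s/(N\sigma^2))^{p/4}$.

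The main obstacle I anticipate is the non-symmetry of $B_t$: the clean eigenvalue picture only survives after the similarity transform by $P^{1/2}$, so every norm I need ($\norm{B_t}$, $\Fnorm{B_t^\top B_t}$, and the higher moments of the quadratic form) must be routed through the conditioning constant $\kappa$, and keeping these $\kappa$-factors consistent with the $3\kappa$ prefactor in \eqref{eq:def_epsilon_t} is where the bookkeeping is delicate. In the heavy-tailed case the additional difficulty is matching the exact exponent $p/4$, which forces the moment estimate on the quadratic form to be carried out at the correct order rather than at the more familiar Gaussian scaling.
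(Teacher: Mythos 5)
Your plan coincides with the paper's proof in all of its essential components: the same unrolled recursion with $L=I-\eta K_\bx P$, the same similarity through $P^{1/2}$ giving $\norm{I-L^t}\le\sqrt{\kappa}$ and $\Fnorm{I-L^t}^2\le \kappa N \delta_2(t)$, Hanson--Wright applied to $Q=(I-L^t)^\top(I-L^t)$ at deviation level $\delta=N\kappa/(e\eta t s)$ (exactly the ``$+3$ versus $+1$'' slack you identified), with the operator-norm branch $\norm{Q}\le\kappa$ supplying the exponent $-cN/(\sigma^2\eta t s)$ and the early-stopping rule $\eta ts\,\calR(1/\sqrt{\eta ts})\le 1/(\sqrt{2e}\sigma)$ taming the Frobenius branch via $\Fnorm{Q}^2\le\norm{Q}\Tr(Q)$; and, for heavy tails, a Markov/moment argument at order $p/2$ on the quadratic form, with a Rosenthal-type bound for the diagonal and decoupling plus decoupled $U$-statistic moment inequalities for the off-diagonal part, so that $(\Fnorm{Q}/\delta)^{p/2}$ yields precisely the exponent $p/4$. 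This is the paper's argument.

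The one place where your route, as written, would fail is the cross term. The paper never produces one: it bounds $\norm{a+b+c}^2\le 3(\norm{a}^2+\norm{b}^2+\norm{c}^2)$ on the decomposition $f_t(\bx)-f(\bx)=L^t(f_0(\bx)-f(\bx))+(I-L^t)\xi+(I-L^t)\Delta_f$ --- this is the source of the prefactor $3\kappa$ --- so the two deterministic terms are handled pathwise and the \emph{only} random object needing concentration is $\xi^\top Q\xi$, whence $\varepsilon_t\le q_t$ directly. Your exact expansion instead requires a tail bound on $\tfrac{2}{N}\iprod{u_t}{B_t\xi}$ at a fixed level $\asymp\kappa/(\eta ts)$, which gives probability $\exp\bigl(-c\,\kappa N^2/(\sigma^2(\eta ts)^2\norm{u_t}^2)\bigr)$; since $\norm{u_t}^2$ can be of order $\kappa\norm{\Delta_f}^2$ and the theorem places no restriction on $\Delta_f$ (e.g.\ $\norm{\Delta_f}^2\asymp N\Gamma^2$ with $\Gamma$ large), this exponent can be far smaller than the claimed $N/(\sigma^2\eta ts)$ with a \emph{universal} constant $c$. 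The repair is one line --- absorb the cross term by $2\iprod{u_t}{B_t\xi}\le\tfrac12\norm{u_t}^2+2\norm{B_t\xi}^2$, which reproduces the paper's three-term bound --- so the flaw is cosmetic rather than structural, but your bookkeeping must be routed through it. Separately, the parenthetical Talagrand alternative is unsound here: convex-Lipschitz concentration at the stated rate holds for bounded or Gaussian coordinates, not for general sub-Gaussian $\xi_i$; Hanson--Wright is the correct tool, as in your main route.
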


\prettyref{thm: Light-tailed noises} shows that the failure probability decays to $0$ as the sample size $N$ tends to infinity; the decay rate is exponential for sub-Gaussian noise and polynomial for noise with bounded moment.

%
%

%
%
\medskip
In general, we cannot hope to get a convergence rate that is strictly better than  $O(1/t)$.  
This is because the minimum eigenvalue $\lambda_N$ of the kernel matrix
is {\em not} bounded away from $0$ and may converge to $0$ as $N$ diverges. 
Fortunately, when the kernel matrix $K_\bx$ has a finite rank $d$,  the convergence rate can be improved to be exponential. 
%
%
\begin{theorem}[Exponential convergence for finite-rank kernel matrix]
\label{thm: exponential}
Suppose that the kernel matrix $K_\bx$ has finite rank $d$. Then 
$$
\expects{\lnorm{f_t-f}{N}^2}{\xi} 
\le 3 \frac{ \kappa }{\eta s}  \lnorm{f_0-f}{\calH}^2\exp\left( - 2 \frac{\eta s}{\kappa} \lambda_d t \right) + 
3 \kappa \sigma^2 \frac{d}{N}  + \frac{3\kappa}{N} \norm{\Delta_{f}}^2, \quad \forall t. 
$$
\end{theorem}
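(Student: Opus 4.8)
The plan is to start from the general bound in \prettyref{prop:pred_error_general}, which already gives
$$\expects{\lnorm{f_t-f}{N}^2}{\xi} \le 3\kappa\left(\delta_1(t)\lnorm{f_0-f}{\calH}^2 + \delta_2(t)\sigma^2 + \frac{1}{N}\norm{\Delta_f}^2\right),$$
and to sharpen the two data-dependent quantities $\delta_1(t)$ and $\delta_2(t)$ under the finite-rank hypothesis. Since the heterogeneity term $\frac{3\kappa}{N}\norm{\Delta_f}^2$ already matches the claimed bound verbatim, all the work concentrates on the bias factor $\delta_1(t)$ and the variance factor $\delta_2(t)$.

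The first observation I would record is that finite rank collapses the spectrum: if $K_\bx$ has rank $d$, then $\lambda_{d+1}=\cdots=\lambda_N=0$, and by the upper comparison $\Lambda_i\le \lambda_i s$ from \prettyref{lmm:condition-P} we also get $\Lambda_{d+1}=\cdots=\Lambda_N=0$. For the variance factor this is immediate: each summand $(1-(1-\eta\Lambda_i)^t)^2$ vanishes whenever $\Lambda_i=0$, so only the $d$ leading terms survive, and each is at most $1$ because $\eta\Lambda_i\in[0,1]$ by \prettyref{lmm:eigen-KP}. Hence $\delta_2(t)\le d/N$, which produces the $3\kappa\sigma^2\frac{d}{N}$ contribution.

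The heart of the argument is the exponential decay of $\delta_1(t)$. Again the zero eigenvalues drop out, since $(1-\eta\Lambda_i)^{2t}\Lambda_i=0$ there, so the maximum effectively runs over $i\le d$. For each surviving index I would use $(1-\eta\Lambda_i)^{2t}\le e^{-2t\eta\Lambda_i}$ together with $\Lambda_i\le 1/\eta$ (which follows from $\eta\Lambda_i\le 1$ in \prettyref{lmm:eigen-KP}), giving $(1-\eta\Lambda_i)^{2t}\Lambda_i\le \frac{1}{\eta}e^{-2t\eta\Lambda_i}$. The crucial step, and the one place where the finite-rank structure genuinely helps, is to lower bound the exponent via the other half of the sandwich in \prettyref{lmm:condition-P}: for $i\le d$ we have $\Lambda_i\ge \lambda_i s/\kappa\ge \lambda_d s/\kappa$, so $e^{-2t\eta\Lambda_i}\le e^{-2\eta s\lambda_d t/\kappa}$ uniformly over $i\le d$. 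Combining, $\delta_1(t)=\frac{1}{s}\max_{i\le d}(1-\eta\Lambda_i)^{2t}\Lambda_i\le \frac{1}{\eta s}\exp\left(-2\frac{\eta s}{\kappa}\lambda_d t\right)$.

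Substituting these two refinements back into \prettyref{prop:pred_error_general} yields exactly the stated inequality. The main obstacle, though a modest one, is organizing the spectral comparison so that both directions of the sandwich $\lambda_i s/\kappa\le \Lambda_i\le \lambda_i s$ are used: the upper side to annihilate the tail eigenvalues and cap $\delta_2(t)$, and the lower side to convert the smallest nonzero eigenvalue $\lambda_d$ into the exponential rate. I would also confirm that $\Lambda_i\le 1/\eta$ (needed twice) is a consequence of \prettyref{lmm:eigen-KP}, and note that $\lambda_d>0$ under the rank-$d$ assumption, so that the exponential factor is genuinely decaying rather than vacuous.
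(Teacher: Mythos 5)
Your proposal is correct and follows essentially the same route as the paper: specialize \prettyref{prop:pred_error_general} by using the rank-$d$ collapse of the spectrum to cap $\delta_2(t)$ at $d/N$, and sharpen $\delta_1(t)$ via the two-sided comparison $\lambda_i s/\kappa \le \Lambda_i \le \lambda_i s$ from \prettyref{lmm:condition-P} together with $\eta\Lambda_i \le 1$. The only cosmetic difference is that you apply $(1-\eta\Lambda_i)^{2t} \le e^{-2t\eta\Lambda_i}$ before invoking $\Lambda_i \ge \lambda_d s/\kappa$, whereas the paper first bounds $(1-\eta\Lambda_i)^{2t}\Lambda_i \le \frac{1}{\eta}(1-\eta\lambda_d s/\kappa)^{2t}$ and exponentiates last; these are the same argument.
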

Enabled by the finite-rankness of $K_\bx$, \prettyref{thm: exponential} can be deduced from~\prettyref{prop:pred_error_general} via deriving a tighter upper bound on $\delta_1(t)$. 
Finite-rankness also ensures that the variance term is upper bounded by $\sigma^2 d/N$ -- hence no early stopping is needed. The complete proof is deferred to~\prettyref{sec:proof_corollaries}.

%

\subsection{Convergence of model coefficients}
\label{sec:theta_err}

In this section, we show the convergence of model coefficient $\theta_t$, or equivalently, the convergence of $f_t$ in RKHS norm. 
As shown in~\prettyref{eq:conv_comp}, this notion of convergence is strictly stronger than the convergence of $f_t$ in $L^2(\mathbb{P}_N)$ norm. 
For tractability, we assume that the RKHS is $d$-dimensional,
or equivalently $\phi(x)$ is $d$-dimensional\footnote{This further implies that $K_{\bx}$ is of rank at most $d$. 
}.
This encompasses the popular random feature model which maps the input data
to a randomized feature space~\cite{rahimi2007random}. 

\begin{theorem}\label{thm:conv_theta}
Suppose that $\phi(x)$ is $d$-dimensional.
%
Then
\begin{align}
    \expects{\norm{\theta_t - \tha}^2}{\xi}
    \le \left( 1- \frac{s \eta \rho_N }{\kappa} \right)^{2t}
    \norm{\theta_0-\tha}^2+ \sigma^2 \frac{\kappa d}{N \rho_N} \label{eq:theta_bound_desired}, 
\end{align}
where $\tha$ is the model coefficient of
$
\fa = (\calI-\calL)^{-1}\pth{(f_1^*(\bx_1),\dots,f_M^*(\bx_M))\cdot \Psi},
$
and $\rho_N= \frac{\lambda_{\min}(\phi(\bx)^\top \phi(\bx))}{N}.$
Moreover, the distance between $\bar{\theta}$ and $\theta_j^*$ is upper bounded by
\begin{align}
\label{eq:model_bound}
\norm{\tha-\theta_j^*}
\le
\norm{\Delta_{f_j^*}}\sqrt{\frac{\kappa}{N\rho_N}}.
\end{align}
\end{theorem}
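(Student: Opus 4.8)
The plan is to pass from the function iterates to their coefficient vectors, where the abstract operator $\calL$ becomes an honest $d\times d$ matrix and the dynamics collapse to a noise-driven linear recursion. Write $f_t=\iprod{\phi}{\theta_t}$, set $\Sigma_i\triangleq\frac1{n_i}\phi(\bx_i)^\top\phi(\bx_i)$, and let $L\in\reals^{d\times d}$ be the matrix representing $\calL$ in the $\phi$-basis; reading off $\calL_i,\tilde\calL_i$ and using that the representer $k_x$ has coefficient vector $\phi(x)$ gives $L=\sum_i w_i(I-\eta\Sigma_i)^s$ for \FedAvg and $L=\sum_i w_i(I+\eta\Sigma_i)^{-1}$ for \FedProx. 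Since the coefficient matrix of $\Psi$ is $\frac\eta N P\phi(\bx)$, Proposition \ref{prop:theta_recursion} becomes $\theta_t=L\theta_{t-1}+\frac\eta N\phi(\bx)^\top P y$, while the definition of $\fa$ reads $(I-L)\tha=\frac\eta N\phi(\bx)^\top P f^\star$ with $f^\star\triangleq(f_1^*(\bx_1),\dots,f_M^*(\bx_M))$. Subtracting and using $y=f^\star+\xi$ yields $\theta_t-\tha=L(\theta_{t-1}-\tha)+g$ with $g\triangleq\frac\eta N\phi(\bx)^\top P\xi$, hence $\theta_t-\tha=L^t(\theta_0-\tha)+S_t\,g$ where $S_t\triangleq\sum_{\tau=0}^{t-1}L^\tau=(I-L)^{-1}(I-L^t)$. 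Because $L$ is symmetric and $\Expect[g]=0$, taking $\expects{\cdot}{\xi}$ splits the error into a deterministic bias $\norm{L^t(\theta_0-\tha)}^2$ and a variance $\expects{\norm{S_t g}^2}{\xi}$, with no cross term (the invertibility of $I-L$ needed throughout is exactly the standing assumption $\rho_N>0$).

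The single identity that drives everything is
\[
I-L=\frac\eta N\phi(\bx)^\top P\phi(\bx),
\]
which follows from the relation $f-\calL f=f(\bx)\cdot\Psi$ of Lemma \ref{lmm:block_magic} evaluated in the $\phi$-basis (using $f(\bx)=\phi(\bx)\theta$ and $\Psi=\frac\eta N P k_\bx$). Combined with the elementary bounds $\frac s\kappa I\preceq P\preceq sI$ — read off from \eqref{eq: def: Pii} and the definition of $\kappa$, consistent with Lemma \ref{lmm:condition-P} — this controls both terms. For the bias, $P\succeq\frac s\kappa I$ gives $\lambda_{\min}(I-L)=\frac\eta N\lambda_{\min}(\phi(\bx)^\top P\phi(\bx))\ge\frac s\kappa\cdot\frac\eta N\lambda_{\min}(\phi(\bx)^\top\phi(\bx))=\frac{s\eta\rho_N}\kappa$, so $\lambda_{\max}(L)=1-\lambda_{\min}(I-L)\le 1-\frac{s\eta\rho_N}\kappa$ and $\norm{L^t(\theta_0-\tha)}^2\le(1-\frac{s\eta\rho_N}\kappa)^{2t}\norm{\theta_0-\tha}^2$, matching the first term of \eqref{eq:theta_bound_desired}.

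For the variance, $\expects{gg^\top}{\xi}\preceq\sigma^2\frac{\eta^2}{N^2}\phi(\bx)^\top P^2\phi(\bx)$; then $P\preceq sI$ forces $P^2\preceq sP$, so $\phi(\bx)^\top P^2\phi(\bx)\preceq s\,\phi(\bx)^\top P\phi(\bx)$ and hence $\expects{gg^\top}{\xi}\preceq\sigma^2\frac{\eta s}N(I-L)$ by the identity. Since $S_t$ commutes with $I-L$, this gives $\expects{\norm{S_t g}^2}{\xi}=\Tr\big(S_t\,\expects{gg^\top}{\xi}\,S_t\big)\le\sigma^2\frac{\eta s}N\Tr\big((I-L)^{-1}(I-L^t)^2\big)$, and bounding each of the $d$ eigenvalues $\mu$ of $L$ by $\frac{(1-\mu^t)^2}{1-\mu}\le\frac1{1-\mu}\le\frac\kappa{s\eta\rho_N}$ produces exactly $\sigma^2\frac{\kappa d}{N\rho_N}$. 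This is the step where naive estimates fail: decoupling $\lambda_{\max}(S_t)^2$ from a crude trace of $\phi(\bx)^\top P^2\phi(\bx)$ loses a factor of order $\gamma/(\eta\rho_N)$, so the whole point is to route $P^2$ back through $I-L$ and exploit commutativity before tracing.

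The distance bound \eqref{eq:model_bound} uses the identity once more. Applying it to $\theta_j^*$ and using $f_j^*(\bx)=\phi(\bx)\theta_j^*$ gives $(I-L)\theta_j^*=\frac\eta N\phi(\bx)^\top P f_j^*(\bx)$, so with $\Delta\triangleq\Delta_{f_j^*}=f^\star-f_j^*(\bx)$ we get $\tha-\theta_j^*=(I-L)^{-1}\frac\eta N\phi(\bx)^\top P\Delta$. Writing $\norm{\tha-\theta_j^*}^2=\frac{\eta^2}{N^2}\Delta^\top P^{1/2}\big[P^{1/2}\phi(\bx)(I-L)^{-2}\phi(\bx)^\top P^{1/2}\big]P^{1/2}\Delta$, the bracketed matrix shares its spectral norm with $\frac{N^2}{\eta^2}$ times $\frac{\eta^2}{N^2}(I-L)^{-1}\phi(\bx)^\top P\phi(\bx)(I-L)^{-1}=\frac\eta N(I-L)^{-1}$, so its norm is $\frac\eta N/\lambda_{\min}(I-L)\le\frac\kappa{Ns\rho_N}$; bounding $\Delta^\top P\Delta\le s\norm{\Delta}^2$ then yields $\norm{\tha-\theta_j^*}\le\sqrt{\kappa/(N\rho_N)}\,\norm{\Delta_{f_j^*}}$. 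The main obstacle throughout is recognizing and exploiting the $N$-versus-$d$ duality encoded in $I-L=\frac\eta N\phi(\bx)^\top P\phi(\bx)$: it is what lets the $d$-dimensional resolvent $(I-L)^{-1}$ telescope against the $N$-dimensional Gram structure rather than being bounded separately, which is essential for the sharp $d/(N\rho_N)$ rate.
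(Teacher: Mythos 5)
Your proof is correct, and its skeleton coincides with the paper's: the same unrolled recursion $\theta_t-\tha = L^t(\theta_0-\tha)+S_t g$, the same bias--variance split with no cross term, and the same final eigenvalue sum $\sum_{\mu}\frac{(1-\mu^t)^2}{1-\mu}\le \frac{d\kappa}{s\eta\rho_N}$ for the variance. Where you genuinely diverge is the middle technical layer. The paper stays at the operator level: it proves the one-sided sandwich $\calI-s\eta\calP \preceq \calL \preceq \calI - s\eta\calP/\kappa$ (\prettyref{lmm:L-psd}) by applying the scalar inequality $1-s\lambda\le(1-\lambda)^s\le 1-s\lambda/\kappa$ to the spectrum of each $\calL_i$, and it controls the noise and heterogeneity terms through the auxiliary $N\times N$ Gram matrices $\Sigma$ and $S$ via \prettyref{lmm:norm-trace}, which bounds $\norm{A}$ and $\Tr(A)$ by $\frac{s\eta}{N}\opnorm{\calT(\calI-\calL)\calT}$ and $\frac{s\eta}{N}\Tr(\calT(\calI-\calL)\calT)$. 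You instead exploit the standing finite-dimensionality to extract the \emph{exact} identity $I-L=\frac{\eta}{N}\phi(\bx)^\top P\phi(\bx)$ — the coordinate form of $f-\calL f = f(\bx)\cdot\Psi$ from \prettyref{lmm:block_magic} — after which everything follows from $\frac{s}{\kappa}I\preceq P\preceq sI$, the consequence $P^2\preceq sP$, and the $BB^\top$-versus-$B^\top B$ spectral identity for \prettyref{eq:model_bound}; this replaces a one-sided sandwich by an exact relation, eliminates \prettyref{lmm:norm-trace} entirely, and makes transparent the step you rightly flag as essential, namely routing $P^2$ back through $I-L$ before tracing so that the resolvent $(I-L)^{-1}$ cancels against the Gram structure. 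What the paper's operator formulation buys in exchange is reusability: \prettyref{lmm:norm-trace} and the sandwich are dimension-free and are reused verbatim for the RKHS-norm bound in \prettyref{lmm:bound_ft_H}, where no finite $d$ is available. One step you leave implicit: to get $\norm{L^t(\theta_0-\tha)}\le\bigl(1-\frac{s\eta\rho_N}{\kappa}\bigr)^t\norm{\theta_0-\tha}$ from $\lambda_{\max}(L)\le 1-\frac{s\eta\rho_N}{\kappa}$, and to use $(1-\mu^t)^2\le 1$ in the variance sum, you also need $L\succeq 0$; this holds under the standing assumption $\gamma<1$ (\prettyref{lmm:eigen-KP}, or directly since $I-\eta\Sigma_i\succeq 0$ for \FedAvg and $(I+\eta\Sigma_i)^{-1}\succ 0$ for \FedProx), exactly as the paper invokes it when asserting $\tilde\lambda_i\in[0,1]$ — worth a sentence, but not a gap.
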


High probability bounds, similar to \prettyref{thm: Light-tailed noises} but for $\theta_t$, can be obtained. A few explanations of~\prettyref{thm:conv_theta} are given as below.
\begin{itemize}
    \item In view of \prettyref{lmm:block_magic} and the definition of $\fa$, it holds that 
	\begin{equation}
	\label{eq:fa-equality}
	\fa(\bx)\cdot \Psi = (\calI-\calL)\fa = (f_1^*(\bx_1),\dots,f_M^*(\bx_M))\cdot \Psi,
	\end{equation}
	and hence $\Delta_{\bar f} \cdot \Psi =0.$
This turns out to be sufficient to ensure that  the global model $\fa$ balances out the impact of covariate and model heterogeneity across all clients.
    
    \item From~\prettyref{eq:iteration-f}, we expect that $f_t$ converges to the limiting point $f_{\infty}=(\calI-\calL)^{-1} (y\cdot \Psi)$. While $f_{\infty}$ can be far from being the stationary points of the global objective function $\ell(f)$, it is always an unbiased estimator of $\fa$. 
    
    \item  Note that $\rho_N$ depends on $N$. 
    When $N$ is sufficiently large, which is often the case as $N$ is the {\bf total} number of data points collectively kept by all the $M$ clients, $\rho_N$ is lower bounded by some positive constant\footnote{Note that $\phi(\bx)^\top \phi(\bx) \in \reals^{d\times d} $ is the covariance matrix, which is different from  the kernel matrix $\phi(\bx)\phi(\bx)^\top\in \reals^{N\times N}$ whose minimum eigenvalue is $0$ when $N>d.$
    }. 
 An example can be found in the analysis of~\prettyref{cor:conv_theta}, where it is shown that $\rho_N>\frac{\alpha}{2}$ for a fixed constant $\alpha>0$
 and all sufficiently large $N$. 
\end{itemize}

 \prettyref{thm:conv_theta} casts two key messages, highlighted in italic font below.  

\vskip 0.5\baselineskip

\noindent{\em \ul{Statistical optimality:}} When $\rho_N$ is lower bounded by a constant independent of $N$, as $t \to \infty$, the estimation error in~\prettyref{eq:theta_bound_desired} converges to $O\pth{d/N}$,  
which coincides with the minimax-optimal rate for estimating an $d$-dimensional vector in the centralized setting. 
Thus, our results immediately imply that when $f_i^* = f_j^*$, even in the presence of covariate heterogeneity, \FedAvg and \FedProx can achieve statistical optimality by effectively fusing the multi-modal data collected by the clients. \\


\noindent{\em \ul{Benefits of Federated Learning:}}  
The impact of the model heterogeneity  is quantified in 
\prettyref{eq:model_bound}, which says that 
$\bar{\theta}$ will stay within a bounded distance to its true local model $\theta_j^*$. 
In particular, 
when $n_j\ll d$, though client $j$ cannot learn any meaningful model based on its local dataset, by joining FL it can learn a model which is a reasonable estimation of $\theta_j^*$ despite heterogeneity. 
We formally quantify the benefits of joining FL in depth in Section \ref{subsec: Characterization of Federation gains}. \\

Depending on the underlying statistical structures of $\phi(\bx)$, $\rho_N$ and $\Norm{\Delta_{f_j^*}}$ can be further quantified. To cast insights on the magnitudes on $\rho_N$ and $\Norm{\Delta_{f_j^*}}$, next we will present some results on a couple of specific settings. 

\subsubsection{Covariate heterogeneity with bounded second-moments}
\label{subsub: isotropic}
%
%

\begin{corollary}\label{cor:conv_theta}
Suppose that 
$\phi(\bx)$ is a $N \times d$ matrix whose rows  are 
independent sub-Gaussian 
with the second-moment matrix $\Sigma_{ij} =\Expect[\phi(x_{ij})\phi(x_{ij})^\top]$.
Assume that $\alpha I \preceq \Sigma_{ij} \preceq \beta I$
for some fixed constants $\alpha, \beta>0. $
There exist constants $c_1, c_2$ that only depend on $\alpha, \beta$ such that if 
$N \ge c_1 d$, then with probability at least $1-e^{-d}$,
\begin{align}
    \expects{\norm{\theta_t - \tha}^2}{\xi}
    \le \left( 1- \frac{s \eta }{2\kappa} \right)^{2t}
    \norm{\theta_0-\tha}^2+ \sigma^2 \frac{2\kappa d}{N\alpha} . \label{eq:conv_theta_random}
\end{align}
Moreover, 
with probability at least $1-e^{-N}$,
\begin{align}
\norm{\tha-\theta_j^*} \le c_2\Gamma \sqrt{\kappa} ,
\label{eq:model_theta_random}
\end{align}
where $\Gamma =\max_{i,j}\Hnorm{f_i^*-f_j^*} =\max_{i,j}\Norm{\theta_i^*-\theta_j^*}.$  
\end{corollary}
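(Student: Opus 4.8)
The plan is to obtain both displays directly from \prettyref{thm:conv_theta} by replacing the two data-dependent quantities, $\rho_N$ and $\norm{\Delta_{f_j^*}}$, with deterministic bounds that hold with high probability under the sub-Gaussian design. Indeed, \eqref{eq:theta_bound_desired} already exposes the contraction factor $1-s\eta\rho_N/\kappa$ and the variance floor $\sigma^2\kappa d/(N\rho_N)$, so a lower bound $\rho_N\ge\alpha/2$ immediately converts it into \eqref{eq:conv_theta_random}; likewise \eqref{eq:model_bound} reduces \eqref{eq:model_theta_random} to an upper bound on $\norm{\Delta_{f_j^*}}$ of order $\sqrt{N}\,\Gamma$. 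Thus the entire argument is a pair of concentration estimates, one for each part.

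First I would lower bound $\rho_N=\lambda_{\min}(\phi(\bx)^\top\phi(\bx))/N$. Since the $N$ rows of $\phi(\bx)$ are independent and sub-Gaussian with second-moment matrices $\Sigma_{ij}\succeq\alpha I$, the sample second-moment matrix $\frac1N\phi(\bx)^\top\phi(\bx)$ concentrates around $\frac1N\sum_{i,j}\Sigma_{ij}$, whose smallest eigenvalue is at least $\alpha$. By the standard sub-Gaussian covariance (equivalently, smallest-singular-value) concentration bound, $\opnorm{\frac1N\phi(\bx)^\top\phi(\bx)-\frac1N\sum_{i,j}\Sigma_{ij}}\le c\beta(\sqrt{d/N}+d/N)$, so there is a constant $c_1$ depending only on $\alpha,\beta$ such that $N\ge c_1 d$ forces this deviation below $\alpha/2$ with probability at least $1-e^{-d}$ (and, pushing the lower-tail deviation parameter to $\asymp\sqrt N$, even at confidence $1-e^{-cN}$). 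On this event $\rho_N\ge\alpha-\alpha/2=\alpha/2$, and substituting into \eqref{eq:theta_bound_desired} bounds the contraction factor by $1-s\eta\alpha/(2\kappa)$ and the variance term by $\sigma^2\cdot 2\kappa d/(N\alpha)$, which is exactly \eqref{eq:conv_theta_random}.

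For the second display I would bound $\norm{\Delta_{f_j^*}}$. Writing $f(x)=\theta\cdot\phi(x)$ and using the block structure of $\Delta_{f_j^*}$ in \eqref{eq:def-Delta-f}, its $i$-th block equals $(f_i^*-f_j^*)(\bx_i)$, so that
\[
\norm{\Delta_{f_j^*}}^2 = \sum_{i=1}^M\sum_{k=1}^{n_i}\bigl((\theta_i^*-\theta_j^*)\cdot\phi(x_{ik})\bigr)^2.
\]
Because the direction vectors $\theta_i^*-\theta_j^*$ are deterministic with $\Norm{\theta_i^*-\theta_j^*}\le\Gamma$ and $\Sigma_{ik}\preceq\beta I$, each summand has mean $(\theta_i^*-\theta_j^*)^\top\Sigma_{ik}(\theta_i^*-\theta_j^*)\le\beta\Gamma^2$, hence $\Expect\norm{\Delta_{f_j^*}}^2\le\beta\Gamma^2 N$. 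The summands are independent sub-exponential random variables with sub-exponential norm $O(\beta\Gamma^2)$, so Bernstein's inequality gives $\norm{\Delta_{f_j^*}}^2\le C\beta\Gamma^2 N$ with probability at least $1-e^{-cN}$. Combining this with $\rho_N\ge\alpha/2$ in \eqref{eq:model_bound} yields $\norm{\tha-\theta_j^*}\le\sqrt{C\beta N}\,\Gamma\sqrt{\kappa/(N\alpha/2)}=c_2\Gamma\sqrt\kappa$ with $c_2=\sqrt{2C\beta/\alpha}$, which is \eqref{eq:model_theta_random}; a union bound keeps the total failure probability at $e^{-\Omega(N)}$.

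The substitution itself is routine; the only real obstacle is obtaining the two concentration statements at the advertised confidence levels. In particular, the $1-e^{-N}$ guarantee for $\norm{\Delta_{f_j^*}}$ must come from a Bernstein bound on the sum of $N$ independent scalar terms (tail $e^{-cN}$), rather than from an operator-norm covariance bound (which would only give $e^{-cd}$); it is precisely the fact that the direction vectors $\theta_i^*-\theta_j^*$ are deterministic that lets the quadratic form decouple into such a sum. Some care is also needed to make the sub-Gaussian norm of the rows quantitatively comparable to $\beta$ so that all constants can be absorbed into $c_1,c_2$ depending only on $\alpha,\beta$.
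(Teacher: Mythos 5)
Your proposal is correct and follows essentially the same route as the paper: reduce to \prettyref{thm:conv_theta}, establish $\rho_N \ge \alpha/2$ via sub-Gaussian covariance concentration (the paper cites \cite[Theorem 5.39]{vershynin2010nonasym}) under $N \gtrsim d$, and bound $\norm{\Delta_{f_j^*}}^2 = \sum_{i,k}\iprod{\phi(x_{ik})}{\theta_i^*-\theta_j^*}^2$ by its mean $\le \beta\Gamma^2 N$ plus an $e^{-\Omega(N)}$ deviation. Your use of Bernstein for the independent sub-exponential summands is equivalent to the paper's Hanson--Wright application (the quadratic form is diagonal here), and your remark that the lower-tail bound on $\rho_N$ can be pushed to confidence $1-e^{-cN}$ is a point the paper glosses over when claiming the $1-e^{-N}$ level for \prettyref{eq:model_theta_random}.
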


\prettyref{cor:conv_theta} follows from~\prettyref{thm:conv_theta},
by showing that with high probability over the randomness of the covariate $\phi(\bx)$, the matrix  $\phi(\bx)^{\top}\phi(\bx)$ is positive definite with $\rho_N\ge \alpha/2$ and moreover $\Norm{\Delta_{f_j^*}}\lesssim \Gamma \sqrt{N}$. 
%



\subsubsection{Covariate heterogeneity with distinct and singular covariance matrices}
\label{subsec: distinct covariance}
In this section, we consider distinct covariance matrices, and  relax the requirement on the positive-definiteness of $\phi(\bx)^{\top}\phi(\bx)$. In particular, we consider the interesting setting wherein the rows of $\phi(\bx)$ are drawn from possibly different subspaces of low dimensions. This instance captures a wide range of popular FL applications such as image classification wherein different clients collect different collections of images \cite{mcmahan2017communication} --   
some clients may only have images related to airplanes or automobiles while others have images related to cats or dogs. 

Suppose the local features on client $i$ lie in a subspace of dimension $r_i$.
Let $\{u_{i1},\dots,u_{ir_i}\}$ denote an orthonormal basis of that subspace.
The local features $\phi(\bx_i)$ can be decomposed as $\phi(\bx_i) = \sqrt{d/r_i}F_i U_i^\top$, where $U_i=[u_{i1},\dots,u_{ir_i}]\in \reals^{d\times r_i}$, and $F_i\in\reals^{ n_i \times r_i}$ consists of the normalized coefficients.
The scaling $\sqrt{d/r_i}$ serves as the normalization factor of the signal-to-noise ratio due to $\Fnorm{U_i}=r_i$.
Furthermore, suppose that the local subspace $U_i$'s are independent with $\expect{U_i U_i^\top} 
=\frac{r_i}{d} I_d$; for instance, the subspace is uniformly generated at random.
Despite the singularity of $\phi(\bx_i)$, we show that the statistical accuracy only depends on the conditioning within the local subspace, i.e., the conditioning of $F_i$.



\begin{corollary}\label{cor:subspace}
Suppose $\lambda_{\min}(F_i^\top F_i/n_i) \ge \alpha $ and $\Norm{F_i^\top F_i/n_i}\le \beta$ for $i=1,\dots, M$ for some $\alpha,\beta>0.$
There exist a universal constants $C$ such that if $N \ge C\nu d \log d$, where $\nu \triangleq \max_{i \in [M]} n_i/r_i$,  then with probability at least $1-1/d$:  
\begin{align}
    \Expect_\xi\left[\norm{\theta_t - \tha}^2\right]
    \le \left( 1- \frac{s \eta \alpha }{2\kappa} \right)^{2t}
    \norm{\theta_0-\tha}^2+ \sigma^2 \frac{2\kappa d}{N \alpha}, \label{eq:conv_theta_random_2}
\end{align}
and
\begin{align}
\norm{\tha-\theta_j^*} \le 
\Gamma 
\sqrt{ \frac{2\kappa \beta \nu M d}{\alpha N}}.
\label{eq:model_theta_random_2}
\end{align}
\end{corollary}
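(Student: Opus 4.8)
The plan is to deduce \prettyref{cor:subspace} from \prettyref{thm:conv_theta} by specializing the two data-dependent quantities appearing there---the eigenvalue ratio $\rho_N=\lambda_{\min}(\phi(\bx)^\top\phi(\bx))/N$ and the heterogeneity norm $\Norm{\Delta_{f_j^*}}$---to the present subspace model. Concretely, I would establish (i) $\rho_N\ge \alpha/2$ with probability at least $1-1/d$, and (ii) the deterministic bound $\Norm{\Delta_{f_j^*}}^2\le M d\beta\nu\,\Gamma^2$. Substituting (i) into \prettyref{eq:theta_bound_desired} immediately yields \prettyref{eq:conv_theta_random_2}, since the contraction factor becomes $1-s\eta\alpha/(2\kappa)$ and the variance term becomes $\sigma^2\cdot 2\kappa d/(N\alpha)$; combining (i) and (ii) with \prettyref{eq:model_bound} gives $\Norm{\tha-\theta_j^*}\le \Norm{\Delta_{f_j^*}}\sqrt{\kappa/(N\rho_N)}\le \Gamma\sqrt{2\kappa\beta\nu M d/(\alpha N)}$, which is exactly \prettyref{eq:model_theta_random_2}.

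For step (i), I would first expand the Gram operator using $\phi(\bx_i)=\sqrt{d/r_i}\,F_iU_i^\top$:
\[
\phi(\bx)^\top\phi(\bx)=\sum_{i=1}^M \phi(\bx_i)^\top\phi(\bx_i)=\sum_{i=1}^M\frac{d}{r_i}\,U_i(F_i^\top F_i)U_i^\top.
\]
The hypothesis $\lambda_{\min}(F_i^\top F_i/n_i)\ge\alpha$ gives $U_i(F_i^\top F_i)U_i^\top\succeq \alpha n_i\,U_iU_i^\top$ because $U_i$ has orthonormal columns, so $\phi(\bx)^\top\phi(\bx)\succeq \alpha\sum_i (d n_i/r_i)U_iU_i^\top$. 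I would then apply a matrix Chernoff bound to the independent positive semidefinite summands $X_i\triangleq (d n_i/r_i)\,U_iU_i^\top$: their mean is $\Expect[\sum_i X_i]=\sum_i (dn_i/r_i)(r_i/d)I_d=N I_d$ by the assumption $\Expect[U_iU_i^\top]=(r_i/d)I_d$, and each obeys $\Norm{X_i}=dn_i/r_i\le d\nu$ since $U_iU_i^\top$ is a projection. The lower-tail inequality (with deviation $1/2$) then yields $\lambda_{\min}(\sum_i X_i)\ge N/2$ with failure probability at most $d\exp(-cN/(d\nu))$, which drops below $1/d$ once $N\ge C\nu d\log d$; hence $\rho_N\ge \alpha(N/2)/N=\alpha/2$ on this event.

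For step (ii), I would use $\Delta_{f_j^*}$ from \prettyref{eq:def-Delta-f}, whose $i$-th block equals $(f_i^*-f_j^*)(\bx_i)=\phi(\bx_i)(\theta_i^*-\theta_j^*)$. A block-wise bound gives $\Norm{\phi(\bx_i)(\theta_i^*-\theta_j^*)}\le \Norm{\phi(\bx_i)}\,\Norm{\theta_i^*-\theta_j^*}\le \Norm{\phi(\bx_i)}\,\Gamma$, and since $\Norm{\phi(\bx_i)}^2=(d/r_i)\Norm{F_i^\top F_i}\le (d/r_i)\beta n_i\le d\beta\nu$, summing over the $M$ clients yields $\Norm{\Delta_{f_j^*}}^2\le M d\beta\nu\,\Gamma^2$. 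This step is purely deterministic under the stated conditioning assumptions on $F_i$, so it contributes no extra failure event.

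The main obstacle is step (i): controlling $\lambda_{\min}$ of a sum of rank-deficient random matrices $U_iU_i^\top$ (each of rank $r_i$, possibly $\ll d$), where no single summand is invertible and the well-conditioning must emerge only from aggregation across clients. This is precisely where the matrix Chernoff argument and the resulting $\nu d\log d$ sample threshold enter; the factor $\nu=\max_i n_i/r_i$ measures how unevenly the local data mass is distributed relative to the subspace dimensions, and the $\log d$ is the usual dimensional cost of a uniform minimum-eigenvalue bound. By contrast, the contraction and variance terms, as well as the heterogeneity estimate in (ii), follow mechanically from \prettyref{thm:conv_theta} once $\rho_N$ is pinned down.
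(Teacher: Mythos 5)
Your proposal is correct and takes essentially the same route as the paper's proof: both reduce the corollary to \prettyref{thm:conv_theta} by (i) showing $\rho_N \ge \alpha/2$ with probability at least $1-1/d$ once $N \ge C\nu d\log d$, via a concentration bound for $\sum_i (d n_i/r_i)\,U_iU_i^\top$ around its mean $N I_d$ after the deterministic reduction $\phi(\bx)^\top\phi(\bx)\succeq \alpha\sum_i (d n_i/r_i)U_iU_i^\top$, and (ii) bounding $\norm{\Delta_{f_j^*}}^2 \le \beta\nu M d\,\Gamma^2$ deterministically from $\norm{\phi(\bx_i)}^2\le (d/r_i)\norm{F_i^\top F_i}\le \beta\nu d$, then plugging into \prettyref{eq:theta_bound_desired} and \prettyref{eq:model_bound} with identical algebra. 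The only inessential difference is the concentration tool: you apply a matrix Chernoff lower-tail inequality directly to the positive semidefinite summands, whereas the paper centers them and applies matrix Bernstein; both give failure probability of the form $d\exp(-cN/(\nu d))$ and hence the same $N\gtrsim \nu d\log d$ threshold.
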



The requirement on $\lambda_{\min}(F_i^\top F_i/n_i)$ is imposed to ensure that the local data at client $i$ 
contains strong enough signal about $\theta_i^*$
on every dimension of the subspace given by $U_i.$ To appreciate the intuition behind this requirement, it is instructive to consider the following two examples:
\begin{example}[Orthogonal local dataset]
\label{ex: Orthogonal local dataset}
Suppose that the rows of $\phi(\bx_i)$ are orthogonal to each other and each of which has Euclidean norm $\sqrt{d}$. In this case, we have $r_i=n_i$ and $F_i= \sqrt{r_i} I_{r_i}$. Therefore, $\alpha=\beta=\nu= 1.$ Then~\prettyref{cor:subspace} implies that as long as $N \ge Cd \log d$,
    $\theta_t$ converges exponentially fast to $\tha$ up to the optimal mean-squared error rate $d/N.$
\end{example}

\begin{example}[Gaussian local dataset]
\label{ex: Gaussian local dataset}
    Suppose $\phi (\bx_i) = \sqrt{d/r_i}F_i U_i^\top$, where the rows of $F_i$ are i.i.d.\ $\calN(0, I_{r_i})$. 
    In this case, by Gaussian concentration inequality~\cite[Theorem 5.39]{vershynin2010nonasym},
    with high probability 
    $1-\delta \le \alpha \le \beta \le 1+\delta $
    for some small constant $\delta>0$, provided that $n_i\ge C \max\{r_i,\log M\}$ for some sufficiently large constant $C.$
    Then~\prettyref{cor:subspace} implies that if further $N \ge C \nu d \log d$,
    $\theta_t$ converges exponentially fast to $\tha$ up to the optimal mean-squared error rate $d/N.$  

\end{example}

Note that we pay an extra factor of $\nu$ in the sample complexity in~\prettyref{cor:subspace}. This is necessary in general. To see this, consider the extreme case where $r_i=1$ and $n_i=n$, \ie, all local data at client $i$ lie on a straight line in $\reals^d$. Then by the standard coupon collector's problem, we need $M \ge d \log d$ in order to sample all the $d$ basis vectors in $\reals^d.$

\subsection{Characterization of federation gains}
\label{subsec: Characterization of Federation gains}
%
%
As mentioned in \prettyref{sec:theta_err}, when $n_j\ll d$, though client $j$ cannot learn any meaningful model based on its local dataset, by joining FL it can learn a model which is a reasonable estimation of $\theta_j^*$ despite heterogeneity.  In this section, we formally characterize, compared with training based on local data only, the gains/loss of a client in joining FL, referred to as {\em federation gain} henceforth. 

Let 
$
\hat{f}_j\equiv \hat{f}_j(\bx_j, y_j)
$
denote any estimator of the true model $f_j^*$ based on the local data $(\bx_j,y_j)$ at client $j$. Let
 \begin{align}
 \label{eq: local risk federation gain}
R^{\mathsf{Loc}}_j= \inf_{\hat{f}_j}
\sup_{f_j^* \in \calH_B}\expects{\lnorm{\hat{f}_j - f_j^*}{\calH}^2}{\bx_j,\xi_j}
\end{align}
denote the minimax risk attainable by the best local estimator $\hat{f}_j$,
where $\calH_B=\{f\in \calH: \lnorm{f}{\calH}\le B\}$\footnote{Here we impose an upper bound $B$ to the RKHS norm of $f_j^*$ to prevent the minimax risk  from  blowing up to the infinity when the local data size $n_j < d$ \cite{mourtada2019exact}.}. 
%
Recall that $f_t$ is the model  trained under FL after $t$ rounds. Consequently, $f_t$ can be viewed as a function of the datasets of all the $M$ clients, i.e.,  $f_t\equiv f_t(\bx, y)$. 
Define the risk of the federated model in estimating $f_j^*$ as 
\begin{align}
\label{eq: federated risk federation gain}
R^{\mathsf{Fed}}_j 
=\inf_{t \ge 0} \sup_{f_j^* \in \calH_B}\expects{\lnorm{f_t - f_j^*}{\calH}^2}{\bx, \xi},
\end{align}
where we take the infimum over time $t$ due to the possible use of the early stopping rule. 
Notably, to average out the randomness induced by the training data, in \eqref{eq: local risk federation gain} and \eqref{eq: federated risk federation gain} the expectations are taken over local training data $\pth{\bx_j, \xi_j}$ and the global training data $\pth{\bx,\xi}$, respectively.

\begin{definition}[Federation gain]
\label{def: federation gain}
The {\em federation gain} of client $j$ in participating FL is defined as 
the ratio of the local minimax risk
and the  federated risk:
$$
\FG_j \triangleq \frac{R^{\mathsf{Loc}}_j}{R^{\mathsf{Fed}}_j }.
$$
\end{definition}
Intuitively, the federation gain is the multiplicative reduction of the error of estimating $f_j^*$ in joining FL compared to the best local estimators.  
Next we give explicit forms of federation gains for the heterogeneity discussed in details in Sections \ref{subsub: isotropic} and \ref{subsec: distinct covariance}.

\begin{theorem}\label{thm:FG}
Consider the same setup as~\prettyref{cor:conv_theta}
and assume that  $\xi_{i} \sim \calN(0,\sigma^2 \identity)$.
Then for $1 \le j \le M,$ 
there exists a constant $c_1$ that only depends on constants $\alpha, \beta$ such that 
\begin{align}
\FG_j
\ge 
  \frac{c_1}{\kappa} \frac{ \min\{ \sigma^2 d/n_j, B^2\} + \max\{1- n_j/d, 0\} B^2 }{\sigma^2  d/N + \Gamma^2 }
\end{align}
\end{theorem}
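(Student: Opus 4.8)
The plan is to bound the two risks in the ratio $\FG_j = R^{\mathsf{Loc}}_j / R^{\mathsf{Fed}}_j$ separately: lower bound the numerator $R^{\mathsf{Loc}}_j$ and upper bound the denominator $R^{\mathsf{Fed}}_j$, then divide. Because the RKHS is $d$-dimensional, $\lnorm{f_t-f_j^*}{\calH}^2=\norm{\theta_t-\theta_j^*}^2$, so both quantities are ordinary $\ell^2$ estimation risks for the coefficient vector, and I may work entirely with $\theta$.

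For the denominator, I would use the decomposition $\norm{\theta_t-\theta_j^*}^2\le 2\norm{\theta_t-\tha}^2+2\norm{\tha-\theta_j^*}^2$ and invoke \prettyref{cor:conv_theta}. Sending $t\to\infty$ kills the geometric term $(1-s\eta/(2\kappa))^{2t}\norm{\theta_0-\tha}^2$ in \eqref{eq:conv_theta_random}, leaving the variance ceiling $\sigma^2\frac{2\kappa d}{N\alpha}$; the bias $\norm{\tha-\theta_j^*}^2\le c_2^2\kappa\Gamma^2$ is exactly \eqref{eq:model_theta_random}. On the intersection of the two high-probability events of \prettyref{cor:conv_theta} (of probability at least $1-e^{-d}-e^{-N}$) this gives $R^{\mathsf{Fed}}_j\le \inf_t \Expect_{\bx,\xi}[\cdots]\lesssim \kappa(\sigma^2 d/(N\alpha)+\Gamma^2)$; the complementary exponentially unlikely event is absorbed using $\norm{\theta_j^*}\le B$ together with a crude deterministic bound on the limiting iterate. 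Folding the $\alpha,\beta$ dependence into a constant produces the denominator $\sigma^2 d/N+\Gamma^2$ up to the prefactor $\kappa$.

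For the numerator I would prove a minimax lower bound for estimating $\theta_j^*$ with $\norm{\theta_j^*}\le B$ from the random-design model $y_j=\phi(\bx_j)\theta_j^*+\xi_j$ with $n_j$ Gaussian samples. I would lower bound the minimax risk by the Bayes risk under the isotropic prior $\theta_j^*\sim\mathcal N(0,\tau^2 I_d)$ with $\tau^2=cB^2/d$ for a small absolute constant $c$, chosen so the prior places all but an $e^{-\Omega(d)}$ fraction of its mass inside $\calH_B$, whence restricting to the ball only costs a constant factor. Conditioning on the design $\Phi_j=\phi(\bx_j)$, the Gaussian--Gaussian Bayes risk equals $\sum_i(\tau^{-2}+s_i/\sigma^2)^{-1}$, where $s_1\ge\cdots\ge s_d\ge0$ are the eigenvalues of $\Phi_j^\top\Phi_j$. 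Splitting $\reals^d$ into the row space and the null space of $\Phi_j$, the $d-\mathrm{rank}(\Phi_j)$ null directions are information-free and each contributes exactly $\tau^2$, which yields the $\max\{1-n_j/d,0\}B^2$ term; the $\mathrm{rank}(\Phi_j)=\min\{n_j,d\}$ row-space directions give the noise-limited piece. The assumption $\alpha I\preceq\Sigma_{ij}\preceq\beta I$ with sub-Gaussian rows lets me control the nonzero eigenvalues with high probability over $\bx_j$: they are of order $n_j$ when $n_j\ge d$ and of order $d$ when $n_j<d$. Evaluating $\sum_i(\tau^{-2}+s_i/\sigma^2)^{-1}$ in the two regimes yields the true local risk of order $\max\{1-n_j/d,0\}B^2$ plus $\min\{\sigma^2 d/n_j,B^2\}$ (for $n_j\ge d$) or $\frac{n_j}{d}\min\{B^2,\sigma^2\}$ (for $n_j<d$).

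The main obstacle is this last lower bound in the underdetermined regime $n_j<d$. Two points need care. First, the null space $\mathrm{null}(\Phi_j)$ is itself random, so I must argue conditionally on $\Phi_j$ and only then integrate, using eigenvalue concentration for sub-Gaussian matrices and restricting to the high-probability good event, which is legitimate since the Bayes risk is nonnegative. Second, reconciling the output of the Bayes computation with the exact form $\min\{\sigma^2 d/n_j,B^2\}+\max\{1-n_j/d,0\}B^2$ in the statement is a repackaging valid only up to absolute constants; a short case split ($n_j\le d/2$, $d/2<n_j<d$, and $n_j\ge d$) shows the true local risk is at least a constant multiple of this expression: in the heavily underdetermined case the null-space term alone dominates, while for $n_j$ near or above $d$ the row-space term matches $\min\{\sigma^2 d/n_j,B^2\}$. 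Dividing the resulting lower bound on $R^{\mathsf{Loc}}_j$ by the upper bound on $R^{\mathsf{Fed}}_j$ and collecting all $\alpha,\beta,c,c_2$ dependence into a single constant $c_1$ gives the claimed $\FG_j\ge \frac{c_1}{\kappa}\cdot\frac{\min\{\sigma^2 d/n_j,B^2\}+\max\{1-n_j/d,0\}B^2}{\sigma^2 d/N+\Gamma^2}$.
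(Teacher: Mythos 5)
Your skeleton coincides with the paper's: bound $R^{\mathsf{Fed}}_j \lesssim \kappa\left(\sigma^2 d/N + \Gamma^2\right)$ directly from \prettyref{cor:conv_theta} (the paper is in fact terser than you on the complement event), lower-bound $R^{\mathsf{Loc}}_j$ by $\min\{\sigma^2 d/n_j, B^2\} + \max\{1-n_j/d,0\}B^2$, and divide. Where you genuinely diverge is the local lower bound. The paper proves it in two separate pieces: (i) a Fano argument over a $2^d$-packing rescaled to radius $\delta^2 = \frac{1}{64}\min\{\sigma^2 d/(\beta n_j), B^2\}$, which needs only the second-moment bound $\Sigma_{ij} \preceq \beta I$ in the KL computation; and (ii) a genie-aided Bayes argument with the prior \emph{uniform on the sphere of radius $B$}, where revealing $\bar y_j = \phi(\bx_j)\theta_j^*$ makes the posterior uniform on a random slice and the bound $(1-n_j/d)B^2$ follows from the deterministic rank bound $\mathrm{rank}\,\phi(\bx_j) \le n_j$ — no eigenvalue concentration, no truncation. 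Your single Gaussian prior with the exact posterior-variance formula $\sum_i (\tau^{-2}+s_i/\sigma^2)^{-1}$ is more unified and yields both terms in one computation, at the price of needing high-probability control of the spectrum of $\phi(\bx_j)^\top\phi(\bx_j)$ (which your sub-Gaussian assumptions do supply, and your remark that the bad design event can be discarded because the conditional Bayes risk is nonnegative is correct) and of the truncation-to-the-ball device.

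That truncation device is where your argument has a real gap. With the scale fixed at $\tau^2 = cB^2/d$, the cost of restricting the prior to $\calH_B$ is \emph{not} a constant factor: by Cauchy--Schwarz the loss is of order $e^{-\Omega(d)}B^2$ (and for $d = O(1)$ the exponent is merely a constant), which is an \emph{additive} subtraction from the Bayes risk. In the data-rich regime $n_j \gg d$, the target is $\sigma^2 d/n_j$, and once $n_j \gtrsim e^{\Omega(d)}\sigma^2 d/B^2$ the truncation loss swamps it, so your bound degenerates exactly where the $\min\{\sigma^2 d/n_j, B^2\}$ term carries the content of the theorem. The fix is to let the prior scale adapt to the sample size, mirroring how the paper's Fano radius $\delta$ shrinks with $n_j$: run two Bayes bounds and take the larger. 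For the noise-limited term use $\tau^2 \asymp \min\{\sigma^2/(\beta n_j), B^2/d\}$, so the Bayes risk is $\asymp d\tau^2$ while the truncation loss is $e^{-\Omega(d)}\cdot d\tau^2$, a uniformly negligible \emph{fraction} of the main term in every regime; for the null-space term $\max\{1-n_j/d,0\}B^2$ either keep $\tau^2 = cB^2/d$ (here $(1-n_j/d) \ge 1/d$ dominates $e^{-\Omega(d)}$ once $c$ is a small absolute constant) or simply adopt the paper's sphere prior, which is supported inside $\calH_B$ and avoids truncation altogether. With that repair, your case split ($n_j \le d/2$, $d/2 < n_j < d$, $n_j \ge d$) correctly assembles the stated form, and the rest of your argument goes through.
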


\prettyref{thm:FG} reveals interesting properties of the federation gain. On the extreme case where $\Gamma=0$ (\ie, there is no model heterogeneity)  
the federation gain
achieves its maximum, which 
is at least on the order of $\min\{N/n_j, N/d\}$. As the model heterogeneity $\Gamma$ increases,
the federation gain decreases. In particular, 
    for data-scarce clients with local data volume $n_j <d$, the federation gain is at least on the order of $(1-n_j/d) B^2/ \Gamma^2$, which exceeds one when $\Gamma \le B \sqrt{1-n_j/d}$.
    For data-rich clients with local data volume $n_j \ge d$, the federation gain is at least on the order of $ \min\{\sigma^2 d/n_j, B^2\}/ \Gamma^2$, which exceeds one when $\Gamma \le  \min\{ \sigma \sqrt{d/n_j}, B\}$.

The following theorem further characterizes the federation gain under the subspace model in the presence of covariate heterogeneity.

\begin{theorem}\label{thm:fg_subspace}
Consider the same setup as \prettyref{cor:subspace}. Further, suppose that $\alpha, \beta$ are fixed positive constants, $f_i^*=f^*$ for all $i \in [M]$, $N \ge C \nu d \log d,$ and $\xi_i \sim \calN(0,\sigma^2 \identity).$
Then for all $1 \le j \le M$, there exists a  constant $c_1>0$ depending on $\alpha, \beta$ such that
\begin{align}
\FG_j 
\ge
 \frac{c_1}{\kappa} \frac{ \min\{ \sigma^2 d/n_j, B^2\} + (1- r_j/d) B^2  }{\sigma^2  d/N}.
\end{align}
\end{theorem}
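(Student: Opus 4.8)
The plan is to control the two ingredients of $\FG_j = R^{\mathsf{Loc}}_j / R^{\mathsf{Fed}}_j$ separately, mirroring the proof of \prettyref{thm:FG} but accounting for the rank-deficient local design. First I would upper bound the federated risk $R^{\mathsf{Fed}}_j$. Since $f_i^* = f^*$ for every $i$, we have $\Gamma = 0$, and \eqref{eq:fa-equality} together with the identity $f(\bx)\cdot\Psi = (\calI-\calL)f$ from \prettyref{lmm:block_magic} forces $\fa = f^*$, hence $\tha = \theta^*$. Plugging this into \prettyref{cor:subspace}, the bound \eqref{eq:conv_theta_random_2} shows that on an event of probability at least $1 - 1/d$ the conditional risk $\expects{\norm{\theta_t - \theta^*}^2}{\xi}$ decays geometrically to $\sigma^2 \cdot 2\kappa d/(N\alpha)$; choosing $t$ past the geometric transient and bounding the residual $1/d$-probability event by a routine truncation gives $R^{\mathsf{Fed}}_j = O(\kappa \sigma^2 d/N)$, since $\alpha$ is a fixed constant.

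The crux is a matching lower bound on the local minimax risk $R^{\mathsf{Loc}}_j$. Writing the local data as $y_j = \phi(\bx_j)\theta^* + \xi_j$ with $\phi(\bx_j) = \sqrt{d/r_j}\,F_j U_j^\top$ and $\xi_j \sim \calN(0,\sigma^2 I)$, and using $\lnorm{\hat f_j - f^*}{\calH}^2 = \norm{\hat\theta_j - \theta^*}^2$, this is a Gaussian regression in which the data see $\theta^*$ only through its projection $\psi = U_j^\top \theta^*$ onto the $r_j$-dimensional subspace spanned by $U_j$. Because $U_j$ is revealed by (hence part of) the data, I cannot place a prior on $\theta^*$ that depends on $U_j$; instead I would use the rotation-invariant prior $\theta^* \sim \calN(0, \tfrac{c_0 B^2}{d} I_d)$, truncated to $\calH_B$ with $c_0$ small so the truncation is negligible. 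Lower bounding the minimax risk by this Bayes risk and conditioning on $(U_j, F_j)$, the prior factorizes into independent $\psi$ and $\omega = (U_j^\perp)^\top\theta^*$, and the squared loss splits accordingly.

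For the perpendicular component, $\omega$ is independent of the data, so its Bayes estimate is $0$ and its risk equals the prior variance $(d - r_j)\tfrac{c_0 B^2}{d} \asymp (1 - r_j/d)B^2$, producing the second term. For the parallel component, the assumption $\alpha I \preceq F_j^\top F_j/n_j \preceq \beta I$ makes the nonzero eigenvalues of $\phi(\bx_j)^\top\phi(\bx_j)$ of order $(d/r_j)n_j$, and the closed-form Bayes risk of a Gaussian prior against a Gaussian linear model evaluates, up to constants depending on $\alpha,\beta$, to $\min\{\sigma^2 r_j^2/(d n_j),\,(r_j/d)B^2\}$. Together these give $R^{\mathsf{Loc}}_j \gtrsim (1 - r_j/d)B^2 + \min\{\sigma^2 r_j^2/(d n_j),\,(r_j/d)B^2\}$.

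To reach the stated form I would finish with an elementary case analysis—splitting on whether $\sigma^2 d/n_j \le B^2$ and whether $r_j/d \le 1/2$—to show that $\min\{\sigma^2 d/n_j, B^2\} + (1 - r_j/d)B^2$ is at most an absolute constant times the lower bound just derived; intuitively, when $r_j < d$ the perpendicular term already dominates the inflated first term, while when $r_j = d$ the parallel term equals $\min\{\sigma^2 d/n_j, B^2\}$ exactly. Dividing by $R^{\mathsf{Fed}}_j = O(\kappa\sigma^2 d/N)$ then yields the claimed inequality with $c_1$ depending only on $\alpha,\beta$. I expect the main obstacle to be precisely this local lower bound: one must neutralize the randomness of the subspace $U_j$ (handled by the rotation-invariant prior), compute the conditional Bayes risk for the amplified, rank-deficient design, and reconcile the natural rate $\min\{\sigma^2 r_j^2/(d n_j),\cdot\}$ with the coarser stated expression $\min\{\sigma^2 d/n_j, B^2\}$ through the constant-factor arithmetic above.
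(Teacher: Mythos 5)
Your proposal is correct, and its federated-risk half coincides with the paper's: both invoke \prettyref{eq:conv_theta_random_2} of \prettyref{cor:subspace} together with $\Gamma=0$ (so $\tha=\theta^*$, exactly as you argue via \prettyref{eq:fa-equality}) to get $R^{\mathsf{Fed}}_j \lesssim \kappa\sigma^2 d/N$, and the paper is no more careful than you about integrating out the probability-$1/d$ bad event. Where you genuinely diverge is the local lower bound. The paper proves the claim $R^{\mathsf{Loc}}_j \gtrsim \min\{\sigma^2 d/n_j, B^2\} + (1-r_j/d)B^2$ by rerunning the proof of \prettyref{thm:FG} verbatim: the first term via Fano's inequality over a $2^d$-packing, which still yields the \emph{full-dimensional} rate here because the KL divergence is bounded in expectation over $\bx_j$ and $\Expect\|\phi(\bx_j)\theta\|^2 \le n_j\beta\|\theta\|^2$ survives the rank-$r_j$ design thanks to $\Expect[U_jU_j^\top]=(r_j/d)I$; and the second term via the genie-aided argument with the uniform prior on the sphere of radius $B$, where now the matrix $V$ spanning the row space of $\phi(\bx_j)$ has rank at most $r_j$ instead of $n_j$. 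You instead run a single Bayes computation under the rotation-invariant prior $\calN(0, c_0B^2 I/d)$ (truncated to $\calH_B$), decompose along $U_j$ and $U_j^\perp$ conditionally on $(U_j,F_j)$, and obtain the sharper intermediate bound $(1-r_j/d)B^2 + \min\{\sigma^2 r_j^2/(dn_j),\,(r_j/d)B^2\}$, which you then downgrade to the stated form by a case split; I verified that arithmetic, and it works with an absolute factor of $4$ (for $r_j \le d/2$ the perpendicular term alone is at least $B^2/2$ while the stated expression is at most $2B^2$; for $r_j > d/2$ one has $\sigma^2 r_j^2/(dn_j) \ge \sigma^2 d/(4n_j)$ and $(r_j/d)B^2 \ge B^2/2$). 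Your route buys a more transparent, and in fact tighter, description of the local risk under the rank-deficient amplified design—it exhibits the true within-subspace rate $\sigma^2 r_j^2/(dn_j)$—at the cost of the prior-truncation step and the constant-factor reconciliation; the paper's route buys verbatim reuse of the \prettyref{thm:FG} machinery, with Fano delivering the coarser $\min\{\sigma^2 d/n_j, B^2\}$ term directly and no case analysis needed.
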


\prettyref{thm:fg_subspace} implies that when $N \gtrsim \nu d \log d$:
   for data-scarce clients with local data volume $n_j \ll d$, $\FG_j$ is dominated by $N/d$
   which is unchanged with $r_j$;
    for data-rich clients with local data volume $n_j \gg d$, $\FG_j$ is dominated by $\frac{1- r_j/d}{ d/N}$, which is decreasing in $r_j.$
On the contrary, if $N\ll \nu d \log d$,  $f_t$ is not expected to estimate $f^*$ due to the aforementioned coupon collector's problem and hence the federation gain will be small. In conclusion, 
the federation gain will exhibit a sharp jump at a critical sample complexity $N = \Theta( \nu d \log d).$ This is confirmed by our numerical experiment in Section~\ref{app: experiments: data heterogeneity}.  


\section{Experimental Results}
\label{sec: experiments}
In this section, we provide experimental results corroborating our theoretical findings. 

\subsection{Stationary points and estimation errors}
\label{subsec: stationary points v.s. prediction errors}
 We numerically verify that despite the failure of converging to the
 stationary points of the global emprical risk function, both \FedAvg and \FedProx can achieve low estimation errors. 

We adopt the same simulation setup of \cite{pathak2020fedsplit} for fairness in comparison.  
We let $M=25$, $d=100$, and $n_i = 500$. 
For each client $i$, suppose $f_i^* = X_i \theta^* $ for some  $\theta^*\in \reals^d$ and the response vector $y_i \in \reals^{n_i}$ is given by
$y_i = X_i\theta^* + \xi_i, $
where $\xi_i \in \reals^{n_i}$ 
is distributed as $\calN(0, \sigma^2 I)$ with $\sigma = 0.5$. 
The local design matrices $X_i$ are independent random matrices with
i.i.d.\ $\calN(0,1)$ entries. Let $\ell(\theta)=\frac{1}{N} \sum_{i=1}^M \norm{y_i-X_i\theta}^2$ be the global empirical risk function. 
The difference to~\cite{pathak2020fedsplit} is that, instead of plotting the sub-optimality in the excess risk 
$\ell(\theta_t) - 
\min_\theta \ell(\theta)$, we plot the trajectories of $\Norm{\nabla \ell(\theta_t)}$ 
to highlight the unreachability to stationary points of $\ell(\theta_t)$. 
 
%
For both \FedAvg and \FedProx, we choose the step size $\eta=0.1.$
Fig.\,\ref{fig:f1} confirms the observation in \cite{pathak2020fedsplit}
 that \FedAvg with ($s\ge 2$) and \FedProx fail to converge to the stationary point
 of global empirical risk function $\ell(\theta).$
 However, Fig.\,\ref{fig:f2} shows that both \FedAvg with $s=5, 10$ and \FedProx can achieve almost the same low error $\norm{\theta_t-\theta^*}$ as \FedAvg with $s=1$, i.e., 
 the standard centralized gradient descent method.

%
%

\paragraph*{Impact of minibatches} 
Minibatches are often adopted in the real-world implementations of \FedAvg and \FedProx.
Specifically, each client $i$  first partitions its local data into batches of the chosen size $B_i$. Then
for each of the $s$ local steps in \FedAvg \cite{mcmahan2017communication}, the client $i$ updates $\theta_{i,t}$ via running gradient descent $\frac{n_i}{B_i}$ times, 
where a different batch in the data partition is used each time. 
In this way, each of the batches is passed $s$ times in one round. Similarly, in \FedProx \cite{li2018federated}, 
the client $i$ updates $\theta_{i,t}$  
by solving the local proximal optimization~\eqref{eq:Fedprox local obj}
$\frac{n_i}{B_i}$ times, 
where a different batch in the data partition is used each time. 
In our analysis and previous numerical experiments, we assumed full batch $B_i=n_i$. A natural but interesting question is whether \FedAvg and \FedProx
still enjoys the statistical optimality when using minibatches where $B_i<n_i$.



To answer this question, we re-run the experiments with the same setup as above but with three batch sizes $B$: 20, 50, and 100. 
We plot the gradient magnitudes and estimation errors in Fig.\,\ref{app: fig: figure group gradient norm} and~Fig.\,\ref{app: fig: figure group error}. 
For ease of comparison, we redraw Fig.\,\ref{fig:f1} and Fig.\,\ref{fig:f2} in Fig.\,\ref{app:fig:mb1 gradient norm} and Fig.\,\ref{app:fig:mb1 error}. 

As illustrated in Fig.\,\ref{app: fig: figure group gradient norm}, 
for $s=5$ and $s=10$ the impacts of different batch sizes on the gradient magnitude are negligible.
However, strikingly, for \FedAvg $s=1$ with minibatch, its gradient magnitude rises up significantly 
and hence it can no longer reach the stationary point
(This can be rigorously proved by following the arguments in~\cite{pathak2020fedsplit}.).
For \FedProx with minibatch, its curve mostly coincides
with that of \FedAvg $s=1.$
In contrast, as shown in Fig.\,\ref{app: fig: figure group error}, 
the minibatch has almost no effect 
on the estimation error. 
The final estimation errors are almost identical in each of the 
four figures in Fig.\,\ref{app: fig: figure group error}. 
The convergence speed of \FedAvg $s=1$
only decreases a bit with minibatch.

In conclusion, we see  that 
both \FedAvg and \FedProx with minibatch 
can achieve low estimation errors despite the unreachability of the stationary points.

\begin{figure}[h]
  \begin{subfigure}[b]{0.45\textwidth}
    \includegraphics[width=\textwidth]{grad.pdf}
    \caption{Full $\calS_i$ in each local gradient descent update}
    \label{app:fig:mb1 gradient norm}
  \end{subfigure}
    \hfill
  \begin{subfigure}[b]{0.45\textwidth}
    \includegraphics[width=\textwidth]{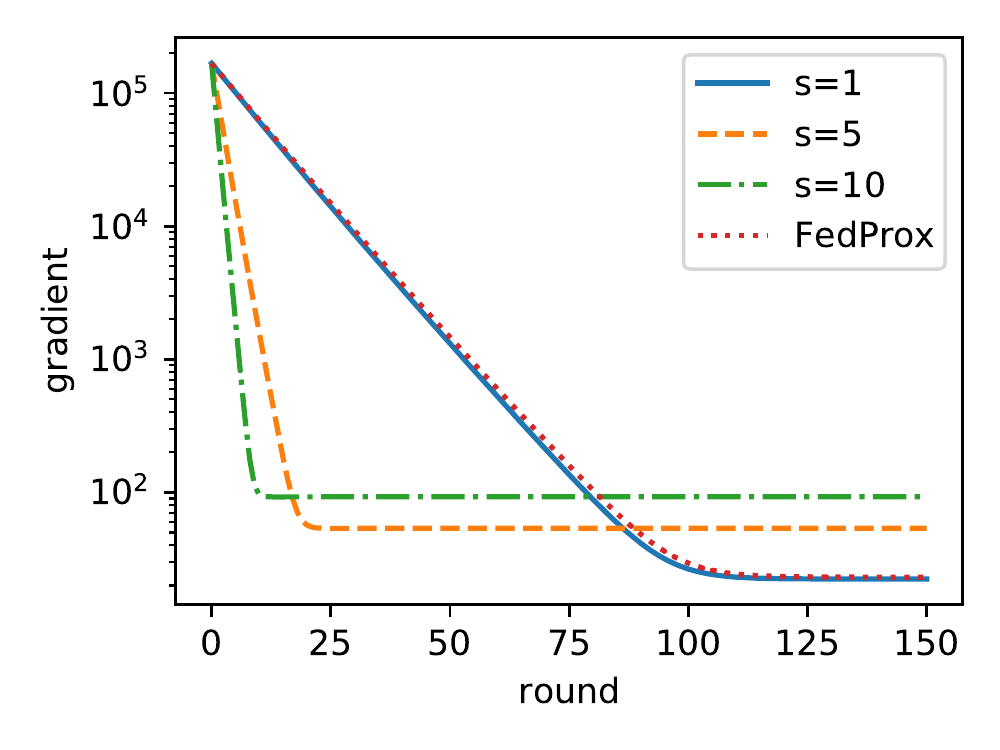}
    \caption{Batch size 20}
    \label{app:fig:mb: gradient norm: 20}
  \end{subfigure}
  \hfill 
    \begin{subfigure}[b]{0.45\textwidth}
    \includegraphics[width=\textwidth]{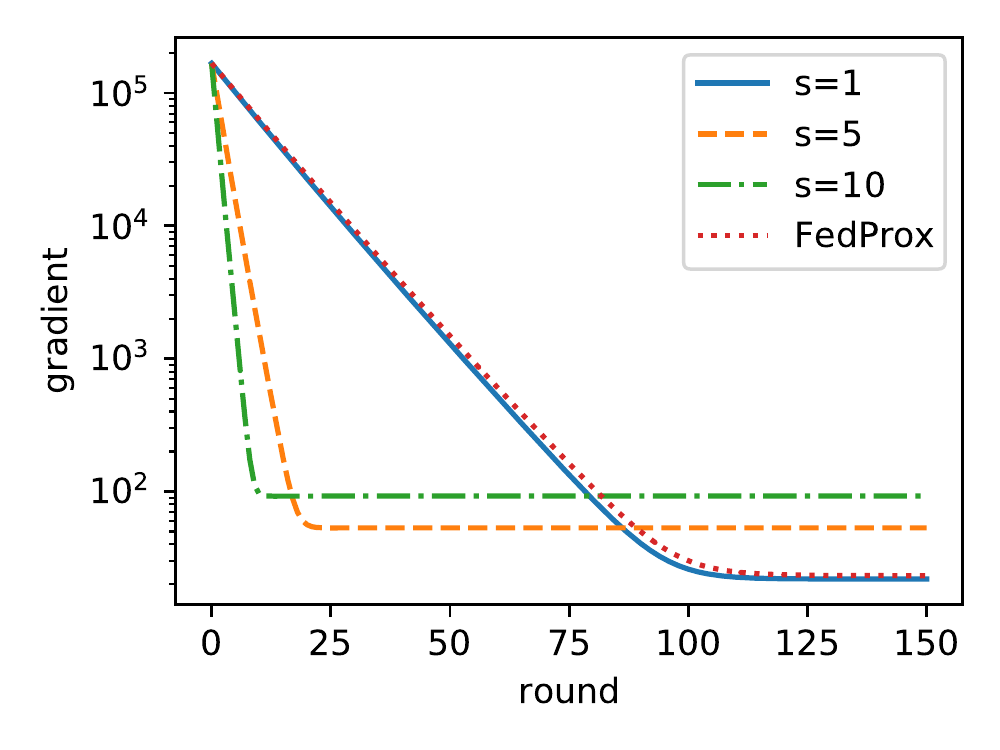} 
    \caption{Batch size 50}
    \label{app:fig:mb: gradient norm: 50}
  \end{subfigure}
  \hfill
    \begin{subfigure}[b]{0.45\textwidth}
    \includegraphics[width=\textwidth]{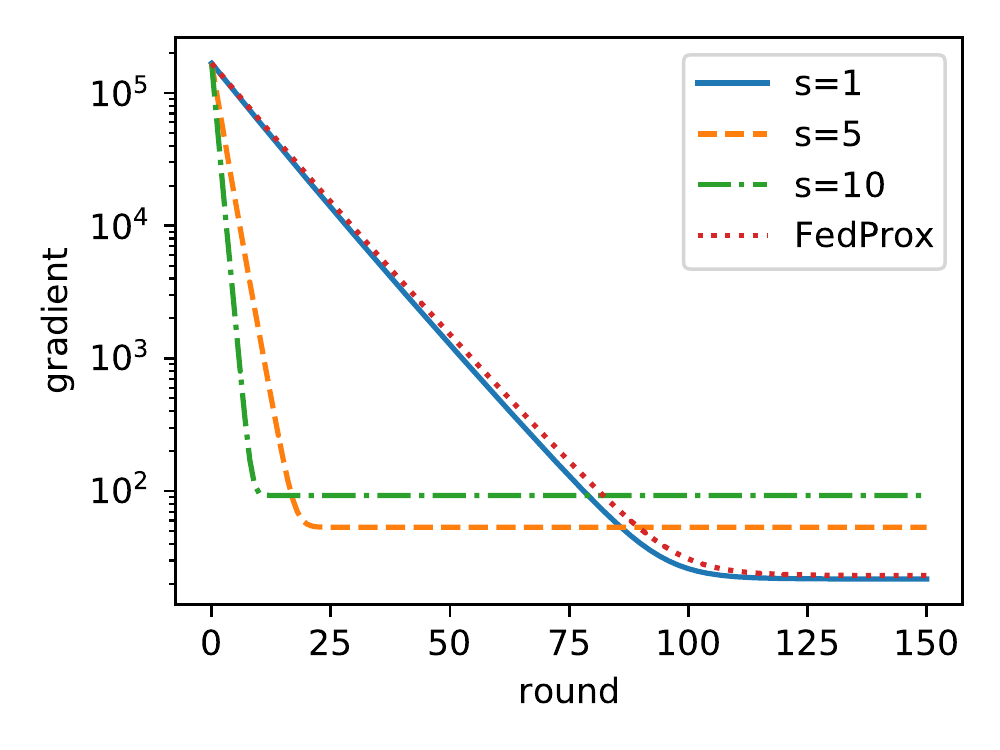}
    \caption{Batch size 100} 
    \label{app:fig:mb: gradient norm: 100}
  \end{subfigure}
\caption{Impacts of mini batch sizes on the reachability of stationary points}
\label{app: fig: figure group gradient norm}
\end{figure}
%
%

%
%
\begin{figure}[h]
  \begin{subfigure}[b]{0.45\textwidth}
    \includegraphics[width=\textwidth]{err.pdf}
    \caption{Full $\calS_i$ in each local gradient descent step}
    \label{app:fig:mb1 error}
  \end{subfigure}
    \hfill
  \begin{subfigure}[b]{0.45\textwidth}
    \includegraphics[width=\textwidth]{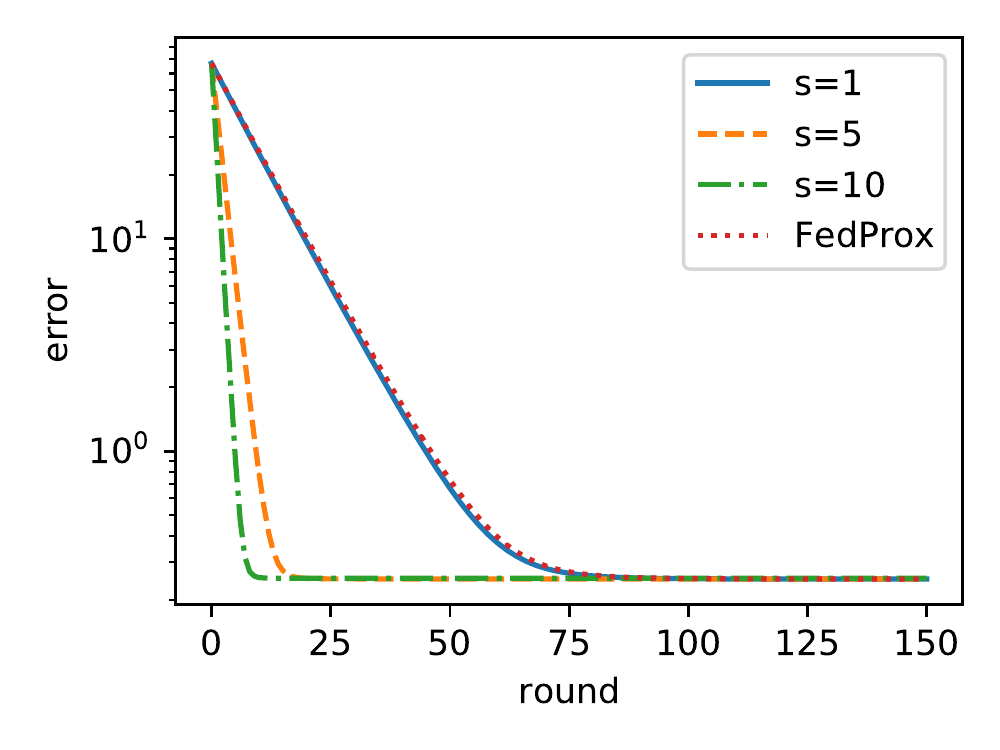}
    \caption{Batch size 20}
    \label{app:fig:mb: err: 20}
  \end{subfigure}
  \hfill 
    \begin{subfigure}[b]{0.45\textwidth}
    \includegraphics[width=\textwidth]{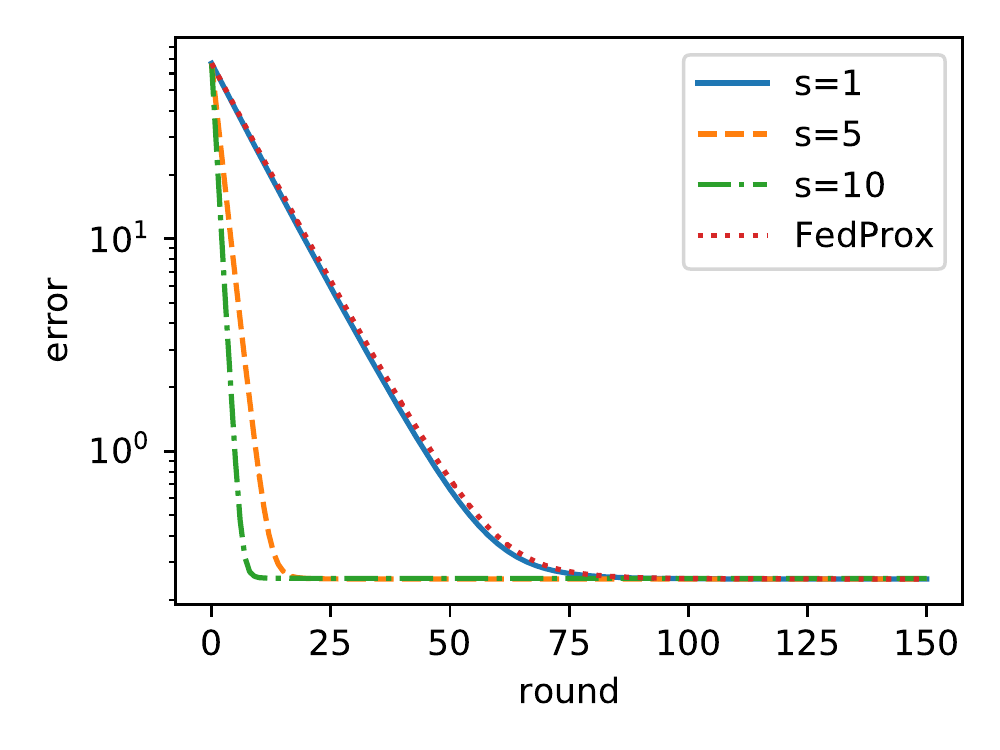} 
    \caption{Batch size 50}
    \label{app:fig:mb: err: 50}
  \end{subfigure}
  \hfill
    \begin{subfigure}[b]{0.45\textwidth}
    \includegraphics[width=\textwidth]{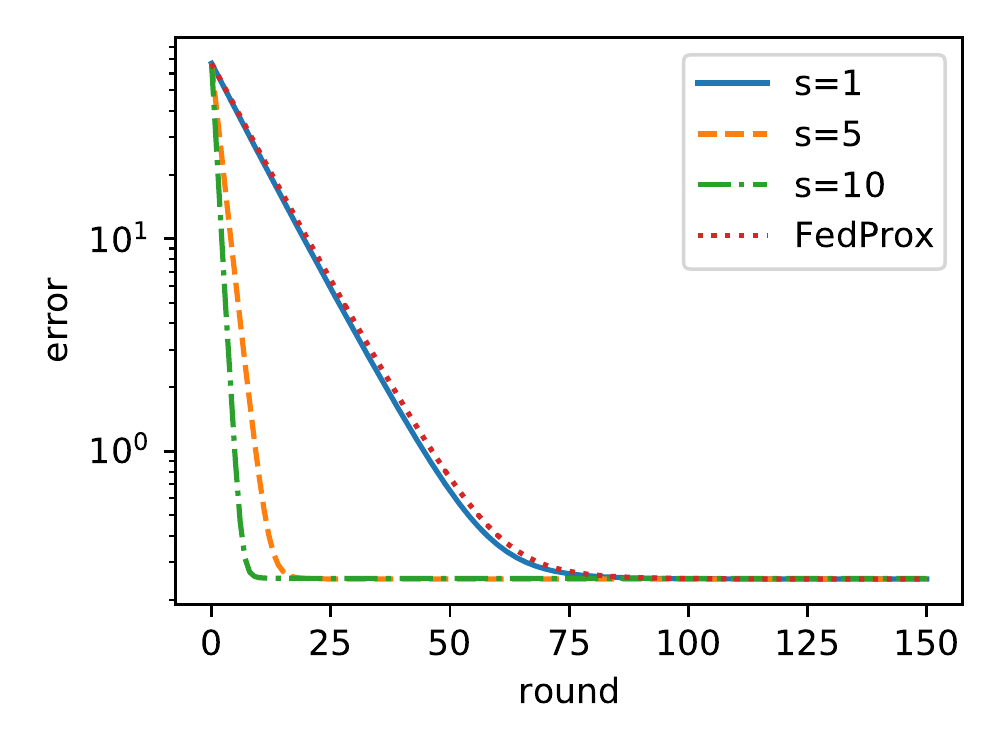}
    \caption{Batch size 100} 
    \label{app:fig:mb: err: 100}
  \end{subfigure}
\caption{Impacts of mini batch sizes on the estimation errors}
\label{app: fig: figure group error}
\end{figure}




%
\subsection{Federation gains versus model heterogeneity}
\label{subsec: experiments gains}
%
%


As mentioned in Section \ref{sec: problem formulation},  data heterogeneity includes both model heterogeneity (a.k.a.\ concept shift) and covariate heterogeneity (a.k.a.\ covariate shift).   
 Complementing our Theorem \ref{thm:FG},  we provide a numerical study on the impact of model heterogeneity on the federation gain in this section, and the corresponding results of covariate heterogeneity in the next section. 
We build on our previous experiment setup 
by allowing  for
unbalanced local data and the heterogeneity in $f^*_i$. 
We choose $M=20$, $d=100$, 
$n_i = 50$ for half of the clients, and $n_i = 500$ for the remaining clients.
We refer to the clients with $n_i=50$ as {\em data scarce} clients, and to the others 
as {\em data rich} clients. 
 We run the experiments with a prescribed set of heterogeneity levels.
All the other specifications are the same as before. 

%
We randomly choose a 
data scarce client and a 
data rich client, 
and plot the federation gains against 
the model heterogeneity $\Gamma=\max_{i,j \in [M]}\|\theta^*_i-\theta^*_j\|_2$ in Fig.\,\ref{fig: figure group 2}.  
Note that in evaluating the federation gains,
we use the minimum-norm least squares as the benchmark local estimator,
 that is $\hat{\theta}_j= (X_j^\top X_j)^+ X_j^\top y_j$, where the symbol $+$
 denotes the Moore-Penrose pseudoinverse. It is known that 
 this estimator can attain the minimax-optimal estimation error rate~\cite{mourtada2019exact}. 
\begin{figure}[h]
  \begin{subfigure}[b]{0.45\textwidth}
    \includegraphics[width=\textwidth]{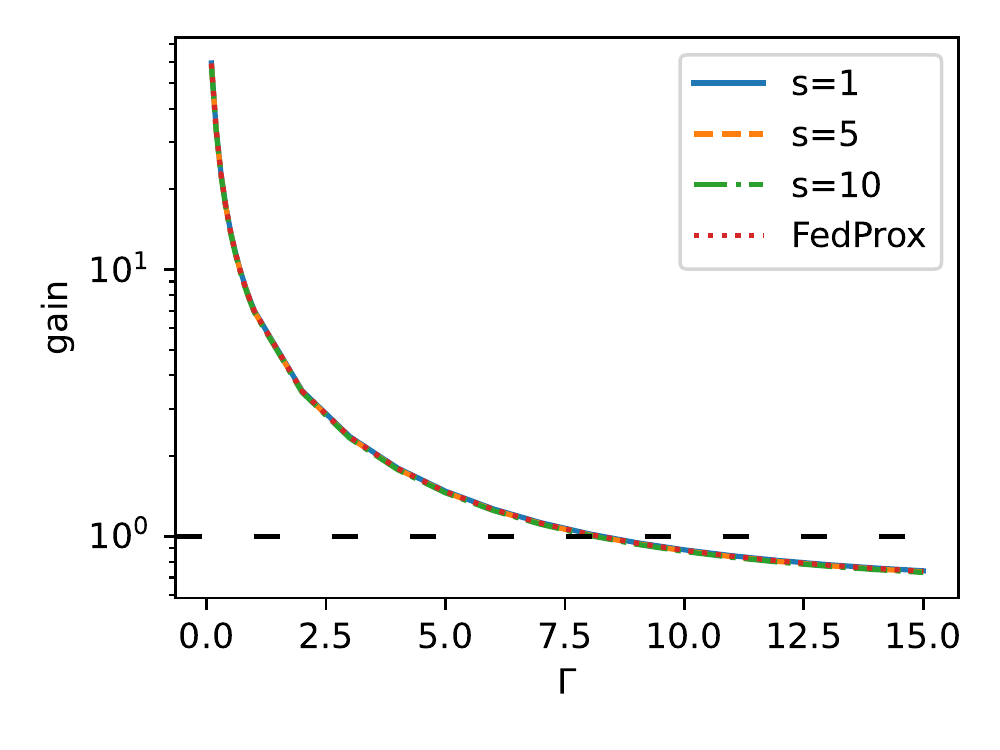} 
    \caption{A data scarce client}
    \label{fig:f3}
  \end{subfigure}
    \hfill
  \begin{subfigure}[b]{0.45\textwidth}
    \includegraphics[width=\textwidth]{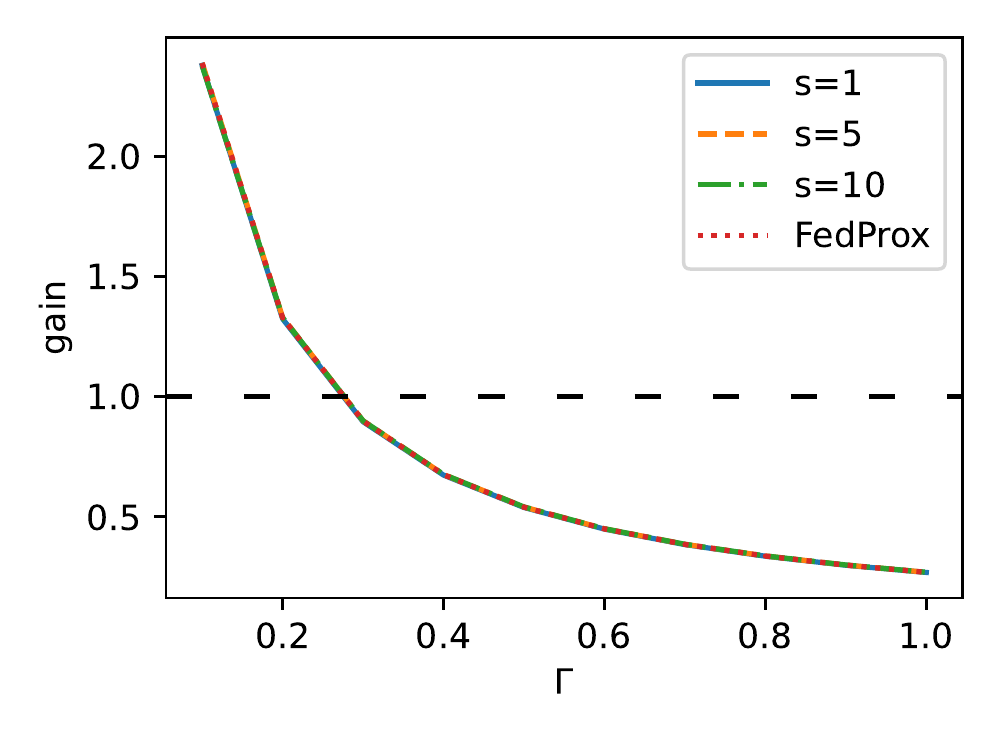}
    \caption{A data rich client}
    \label{fig:f4}
  \end{subfigure}
\caption{Federation gains versus $\Gamma$. 
A data scarce client benefits more from FL participation. }
\label{fig: figure group 2}
\end{figure}
We see that consistent with our theory, 
despite the difference in the training behaviors, the models trained 
under \FedAvg with different choices of aggregation periods $s$ and under \FedProx 
have almost indistinguishable federation gains. 
Moreover, as predicted by our theory, the federation gain drops with increasing model
heterogeneity $\Gamma$, while the federation gain of the data scarce client is much 
higher than that of the data rich client. 
Recall that the federation gain exceeds $1$ 
if and only if the FL model is better than the locally trained model.
We observe that the federation gain of a data scarce client 
drops below 1 at $\Gamma \approx 7.5$, whereas the federation gain of a data rich client drops below 1 at $\Gamma \approx 0.3$. These numbers turn out to be closely match with our theoretically predicted thresholds given after~\prettyref{thm:FG}, 
which are $\Gamma \approx 
 \sqrt{1-n_j/d} \, \|\theta_j^*\|_2 \approx 7$
 and $\Gamma \approx \sigma \sqrt{d/n_j} \approx 0.22$, respectively.

\subsection{Federation gain versus covariate heterogeneity}
\label{app: experiments: data heterogeneity}
%



\begin{figure}[h]
 \begin{subfigure}[b]{0.48\textwidth}
    \includegraphics[width=\textwidth]{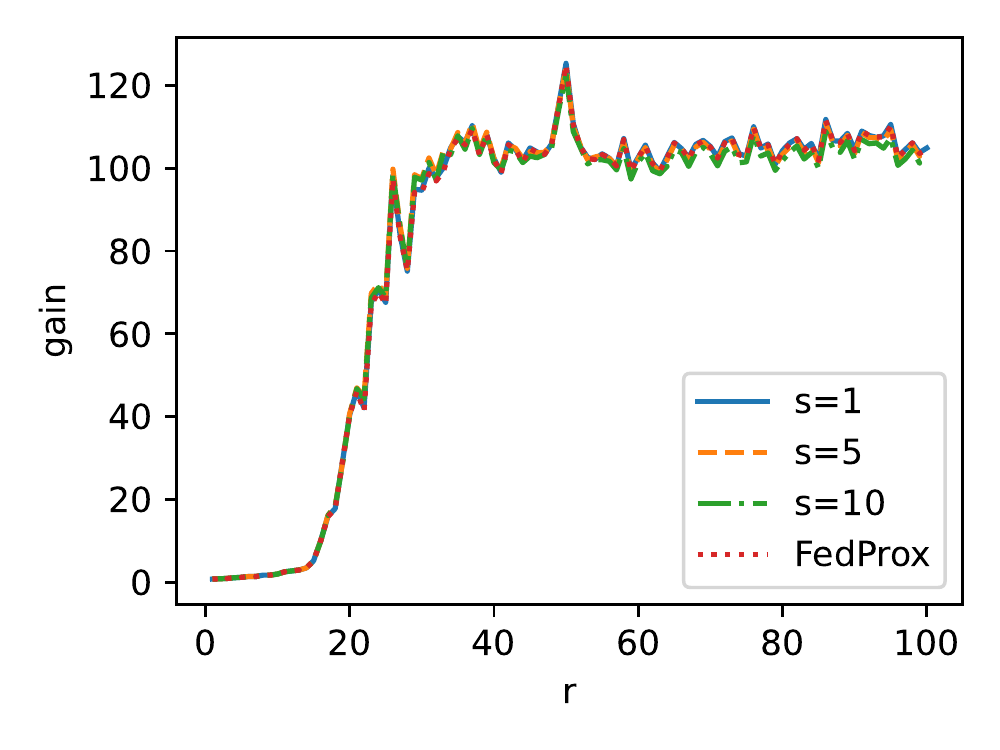}
    \caption{A data scarce client}
    \label{app:fig:subspace: data scarce}
\end{subfigure}  
    \hfill
  \begin{subfigure}[b]{0.48\textwidth}
    \includegraphics[width=\textwidth]{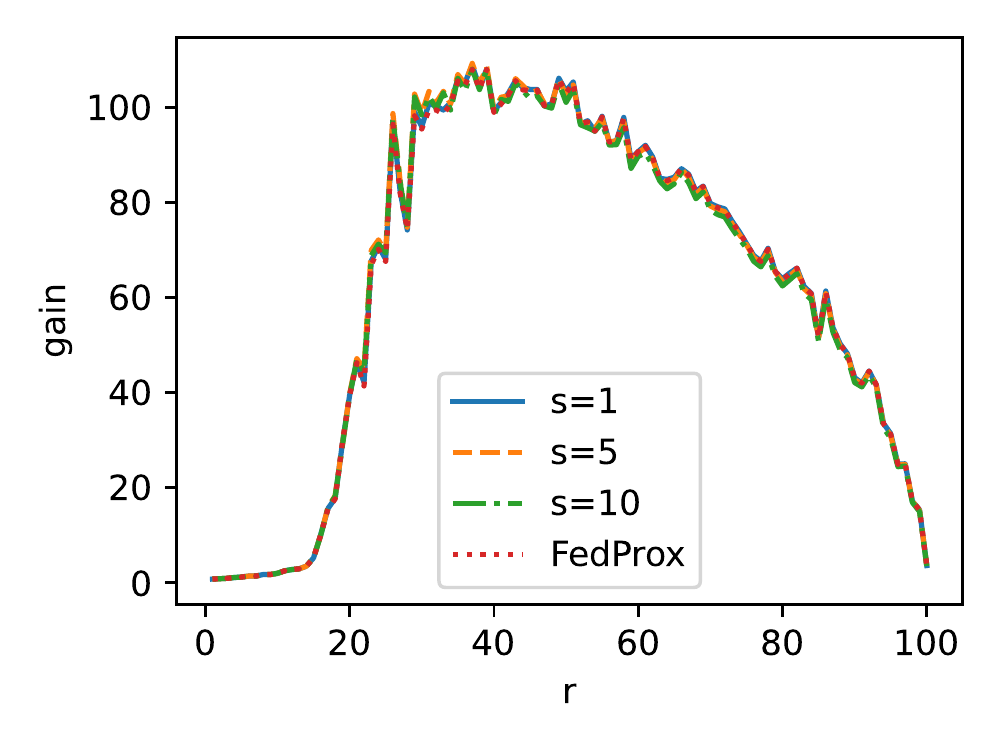}
    \caption{A data rich client}
    \label{app:fig:subspace: data rich}    
  \end{subfigure}
\caption{Federation gains versus subspace dimension $r$. }
\label{app: fig: figure group subspaces}
\end{figure} 

In this section, we study the impact of covariate heterogeneity on the federation gains
by focusing on the subspace model. 
In our experiments, we choose $M=20$, $d=100$, $\sigma=0.5$, $n_i = 50$ for half of the clients, and $n_i = 500$ for the remaining clients. 
We let the 20 clients share a common underlying truth, i.e., $\theta_j^* = \theta^*$ for all $j$, which is randomly drawn from $\calN(0, I)$. 
The responses $y_i$ are given as $y_i = X_i\theta^* + \xi_i$. 
The design matrices $X_i \in \reals^{n_i\times d}$ at the clients lie in different subspaces of dimension $r$; here $r$ ranges from 1 to 100. 
Specifically, $X_i$'s are generated as 
follows: 
we first generate a random index set $E\subseteq [d]$ of cardinality $r$, and generate a matrix ${X}_i$ with each row independently distributed as $\calN(0, (d/r) I_E)$, where $I_E$ is a diagonal matrix with $(I_E)_{ii}=\indc{i\in E}$.
The scaling $\frac{d}{r}$ ensures that each row of $X_i$ has $\ell^2$ norm $\sqrt{d}$ in expectation and hence the signal-to-noise ratio is consistent across different values of $r$.
As $\Norm{X_i}$ increases by a factor of $\sqrt{d/r}$, we rescale the stepsize by choosing $\eta = 0.1/\pth{d/r}$ for the stability of local iterations, according to~\prettyref{cor:subspace}.
Notably, when $r=d$, 
the stepsize becomes $\eta = 0.1$ which is the same as 
previous experiments.
We randomly choose a 
data scarce client and a 
data rich client and record the federation gains. 
We plot the average federation gains over 20 trials against $r$ -- dimension of the subspaces -- in Fig.\,\ref{app: fig: figure group subspaces}. We have the following key observations, matching our theoretical predictions given in~\prettyref{thm:fg_subspace}:
\begin{itemize}
\item 
First, for any fixed $r$, a client's federation gains of the model trained by \FedAvg with $s=1, 5, 10$ and \FedProx are almost identical. This is consistent with our theory, as we show both \FedAvg and \FedProx converge to the minimax-optimal mean-squared error rate $d/N$ in~\prettyref{cor:subspace}. 
\item  Second, up to $r\approx 27$, the curves for the data scarce and the data rich clients are roughly the same. This is because when $r\le 27$, the main ``obstacle'' in learning $\theta^*$ is the lack of sufficient coverage of each of the 100 dimensions by the data collectively kept by the 20 clients. 
\item Third, for both curves there are significant jumps starting when $r\approx 16$ to when $r\approx 23$. 
 If we can pool the data together, 
 due to the coupon-collecting effect, 
 as soon as $M\times r \ge d\log d \approx 460$,
 all the $d$ dimensions can be covered by the design matrices and hence the underlying truth $\theta^*$ can be learned with high accuracy. 
 Since $M=20$, this explains the significant jumps in federations gains in Fig.\,\ref{app: fig: figure group subspaces} when $r$ is around $23$.
\item Finally, the curve trends are different for data scarce and data rich clients. 
For a data scarce client, as shown in Fig.\,\ref{app:fig:subspace: data scarce}, as $r$ increases, the federation gain first increases and then stabilizes around 107. 
In contrast, for a data rich client, as shown in Fig.\,\ref{app:fig:subspace: data rich},  as $r$ increases, the federation gain first increases and then quickly decreases when $r$ approaches 100. 
This distinction is because a data scarce client, on its own, cannot learn $\theta^*$ well as $n_i = 50 \ll 100$ no matter how large $r$ is, 
while a data rich has 500 data tuples and can learn $\theta^*$ on its own quite well when $r$ approaches 100. 
\end{itemize}
%

\subsection{Fitting nonlinear functions}
\label{app: experiments: general regression}
In this section, we go beyond linear models. 
In particular, we focus on fitting $U_5$ -- the degree-5 Chebyshev polynomials of the second kind, which is a special case of the Gegenbauer polynomials and has the explicit expression  
\begin{align*}
    U_5 (x) = 32x^5  - 32 x^3 + 6x. 
\end{align*}
We choose the feature map $\phi(x)=[1,x,\dots,x^5]^\top$ to be the monomial basis up to degree 5 and run \FedAvg and \FedProx on the polynomial coefficients.   
We consider $M=20$ clients and equal size local dataset $n_i \in \{1, 2, \dots, 10\}$. 
Correspondingly, the global dataset size ranges from 20 to 200 as indicated by Fig.\,\ref{app: fig: figure group regression}. 
The response value is given as $y=U_5(x)+\xi$ where $\xi\sim \calN(0,\sigma^2)$ with $\sigma=0.5$. 
We consider heterogeneous local datasets. Specifically, each client $i\in [M]$ probes the function on disjoint intervals $[-1+\frac{2(i-1)}{M},-1+\frac{2i}{M})$.
In the experiments, we generate
covariates $x_{ij}$ using the uniform grid. 
For the fitted function $\hat f$, we evaluate the mean-squared error (MSE) as
\[
\lnorm{\hat{f}-f^*}{2} = \int_{-1}^{1} \left |\hat f(x)-f^*(x)\right |^2 \diff x.
\]
We run \FedAvg and \FedProx with the same stepsize $\eta=0.1$ as before, and evaluate the MSE via Monte Carlo integration. We plot the average MSE over 500 trials.

\begin{figure}[h]
  \begin{subfigure}[b]{0.48\textwidth}
    \includegraphics[width=\textwidth]{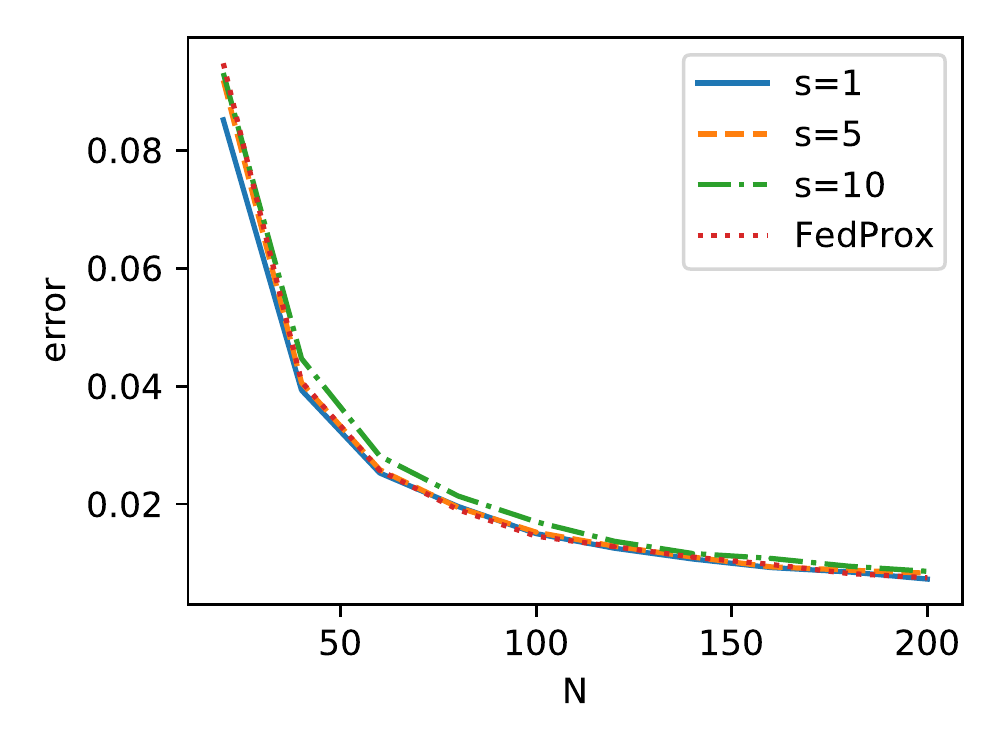}
    \caption{Estimation errors}
    \label{app:fig:regression:error}
  \end{subfigure}
    \hfill
  \begin{subfigure}[b]{0.48\textwidth}
    \includegraphics[width=\textwidth]{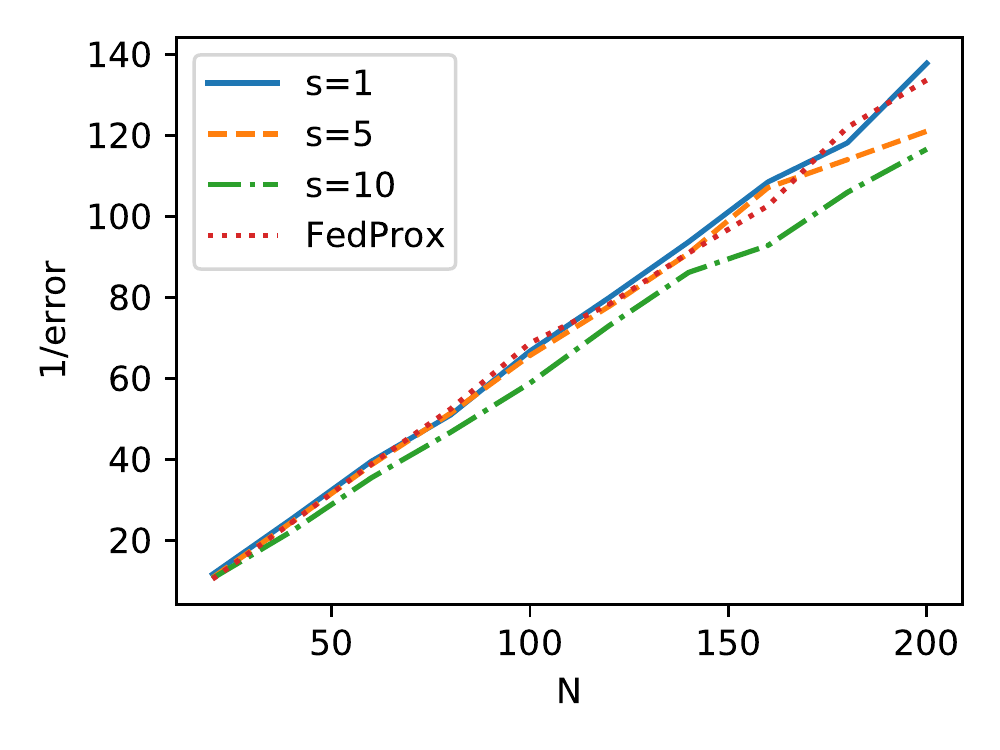} 
    \caption{Reciprocal of estimation errors}
    \label{app:fig:regression:error inverse}    
  \end{subfigure}
\caption{Estimation errors of fitting polynomials}
\label{app: fig: figure group regression}
\end{figure} 
As shown by Fig.\,\ref{app:fig:regression:error}, the four curves of the 
model prediction errors under \FedAvg with different choices of $s$ and \FedProx 
are very similar. Note that for polynomial kernels, the minimax-optimal estimation rate is $O(1/N)$~\cite{raskutti2012minimax}.
In comparison, we plot the reciprocal of 
prediction errors in Fig.\,\ref{app:fig:regression:error inverse}.
Though the differences in 
the reciprocal of the prediction errors  get amplified as the errors approach zero, 
each of the four curves in Fig.\,\ref{app:fig:regression:error inverse} are mostly straight lines.  
Moreover, the four curves have similar slopes with the slope of $s=10$ being slightly smaller than others. 
This is because a larger $s$ leads to a $\kappa$ being slightly greater than one and thus an increased error. These observations confirm that both \FedAvg and \FedProx can achieve nearly-optimal estimation rate in polynomial regression. 


\begin{thebibliography}{MMR{\etalchar{+}}17}

\bibitem[BBM05]{bartlett2005local}
Peter~L Bartlett, Olivier Bousquet, and Shahar Mendelson.
\newblock Local rademacher complexities.
\newblock {\em The Annals of Statistics}, 33(4):1497--1537, 2005.

\bibitem[BD09]{BD2009}
Peter~J Brockwell and Richard~A Davis.
\newblock {\em Time series: theory and methods}.
\newblock Springer Science \& Business Media, 2009.

\bibitem[BD16]{BD2016}
Peter~J Brockwell and Richard~A Davis.
\newblock {\em Introduction to time series and forecasting}.
\newblock Springer, 2016.

\bibitem[CLD{\etalchar{+}}18]{chen2018federated}
Fei Chen, Mi~Luo, Zhenhua Dong, Zhenguo Li, and Xiuqiang He.
\newblock Federated meta-learning with fast convergence and efficient
  communication.
\newblock {\em arXiv:1802.07876}, 2018.

\bibitem[DKM20]{deng2020adaptive}
Yuyang Deng, Mohammad~Mahdi Kamani, and Mehrdad Mahdavi.
\newblock Adaptive personalized federated learning.
\newblock {\em arXiv:2003.13461}, 2020.

\bibitem[DLL{\etalchar{+}}19]{Du2019}
Simon Du, Jason Lee, Haochuan Li, Liwei Wang, and Xiyu Zhai.
\newblock Gradient descent finds global minima of deep neural networks.
\newblock In {\em International Conference on Machine Learning}, pages
  1675--1685. PMLR, 2019.

\bibitem[dlPMS95]{de1995}
Victor~H de~la Pe{\~n}a and Stephen~J Montgomery-Smith.
\newblock Decoupling inequalities for the tail probabilities of multivariate
  {U}-statistics.
\newblock {\em The Annals of Probability}, pages 806--816, 1995.

\bibitem[DTN20]{dinh2020personalized}
Canh Dinh, Nguyen Tran, and Josh Nguyen.
\newblock Personalized federated learning with moreau envelopes.
\newblock In {\em Advances in Neural Information Processing Systems},
  volume~33, pages 21394--21405. Curran Associates, Inc., 2020.

\bibitem[DZPS18]{Du2018}
Simon~S Du, Xiyu Zhai, Barnabas Poczos, and Aarti Singh.
\newblock Gradient descent provably optimizes over-parameterized neural
  networks.
\newblock In {\em International Conference on Learning Representations}, 2018.

\bibitem[FMO20]{fallah2020personalized}
Alireza Fallah, Aryan Mokhtari, and Asuman Ozdaglar.
\newblock Personalized federated learning with theoretical guarantees: A
  model-agnostic meta-learning approach.
\newblock In {\em Advances in Neural Information Processing Systems},
  volume~33, pages 3557--3568. Curran Associates, Inc., 2020.

\bibitem[GLZ00]{gine2000}
Evarist Gin{\'e}, Rafa{\l} Lata{\l}a, and Joel Zinn.
\newblock Exponential and moment inequalities for {U}-statistics.
\newblock In {\em High Dimensional Probability II}, pages 13--38. Springer,
  2000.

\bibitem[HJ12]{horn2012matrix}
Roger~A Horn and Charles~R Johnson.
\newblock {\em Matrix analysis}.
\newblock Cambridge university press, 2012.

\bibitem[HTF09]{Hastie2009}
Trevor Hastie, Robert Tibshirani, and Jerome~H Friedman.
\newblock {\em The elements of statistical learning: data mining, inference,
  and prediction}, volume~2.
\newblock Springer, 2009.

\bibitem[JKRK19]{jiang2019improving}
Yihan Jiang, Jakub Kone{\v{c}}n{\`y}, Keith Rush, and Sreeram Kannan.
\newblock Improving federated learning personalization via model agnostic meta
  learning.
\newblock {\em arXiv:1909.12488}, 2019.

\bibitem[KKM{\etalchar{+}}20]{karimireddy2020scaffold}
Sai~Praneeth Karimireddy, Satyen Kale, Mehryar Mohri, Sashank Reddi, Sebastian
  Stich, and Ananda~Theertha Suresh.
\newblock Scaffold: Stochastic controlled averaging for federated learning.
\newblock In {\em International Conference on Machine Learning}, pages
  5132--5143. PMLR, 2020.

\bibitem[KMA{\etalchar{+}}21]{kairouz2019advances}
Peter Kairouz, H.~Brendan McMahan, Brendan Avent, Aurélien Bellet, Mehdi
  Bennis, Arjun~Nitin Bhagoji, Kallista Bonawitz, Zachary Charles, Graham
  Cormode, Rachel Cummings, Rafael G.~L. D’Oliveira, Hubert Eichner, Salim~El
  Rouayheb, David Evans, Josh Gardner, Zachary Garrett, Adrià Gascón, Badih
  Ghazi, Phillip~B. Gibbons, Marco Gruteser, Zaid Harchaoui, Chaoyang He, Lie
  He, Zhouyuan Huo, Ben Hutchinson, Justin Hsu, Martin Jaggi, Tara Javidi,
  Gauri Joshi, Mikhail Khodak, Jakub Konecný, Aleksandra Korolova, Farinaz
  Koushanfar, Sanmi Koyejo, Tancrède Lepoint, Yang Liu, Prateek Mittal,
  Mehryar Mohri, Richard Nock, Ayfer Özgür, Rasmus Pagh, Hang Qi, Daniel
  Ramage, Ramesh Raskar, Mariana Raykova, Dawn Song, Weikang Song, Sebastian~U.
  Stich, Ziteng Sun, Ananda~Theertha Suresh, Florian Tramèr, Praneeth
  Vepakomma, Jianyu Wang, Li~Xiong, Zheng Xu, Qiang Yang, Felix~X. Yu, Han Yu,
  and Sen Zhao.
\newblock Advances and open problems in federated learning.
\newblock {\em Foundations and Trends® in Machine Learning}, 14(1–2):1--210,
  2021.

\bibitem[KMRR16]{konevcny2016federateda}
Jakub Kone{\v{c}}n{\`y}, H~Brendan McMahan, Daniel Ramage, and Peter
  Richt{\'a}rik.
\newblock Federated optimization: Distributed machine learning for on-device
  intelligence.
\newblock {\em arXiv:1610.02527}, 2016.

\bibitem[KMY{\etalchar{+}}16]{konevcny2016federatedb}
Jakub Kone{\v{c}}n{\`y}, H~Brendan McMahan, Felix~X Yu, Peter Richt{\'a}rik,
  Ananda~Theertha Suresh, and Dave Bacon.
\newblock Federated learning: Strategies for improving communication
  efficiency.
\newblock {\em arXiv:1610.05492}, 2016.

\bibitem[LHY{\etalchar{+}}19]{li2019convergence}
Xiang Li, Kaixuan Huang, Wenhao Yang, Shusen Wang, and Zhihua Zhang.
\newblock On the convergence of fedavg on non-iid data.
\newblock {\em arXiv:1907.02189}, 2019.

\bibitem[LSZ{\etalchar{+}}20]{li2018federated}
Tian Li, Anit~Kumar Sahu, Manzil Zaheer, Maziar Sanjabi, Ameet Talwalkar, and
  Virginia Smith.
\newblock Federated optimization in heterogeneous networks.
\newblock {\em Proceedings of Machine Learning and Systems}, 2:429--450, 2020.

\bibitem[LYZ20]{lin2020collaborative}
Sen Lin, Guang Yang, and Junshan Zhang.
\newblock A collaborative learning framework via federated meta-learning.
\newblock In {\em 2020 IEEE 40th International Conference on Distributed
  Computing Systems (ICDCS)}, pages 289--299. IEEE, 2020.

\bibitem[MMR{\etalchar{+}}17]{mcmahan2017communication}
Brendan McMahan, Eider Moore, Daniel Ramage, Seth Hampson, and Blaise~Aguera
  y~Arcas.
\newblock Communication-efficient learning of deep networks from decentralized
  data.
\newblock In {\em Artificial Intelligence and Statistics}, pages 1273--1282.
  PMLR, 2017.

\bibitem[Mou19]{mourtada2019exact}
Jaouad Mourtada.
\newblock Exact minimax risk for linear least squares, and the lower tail of
  sample covariance matrices.
\newblock {\em arXiv:1912.10754}, 2019.

\bibitem[Pin94]{pinelis1994}
Iosif Pinelis.
\newblock Optimum bounds for the distributions of martingales in banach spaces.
\newblock {\em The Annals of Probability}, pages 1679--1706, 1994.

\bibitem[PW20]{pathak2020fedsplit}
Reese Pathak and Martin~J Wainwright.
\newblock Fedsplit: an algorithmic framework for fast federated optimization.
\newblock In {\em Advances in Neural Information Processing Systems},
  volume~33, pages 7057--7066. Curran Associates, Inc., 2020.

\bibitem[RJWY12]{raskutti2012minimax}
Garvesh Raskutti, Martin J~Wainwright, and Bin Yu.
\newblock Minimax-optimal rates for sparse additive models over kernel classes
  via convex programming.
\newblock {\em Journal of Machine Learning Research}, 13(2), 2012.

\bibitem[RR{\etalchar{+}}07]{rahimi2007random}
Ali Rahimi, Benjamin Recht, et~al.
\newblock Random features for large-scale kernel machines.
\newblock In {\em NIPS}, volume~3, page~5. Citeseer, 2007.

\bibitem[RV{\etalchar{+}}13]{rudelson2013hanson}
Mark Rudelson, Roman Vershynin, et~al.
\newblock Hanson-wright inequality and sub-gaussian concentration.
\newblock {\em Electronic Communications in Probability}, 18, 2013.

\bibitem[RWY14]{raskutti2014early}
Garvesh Raskutti, Martin~J Wainwright, and Bin Yu.
\newblock Early stopping and non-parametric regression: an optimal
  data-dependent stopping rule.
\newblock {\em The Journal of Machine Learning Research}, 15(1):335--366, 2014.

\bibitem[Sti19]{stich2018local}
Sebastian~U. Stich.
\newblock Local {SGD} converges fast and communicates little.
\newblock In {\em International Conference on Learning Representations}, 2019.

\bibitem[Ver10]{vershynin2010nonasym}
Roman Vershynin.
\newblock Introduction to the non-asymptotic analysis of random matrices.
\newblock {\em arxiv:1011.3027}, 2010.

\bibitem[VL07]{von2007tutorial}
Ulrike Von~Luxburg.
\newblock A tutorial on spectral clustering.
\newblock {\em Statistics and computing}, 17(4):395--416, 2007.

\bibitem[Wai19]{wainwright2019high}
Martin~J Wainwright.
\newblock {\em High-dimensional statistics: A non-asymptotic viewpoint},
  volume~48.
\newblock Cambridge University Press, 2019.

\bibitem[YB99]{yang1999information}
Yuhong Yang and Andrew Barron.
\newblock Information-theoretic determination of minimax rates of convergence.
\newblock {\em Annals of Statistics}, pages 1564--1599, 1999.

\bibitem[ZLL{\etalchar{+}}18]{zhao2018federated}
Yue Zhao, Meng Li, Liangzhen Lai, Naveen Suda, Damon Civin, and Vikas Chandra.
\newblock Federated learning with non-iid data.
\newblock {\em arXiv:1806.00582}, 2018.

\bibitem[ZWSL10]{zinkevich2010parallelized}
Martin Zinkevich, Markus Weimer, Alexander~J Smola, and Lihong Li.
\newblock Parallelized stochastic gradient descent.
\newblock In {\em NIPS}, volume~4, page~4. Citeseer, 2010.

\end{thebibliography}
\newcommand{\etalchar}[1]{$^{#1}$}

\appendices

\section{Missing Proofs in Section \ref{sec: error iterate}}
\label{app: error iterate}

\begin{proof}[\bf Proof of \prettyref{prop:theta_recursion}]
For \FedAvg, recall from \eqref{eq:Pi-local} that $\calG_i(f)=\calL_i f + \frac{\eta}{n_i}y_i\cdot k_{\bx_i}$.
Iteratively applying the mapping $\calG_i$ $s$ times, we get that
\[
f_{i,t}=\calG_i^s(f_{t-1}) = \calL_i^s f_{t-1} + \sum_{\tau=0}^{s-1}\calL_i^\tau \frac{\eta}{n_i}y_i\cdot k_{\bx_i}
=\calL_i^s f_{t-1} + y_i \cdot \Psi_i.
\]
Combining the last display with~\prettyref{eq:avg_step} yields \eqref{eq:iteration-f}.

For \FedProx, it follows from~\prettyref{eq:Fedprox local obj} that $f_{i,t}$ solves 
\[
\frac{\eta}{n_i}\sum_{j=1}^{n_i}(f(x_{ij})-y_{ij}) k_{x_{ij}}+(f-f_{t-1})=0.
\]
Then it follows from the definition of the linear operator $\tilde\calL_i^{-1}$ that 
\[
f_{i,t}=\tilde\calL_i^{-1}\pth{f_{t-1}+\frac{\eta}{n_i}\sum_{j=1}^{n_i}y_{ij} k_{x_{ij}}}
=\tilde\calL_i^{-1}f_{t-1} + y_i \cdot \Psi_i. 
\]
Combining the last display with \prettyref{eq:avg_step} yields \eqref{eq:iteration-f} for \FedProx.
\end{proof}

\begin{proof}[\bf Proof of \prettyref{lmm:block_magic}]
We first prove the lemma for \FedAvg.
Note that $f-\calL f=\sum_{i=1}^M w_i (f-\calL_i^s f)$, and that each term admits the following representation 
in terms of telescoping sums:
\begin{align*}
f-\calL_i^s f 
&= \sum_{\tau=0}^{s-1}\calL_i^\tau f -\calL_i^{\tau+1}f
= \sum_{\tau=0}^{s-1}\calL_i^\tau(f-\calL_i f)
=\sum_{\tau=0}^{s-1}\calL_i^\tau\pth{\frac{\eta}{n_i}\sum_{j=1}^{n_i}f(x_{ij})k_{x_{ij}}}=f(\bx_i)\cdot \Psi_i.
\end{align*}
Hence applying the definition of $\Psi$ yields that $f(\bx)\cdot \Psi =\sum_i w_i f(\bx_i) \cdot \Psi_i =f-\calL f$.
Moreover, since 
\begin{equation}
\label{eq:hatL-k}
\calL_i k_{\bx_i}
=
\begin{bmatrix}
k_{x_{i1}}-\frac{\eta}{n_i}\sum_{j=1}^{n_i}k(x_{i1},x_{ij})k_{x_{ij}} \\
\vdots \\ 
k_{x_{in_i}}-\frac{\eta}{n_i}\sum_{j=1}^{n_i}k(x_{in_i},x_{ij})k_{x_{ij}}
\end{bmatrix}
=(I-\eta K_{\bx_i})k_{\bx_i}
,
\end{equation}
it follows from induction that, for every $\tau\in\naturals$, 
\[
\calL_i^\tau k_{\bx_i}=(I-\eta K_{\bx_i})^\tau k_{\bx_i}.
\]
Therefore, using the definition of $P_{ii}$ yields that $\Psi_i=\frac{\eta}{n_i}P_{ii} k_{\bx_i}$
and thus
$
\Psi = \frac{\eta}{N}P k_{\bx}.
$
Since $\Psi(\bx)$ is a $N$ by $N$ matrix that stacks $\Psi(x_{ij})$ in rows,
we obtain that
$
\Psi(\bx)=\eta K_\bx P.
$
Consequently, since $f(x_{ij})-\calL f(x_{ij})
= \Psi(x_{ij}) \cdot f(\bx)
$, 
we have
\[
f(\bx)-\calL f(\bx)
=\Psi(\bx)f(\bx)
=\eta K_\bx P f(\bx).
\]

Analogously, for \FedProx, by the identity 
\[
f-\tilde\calL_i^{-1} f
=\tilde\calL_i^{-1}(\tilde\calL_i f-f)
=\tilde\calL_i^{-1}\frac{\eta}{n_i} \sum_{j=1}^{n_i}f(x_{ij})k_{x_{ij}}
=f(\bx_i)\cdot \Psi_i,
\]
we get that $f(\bx)\cdot \Psi=f-\calL f$.
Similar to \eqref{eq:hatL-k},
\[
\tilde\calL_i k_{\bx_i}
=(I+\eta K_{\bx_i}) k_{\bx_i}.
\]
Therefore, $\tilde\calL_i^{-1} k_{\bx_i}=P_{ii}k_{\bx_i}$ and hence $\Psi_i=\frac{\eta}{n_i}P_{ii} k_{\bx_i}$.
The rest of the proof is identical to that for \FedAvg.
\end{proof}

\vskip \baselineskip 

The following lemma is used in the proof of \prettyref{lmm:eigen-KP}. 
\begin{lemma}
\label{lmm:K-matrix-op-norm}
Suppose $\bx\in\calX^n$ and $\Pi\in\reals^{m\times n }$.
Then, for any $f$ such that $\hnorm{f}\le 1$,  it holds that
\begin{equation}
\label{eq:AKA-psd}
\Pi K_{\bx} \Pi^\top \succeq \frac{1}{n}\Pi f(\bx) f(\bx)^\top \Pi^\top.
\end{equation}
Furthermore, the following is true: 
\begin{equation}
\norm{\Pi K_{\bx} \Pi^\top}
=\max_{\hnorm{f}\le 1}\frac{1}{n}\norm{\Pi f(\bx)}^2.
\end{equation}
\end{lemma}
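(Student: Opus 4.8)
The plan is to package both statements into a single bounded linear operator and its Hilbert-space adjoint, after which the semidefinite bound and the norm identity each collapse to one line of Cauchy--Schwarz. Define $T:\calH\to\reals^m$ by $Tf=\frac{1}{\sqrt n}\Pi f(\bx)$; this is bounded since the reproducing property gives $f(x_i)=\iprod{f}{k_{x_i}}_{\calH}\le \hnorm{f}\sqrt{k(x_i,x_i)}$ and $\sup_{x}k(x,x)<\infty$. I would then identify the adjoint $T^*:\reals^m\to\calH$ as $T^*v=\frac{1}{\sqrt n}\sum_{i=1}^n(\Pi^\top v)_i\,k_{x_i}$, which is verified directly from the reproducing property: for all $f\in\calH$ and $v\in\reals^m$,
\[
\iprod{Tf}{v}_{\reals^m}=\frac{1}{\sqrt n}\sum_{i=1}^n(\Pi^\top v)_i\,f(x_i)=\iprod{f}{\tfrac{1}{\sqrt n}\sum_{i=1}^n(\Pi^\top v)_i\,k_{x_i}}_{\calH}=\iprod{f}{T^*v}_{\calH}.
\]
The pivotal identity is then $TT^*=\Pi K_\bx\Pi^\top$: evaluating $(T^*v)(x_i)=\frac{1}{\sqrt n}\sum_j(\Pi^\top v)_j k(x_i,x_j)=\sqrt n\,(K_\bx\Pi^\top v)_i$ yields $TT^*v=\frac{1}{\sqrt n}\Pi(T^*v)(\bx)=\Pi K_\bx\Pi^\top v$.

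For the semidefinite inequality \eqref{eq:AKA-psd}, observe that $\frac{1}{\sqrt n}\Pi f(\bx)=Tf$, so the claimed lower-order term equals $(Tf)(Tf)^\top$. Testing against an arbitrary $v\in\reals^m$ and using the adjoint together with Cauchy--Schwarz in $\calH$,
\[
v^\top(Tf)(Tf)^\top v=\iprod{Tf}{v}_{\reals^m}^2=\iprod{f}{T^*v}_{\calH}^2\le \hnorm{f}^2\,\hnorm{T^*v}^2\le \hnorm{T^*v}^2=\iprod{v}{TT^*v}_{\reals^m},
\]
where the final inequality invokes $\hnorm{f}\le 1$ and the last equality is the definition of the adjoint. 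Since the rightmost quantity is $v^\top\Pi K_\bx\Pi^\top v$, this is precisely $(Tf)(Tf)^\top\preceq\Pi K_\bx\Pi^\top$, i.e. \eqref{eq:AKA-psd}.

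For the norm identity I would use the $C^*$-type relation $\norm{TT^*}=\opnorm{T}^2$ for the self-adjoint positive semidefinite operator $TT^*$, giving
\[
\norm{\Pi K_\bx\Pi^\top}=\norm{TT^*}=\opnorm{T}^2=\sup_{\hnorm{f}\le 1}\norm{Tf}^2=\sup_{\hnorm{f}\le 1}\frac{1}{n}\norm{\Pi f(\bx)}^2,
\]
so the only remaining issue is upgrading the supremum to a maximum. I expect this attainment step to be the sole delicate point, precisely because $T$ is defined on the infinite-dimensional space $\calH$; however the obstacle is mild, since the range of $T^*$ lies in the finite-dimensional span of $\{k_{x_1},\dots,k_{x_n}\}$, so $T$ is finite-rank and its operator norm is attained at a top right singular vector, which supplies the maximizing $f\in\calH$ with $\hnorm{f}\le 1$. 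The two inequalities themselves are immediate once $T$ and $T^*$ are in hand, so essentially all the work is in setting up the operator and checking the adjoint.
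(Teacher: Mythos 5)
Your proposal is correct, and it reaches the lemma by a genuinely different (more operator-theoretic) route than the paper. The paper works directly with the quadratic form: for a test vector $u$, it writes $u^\top \Pi K_\bx \Pi^\top u = \frac{1}{n}\hnorm{(u^\top\Pi)\cdot k_\bx}^2$, dualizes via the attained characterization $\hnorm{g}=\max_{\hnorm{f}\le 1}\hprod{f}{g}$, and uses the reproducing identity $\hprod{f}{(u^\top\Pi)\cdot k_\bx}=u^\top\Pi f(\bx)$; both claims then follow by keeping or exchanging the two maxima over $u$ and $f$. Your version packages the same two ingredients as the sampling operator $T f=\frac{1}{\sqrt n}\Pi f(\bx)$ and its adjoint -- your adjoint relation $\iprod{Tf}{v}=\iprod{f}{T^*v}_{\calH}$ \emph{is} the paper's reproducing identity, and your factorization $TT^*=\Pi K_\bx\Pi^\top$ \emph{is} the paper's first equality -- but then proves \eqref{eq:AKA-psd} by Cauchy--Schwarz and the norm identity by the $C^*$-relation $\norm{TT^*}=\opnorm{T}^2$. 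The trade-off is instructive: the paper's dual-norm maximum is attained for free (at $g/\hnorm{g}$), so it gets a genuine $\max$ with no extra work, whereas the $C^*$-route only yields a supremum a priori; you correctly identified this as the one delicate point and closed it via finite-rankness of $T$ (indeed $T$ is automatically finite-rank since its codomain is $\reals^m$, or equivalently one can restrict to the finite-dimensional span of $\{k_{x_1},\dots,k_{x_n}\}$, on whose compact unit ball the supremum is attained). A small remark: boundedness of $T$ does not actually require $\sup_x k(x,x)<\infty$, since only the finitely many points $x_1,\dots,x_n$ enter; this is harmless here because the paper assumes it anyway. What your approach buys is reusability -- the factorization $TT^*$ and the adjoint identity generalize immediately to other quadratic forms in $k_\bx$ -- while the paper's computation is more elementary and self-contained.
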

\begin{proof}
For any $u\in\reals^n$, we have 
\begin{align}
u^\top \Pi K_{\bx} \Pi^\top u
&=\frac{1}{n}\hprod{(u^\top \Pi)\cdot k_{\bx}}{(u^\top \Pi)\cdot k_{\bx}}
= \frac{1}{n}\max_{\hnorm{f}\le 1}\hprod{f}{(u^\top \Pi)\cdot k_{\bx}}^2\nonumber\\
&=\frac{1}{n}\max_{\hnorm{f}\le 1} (u^\top \Pi f(\bx))^2
=\frac{1}{n}\max_{\hnorm{f}\le 1} u^\top (\Pi f(\bx)) (\Pi f(\bx))^\top u,\label{eq:AKA-f}
\end{align}
where the second equality used the fact that $\hnorm{g}=\max_{\hnorm{f}\le 1}\hprod{f}{g}$.
Then \eqref{eq:AKA-psd} follows from \eqref{eq:AKA-f}.
Note that $\Pi K_{\bx} \Pi^\top$ is a symmetric matrix. Then, 
\[
\norm{\Pi K_{\bx} \Pi^\top}
=\max_{\norm{u}\le 1} \frac{1}{n} u^\top \Pi K_{\bx} \Pi^\top u
=\max_{\hnorm{f}\le 1,\norm{u}\le 1}\frac{1}{n}(u^\top \Pi f(\bx))^2
=\max_{\hnorm{f}\le 1} \frac{1}{n}\norm{\Pi f(\bx)}^2, 
\]
where the second equality follows from  \eqref{eq:AKA-f}. 

\qedhere 
\end{proof}

\vskip \baselineskip 

\begin{proof}[\bf Proof of \prettyref{lmm:eigen-KP}]
\underline{\FedAvg:}  
For any $f\in\calH$ we have 
\[
\hprod{f}{\calL_i f}
\overset{(a)}{=}\hnorm{f}^2-\frac{\eta}{n_i}\norm{f(\bx_i)}^2
\le \hnorm{f}^2, 
\]
where equality (a) follows from the definition of $\calL_i$, proving $\opnorm{\calL_i}\le 1$. 
We show $\calL_i$ is positive by lower bounding $\hprod{f}{\calL_i f}$ as follow: 
\[
\hprod{f}{\calL_i f}
=\hnorm{f}^2-\frac{\eta}{n_i}\norm{f(\bx_i)}^2
\overset{(a)}{\ge} \hnorm{f}^2(1-\eta\norm{K_{\bx_i}})
\overset{(b)}{\ge} \hnorm{f}^2(1-\gamma)\overset{(c)}{>}0, 
\]
where inequality (a) follows from \prettyref{lmm:K-matrix-op-norm}, (b) holds by definition of $\gamma$, and (c) is true because that   $\gamma<1$. 
Hence, we obtain that $\calL_i$ is positive with $\opnorm{\calL_i}\le 1$, which immediately implies that both $\calL_i^s$ and $\calL=\sum_i w_i \calL_i^s$ are positive and their operator norm is upper bounded by $1$. 

\underline{\FedProx:} 
For any $f\in \calH$, let $g\triangleq \tilde{\calL}_if$. Next we show that $0\le \Hprod{\tilde\calL_i^{-1} f}{f} \le \hnorm{f}^2$. By definition, it holds that $\hprod{f}{\tilde \calL_i^{-1} f} =\hprod{\tilde\calL_i g}{g}$. We have 
\begin{align*}
\hprod{\tilde\calL_i g}{g} = \hprod{g+\frac{\eta}{n_i}\sum_{j=1}^{n_i}g(\bx_{ij}) k_{\bx_{ij}}}{g} =\hnorm{g}^2 + \frac{\eta}{n_i}\norm{g(\bx_i)}^2 \ge 0.     
\end{align*}
In addition, we have 
\begin{align*}
\hnorm{f}^2 & = \hprod{\tilde\calL_i g}{\tilde\calL_ig}   = \hnorm{g}^2 + 2\frac{\eta}{n_i}\norm{g(\bx_i)}^2 + \pth{\frac{\eta}{n_i}}^2\sum_{j=1}^{n_i}\sum_{j^{\prime}=1}^{n_i} g(x_{ij})g(x_{ij^{\prime}})k(x_{ij}, x_{ij^{\prime}}) \\
& \overset{(a)}{\ge} \hnorm{g}^2 + 2\frac{\eta}{n_i}\norm{g(\bx_i)}^2 \ge \hnorm{g}^2 + \frac{\eta}{n_i}\norm{g(\bx_i)}^2 = \hprod{\tilde\calL_i g}{g}, 
\end{align*}
where inequality (a) is true because the kernel function $k$ is positive semi-definite. 
Hence, we obtain that $\tilde\calL_i^{-1}$ is positive with $\opnorm{\tilde\calL_i^{-1}}\le 1$ regardless of $\gamma$, and so is $\calL=\sum_i w_i\tilde\calL_i^{-1}$.

Note that $P$ is positive definite when $\gamma<1$ for \FedAvg and is positive definite regardless of $\gamma$ for \FedProx. 
Positive definiteness ensure that the matrix $K_{\bx}P$ is similar\,\footnote{Recall that two matrices $A, B\in \reals^{n\times n}$ are similar if there exists an invertible matrix $Q$ such that $B=Q^{-1}AQ$.} to $P^{1/2} K_{\bx} P^{1/2}$ which has non-negative eigenvalues only. So it suffices to prove $\Norm{\eta P^{1/2} K_{\bx} P^{1/2}}\le 1$.
By \prettyref{lmm:K-matrix-op-norm},
\[
\norm{\eta P^{1/2} K_{\bx} P^{1/2}}
= \max_{\hnorm{f}\le 1}\frac{\eta}{N}f(\bx)^\top Pf(\bx)
= \max_{\hnorm{f}\le 1}\frac{\eta}{N}\hprod{f}{(f(\bx)^\top P) \cdot (k_{\bx})}.
\]
Applying \prettyref{lmm:block_magic} yields that
\[
\frac{\eta}{N}\hprod{f}{f(\bx)\cdot (Pk_{\bx})}
=\hprod{f}{f(\bx)\cdot \Psi}
=\hprod{f}{f-\calL f}
=\hnorm{f}^2-\hprod{f}{\calL f}
\le \hnorm{f}^2,
\]
where the last inequality holds because that  
$\calL$ is positive. 
\end{proof}

\vskip \baselineskip

\begin{proof}[\bf Proof of \prettyref{lmm:condition-P}]
By the block structure, it holds that  $\|P^{-1}\|_2=\max_{i\in[M]}\|P_{ii}^{-1}\|_2=\frac{1}{\lambda_{\min}(P_{ii})}$ and $\|P\|_2=\max_{i\in[M]}\|P_{ii}\|_2$.
For \FedAvg, we have $\|P_{ii}\|_2 \le s$, and
\[
\lambda_{\min}(P_{ii})\ge 
\sum_{\tau=0}^{s-1}(1-\gamma)^\tau
=\frac{1-(1-\gamma)^s}{\gamma}.
\]
For \FedProx, we have $\|P_{ii}\|_2\le 1$ and $\lambda_{\min}(P_{ii})\ge (1+\gamma)^{-1}$. Finally, since the eigenvalues of $K_\bx P$ coincide with those of $P^{1/2}K_\bx P^{1/2}$, \prettyref{eq:eigen_comp} follows from Ostrowski's inequality (see e.g.~\cite[Theorem 4.5.9]{horn2012matrix}) and the fact that 
$s/\kappa \le \lambda_{\min}(P_{ii})\ \le \|P_{ii}\|_2 \le s $ for \FedAvg and 
$1/\kappa \le \lambda_{\min}(P_{ii})\ \le \|P_{ii}\|_2 \le 1$ for \FedProx.
\end{proof}


\section{Proofs in Section \ref{sec:pred_error_emp} }
\label{app: convergence}
In this section, we present the missing proofs of results in Section  \ref{sec:pred_error_emp}. 
We focus on proving the results for \FedAvg. The proof 
for \FedProx follows verbatim using the facts that $\norm{P} \le 1$ and that $\norm{P^{-1}} \le \kappa.$

One key idea is to apply the eigenvalue decomposition of $K_{\bx}P$
and to project $f(\bx)$ to the eigenspace of $K_{\bx} P$ for $f\in\calH$. 
We first describe the eigen-decomposition and present bounds on relevant matrix norms.
Recall that 
$P$ is positive-definite, and thus
$K_{\bx}P$ is similar to $P^{1/2} K_{\bx} P^{1/2}$, whose eigenvalue decomposition is denoted as 
\begin{align}
P^{1/2} K_{\bx} P^{1/2} = U \Lambda U^\top, \label{eq:EVD_PKP}
\end{align}
where $U$ is unitary, i.e., $U^{\top} = U^{-1}$, 
and $\Lambda=\diag{\Lambda_{1},\dots,\Lambda_{N}}$ 
is a $N \times N$ diagonal matrix with non-negative entries. 
Let $V \triangleq P^{-1/2} U$ and $L\triangleq I-\eta K_{\bx} P$.
Then,
\begin{equation}
\label{eq:K-eigen}
K_{\bx} = V \Lambda V^{\top},
\qquad K_{\bx} P= V\Lambda V^{-1},
\qquad L^t = V(I-\eta\Lambda)^t V^{-1}.
\end{equation}
By the definition of $V$,
\begin{equation}
\label{eq:norm-V}
\norm{V}^2=\norm{P^{-1}}\le \kappa/s, 
\qquad \norm{V^{-1}}^2=\norm{P}\le s,
\end{equation}
where the upper bounds of $\norm{P^{-1}}$ and $\norm{P}$ are derived in the proof of \prettyref{lmm:condition-P}.
Since $\gamma<1$, \prettyref{lmm:eigen-KP} shows that $\Norm{\eta \Lambda}\le 1$ and thus 
$\Norm{I-(I-\eta \Lambda)^t}\le 1$. Therefore, using the eigenvalue decomposition in \eqref{eq:K-eigen} and the upper bounds in \eqref{eq:norm-V}, we have
\begin{equation}
\label{eq:norm-I-Lt}
\norm{I-L^t}
=\norm{V(I-(I-\eta \Lambda)^t)V^{-1}}
\le \sqrt{\kappa}.
\end{equation}

\begin{lemma}
\label{lmm:Fnorm-I-Lt}
\[
\Fnorm{I-L^t}
\le \sqrt{\kappa}\Fnorm{I-(I-\eta \Lambda)^t}
\le \sqrt{\kappa N\eta t s} \calR \left( \frac{1}{\sqrt{\eta ts}} \right),
\]
where $\calR$ is the empirical Rademacher complexity defined in \eqref{eq:R_K_def}.
\end{lemma}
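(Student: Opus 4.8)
The plan is to establish the two inequalities separately, relying on the eigen-decomposition \eqref{eq:K-eigen} for the first and on a termwise scalar estimate for the second. For the first inequality, I would start from the identity $I-L^t = V\bigl(I-(I-\eta\Lambda)^t\bigr)V^{-1}$, which follows immediately from $L^t = V(I-\eta\Lambda)^t V^{-1}$ in \eqref{eq:K-eigen} together with $I = VV^{-1}$. Since the Frobenius norm is submultiplicative against the spectral norm, i.e.\ $\Fnorm{AMB}\le \norm{A}\,\norm{B}\,\Fnorm{M}$, this gives
\[
\Fnorm{I-L^t} \le \norm{V}\,\norm{V^{-1}}\,\Fnorm{I-(I-\eta\Lambda)^t}.
\]
The bounds $\norm{V}^2\le \kappa/s$ and $\norm{V^{-1}}^2\le s$ recorded in \eqref{eq:norm-V} then yield $\norm{V}\,\norm{V^{-1}}\le \sqrt{\kappa}$, which is exactly the first claimed inequality.

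For the second inequality, the key observation is that $\Lambda$ is diagonal, so $(I-\eta\Lambda)^t$ is diagonal with entries $(1-\eta\Lambda_i)^t$, whence
\[
\Fnorm{I-(I-\eta\Lambda)^t}^2 = \sum_{i=1}^N \bigl(1-(1-\eta\Lambda_i)^t\bigr)^2.
\]
By \prettyref{lmm:eigen-KP} we have $\eta\Lambda_i\in[0,1]$, so each $a_i \triangleq 1-(1-\eta\Lambda_i)^t$ lies in $[0,1]$; this gives $a_i^2\le a_i$, while Bernoulli's inequality gives $a_i = 1-(1-\eta\Lambda_i)^t\le t\eta\Lambda_i$. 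Combining the two bounds yields $a_i^2\le \min\{1,\,t\eta\Lambda_i\}$. Invoking the comparison $\Lambda_i\le s\lambda_i$ from \eqref{eq:eigen_comp} (for \FedAvg) then produces $\Fnorm{I-(I-\eta\Lambda)^t}^2 \le \sum_{i=1}^N \min\{1,\,\eta t s\,\lambda_i\}$.

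Finally, I would identify the right-hand side with $N\eta t s\,\calR^2(1/\sqrt{\eta t s})$ by factoring $\eta t s$ out of the minimum in the definition \eqref{eq:R_K_def}, using $\min\{1,\eta ts\lambda_i\}=\eta ts\,\min\{\lambda_i,\,1/(\eta ts)\}$; taking square roots then completes the proof. The computations are entirely elementary, so there is no genuine obstacle here; the only two points requiring care are choosing the correct direction of the Frobenius--spectral submultiplicativity (placing the spectral factors on the outer matrices $V$ and $V^{-1}$) and the two-sided scalar estimate $\bigl(1-(1-\eta\Lambda_i)^t\bigr)^2\le\min\{1,t\eta\Lambda_i\}$, which crucially uses $\eta\Lambda_i\le 1$ from \prettyref{lmm:eigen-KP}. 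The identical argument covers \FedProx upon setting $s=1$ and using $\Lambda_i\le\lambda_i$.
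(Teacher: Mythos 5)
Your proof is correct and follows essentially the same route as the paper's: the identity $I-L^t=V\bigl(I-(I-\eta\Lambda)^t\bigr)V^{-1}$ combined with the mixed Frobenius--spectral bound and \eqref{eq:norm-V} for the first inequality, and the scalar estimate $\bigl(1-(1-\eta\Lambda_i)^t\bigr)^2\le\min\{1,\eta t\Lambda_i\}$ together with $\Lambda_i\le s\lambda_i$ from \eqref{eq:eigen_comp} for the second. The only cosmetic difference is that the paper reaches the scalar bound via the intermediate quantity $\min\{1,\eta^2t^2\Lambda_i^2\}$ before relaxing it, whereas you use $a_i^2\le a_i$ directly; both steps are immediate and equivalent.
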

\begin{proof}
Applying the the eigenvalue decomposition \eqref{eq:K-eigen} and the inequality $\Fnorm{AB}\le \Norm{A}\Fnorm{B}$ (see, e.g., \cite[5.6.P20]{horn2012matrix}), we obtain
\[
\Fnorm{I-L^t}
= \Fnorm{V(I-(I-\eta \Lambda)^t)V^{-1}}
\le \norm{V}\norm{V^{-1}}\Fnorm{I-(I-\eta \Lambda)^t}
\le \sqrt{\kappa}\Fnorm{I-(I-\eta \Lambda)^t},
\]
where the last inequality used \eqref{eq:norm-V}.

Next we prove the second inequality.
Since $0\le \eta\Lambda_i\le 1$, it holds that
\begin{align*}
    \Fnorm{I-(I-\eta \Lambda)^t}^2
    &=\sum_{i=1}^N \left(  1- \left( 1- \eta \Lambda_{i} \right)^t \right)^2 \\
    & \le \sum_{i=1}^N \min\left\{  1, \eta^2 t^2 \Lambda_{i}^2 \right\} \\
    & \le \sum_{i=1}^N \min\left\{  1, \eta t \Lambda_{i} \right\}\\
    & \le  \sum_{i=1}^N \min\left\{  1, \eta t \lambda_i s \right\},
\end{align*} 
where the last inequality holds because $\Lambda_{i} \le \lambda_i \|P\| \le \lambda_i s $ in view of~\prettyref{eq:eigen_comp}. 
The conclusion follows from the definition of $\calR$ in~\prettyref{eq:R_K_def}.
\end{proof}


\subsection{Proof of~\prettyref{prop:pred_error_general}} \label{sec:conv_f}
%
We show the convergence of the prediction error~\prettyref{eq:pred_L2}.
Plugging $y=f(\bx)+\Delta_f+\xi$ into~\prettyref{eq:pred_error_recursion_1}, we get that
$$
f_{t}(\bx)= \left[I - \eta K_\bx P\right]f_{t-1}(\bx) + \eta K_\bx P \left( f(\bx)+\Delta_f+\xi \right).
$$
Subtracting $f(\bx)$ from both hand sides yields that
$$
f_{t}(\bx) - f(\bx) = \left[I - \eta K_\bx P\right] \left(
f_{t-1}(\bx) - f(\bx)
\right) + \eta K_\bx P \left(\Delta_f+\xi \right).
$$
%
Unrolling the above recursion and recalling $L\triangleq I-\eta K_{\bx} P$, we deduce that
\begin{align}
 f_t(\bx) - f(\bx) &= L^t  (f_0(\bx) -f(\bx) ) + \sum_{\tau=0}^{t-1} L^\tau \eta K_\bx P \left(\xi+ \Delta_{f}\right) 
 \nonumber \\
 & = L^t  (f_0(\bx)-f(\bx) ) + \left( I - L^t \right) \left(\xi+ \Delta_{f}\right),
 \label{eq:pred_error_expansion delta}
\end{align}
where the last equality follows from the identity that 
$\sum_{\tau=0}^{t-1} (I- A)^\tau A = I- (I-A)^t.$
It follows that 
\begin{align}
 \norm{f_t(\bx) - f(\bx)}^2  & \le 3\norm{L^t   (f_0(\bx) -f(\bx) )}^2 + 3 \norm{(I - L^t )\xi}^2
+3 \norm{(I - L^t ) \Delta_{f}}^2  \nonumber \\
&
\le 3\norm{L^t   (f_0(\bx) -f(\bx) )}^2 + 3 \norm{(I - L^t )\xi}^2
+3 \kappa \norm{\Delta_{f}}^2,
\label{eq: thm1: obj}
\end{align}
where the last inequality holds due to~\prettyref{eq:norm-I-Lt}.
To finish the proof of~\prettyref{prop:pred_error_general},
it suffices to apply
the following two lemmas, 
which bound the 
first (bias) and  the second  (variance)
terms in \eqref{eq: thm1: obj}, respectively. 


\begin{lemma}[Bias]\label{lmm:bias}
For all iterations $t=1,2, \ldots,$ it holds that 
 $$
 \frac{1}{N} \norm{L^t  (f_0(\bx) -f(\bx) )}^2
 \le \kappa \delta_1(t) \lnorm{f_0-f}{\calH}^2,
 $$
where 
 $$
\delta_1(t) \triangleq \frac{1}{s} \max_{1 \le i \le N} \left( 1 - \eta \Lambda_{i} \right)^{2t} \Lambda_{i} \le \frac{1}{2e \eta ts }.
 $$
\end{lemma}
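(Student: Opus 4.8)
The plan is to reduce the bias bound to an operator-norm estimate on a single symmetric matrix and then optimize a scalar function of $t$. Write $g \triangleq f_0 - f \in \calH$, so the quantity to control is $\frac{1}{N}\norm{L^t g(\bx)}^2$. First I would invoke the pointwise PSD inequality \eqref{eq:AKA-psd} from \prettyref{lmm:K-matrix-op-norm}, applied with $\Pi = L^t$, $n = N$, and $f = g/\hnorm{g}$: after rescaling this reads $\frac{1}{N} L^t g(\bx) g(\bx)^\top (L^t)^\top \preceq \hnorm{g}^2\, L^t K_\bx (L^t)^\top$. The left-hand side is a rank-one PSD matrix whose only nonzero eigenvalue equals $\frac{1}{N}\norm{L^t g(\bx)}^2$, so taking the largest eigenvalue of both sides (monotone under $\preceq$) yields $\frac{1}{N}\norm{L^t g(\bx)}^2 \le \hnorm{g}^2\,\norm{L^t K_\bx (L^t)^\top}$.

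Second, I would simplify $L^t K_\bx (L^t)^\top$ using the simultaneous factorizations in \eqref{eq:K-eigen}, namely $L^t = V(I-\eta\Lambda)^t V^{-1}$ and $K_\bx = V\Lambda V^\top$. Since $I-\eta\Lambda$ and $\Lambda$ are diagonal and $V^\top (V^{-1})^\top = (V^{-1}V)^\top = I$, the inner factors telescope to give $L^t K_\bx (L^t)^\top = V(I-\eta\Lambda)^t \Lambda (I-\eta\Lambda)^t V^\top = V(I-\eta\Lambda)^{2t}\Lambda V^\top$. Bounding its operator norm by $\norm{V}^2$ times the largest diagonal entry, and using $\norm{V}^2 = \norm{P^{-1}} \le \kappa/s$ from \eqref{eq:norm-V}, gives $\norm{L^t K_\bx (L^t)^\top} \le \frac{\kappa}{s}\max_{1\le i\le N}(1-\eta\Lambda_i)^{2t}\Lambda_i = \kappa\,\delta_1(t)$. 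Combining the two steps establishes the displayed inequality $\frac{1}{N}\norm{L^t g(\bx)}^2 \le \kappa\,\delta_1(t)\,\lnorm{f_0-f}{\calH}^2$.

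Finally, for the scalar bound on $\delta_1(t)$ I would use that \prettyref{lmm:eigen-KP} guarantees $\eta\Lambda_i \in [0,1]$. Substituting $x = \eta\Lambda_i$ and applying $1-x \le e^{-x}$ gives $(1-x)^{2t}x \le x\,e^{-2tx}$; maximizing $x\mapsto x e^{-2tx}$ over $x\ge 0$ at $x = 1/(2t)$ produces the value $\frac{1}{2et}$. Hence $\max_i (1-\eta\Lambda_i)^{2t}\Lambda_i = \frac{1}{\eta}\max_x (1-x)^{2t}x \le \frac{1}{2e\eta t}$, and dividing by $s$ yields $\delta_1(t) \le \frac{1}{2e\eta ts}$.

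I expect the only genuine subtlety to be the first step. One is tempted to substitute $L^t = V(I-\eta\Lambda)^t V^{-1}$ directly into $\norm{L^t g(\bx)}^2$ and expand, but the non-orthogonality of $V$ then blocks any clean way of relating $V^{-1}g(\bx)$ back to $\hnorm{g}$. Routing instead through the PSD inequality \eqref{eq:AKA-psd} is what converts the empirical evaluation vector $g(\bx)$ into the RKHS norm while isolating the symmetric matrix $L^t K_\bx (L^t)^\top$, after which the telescoping and the scalar optimization are routine. As noted at the start of the section, the \FedProx case follows verbatim by setting $s=1$ and using $\norm{P}\le 1$, $\norm{P^{-1}}\le\kappa$.
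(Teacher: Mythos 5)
Your proposal is correct and follows essentially the same route as the paper's proof: both pass through the PSD inequality of \prettyref{lmm:K-matrix-op-norm} to convert $g(\bx)$ into $\hnorm{g}^2$ times $\norm{L^t K_\bx (L^t)^\top}$, then use the factorizations in \eqref{eq:K-eigen} and $\norm{V}^2\le \kappa/s$ from \eqref{eq:norm-V} to reduce to $\max_i(1-\eta\Lambda_i)^{2t}\Lambda_i$, and finish with the same scalar maximization (the paper cites $(1-x)^t x\le \frac{1}{et}$ directly, while you derive it via $1-x\le e^{-x}$ — an immaterial difference). Your closing observation about why one must route through the symmetric matrix $L^t K_\bx (L^t)^\top$ rather than expand $V^{-1}g(\bx)$ directly is exactly the point of the paper's argument.
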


\begin{proof}
For any $f\in\calH$, it follows from \prettyref{lmm:K-matrix-op-norm} that $\frac{1}{N}f(\bx)f(\bx)^\top \preceq \lnorm{f}{\calH}^2 K_{\bx}$.
Then, 
\[
\frac{1}{N}\norm{L^t  f(\bx)}^2
= \frac{1}{N}\norm{L^t f(\bx)f(\bx)^\top (L^t)^\top}
\le \hnorm{f}^2\norm{L^t K_\bx (L^t)^\top}.
\]
Applying \eqref{eq:K-eigen} yields that
\begin{align*}
\norm{L^t K_\bx (L^t)^\top}
& =\norm{V[(I-\eta \Lambda)^{2t}\Lambda]V^\top}
\le \norm{(1-\eta\Lambda)^{2t}\Lambda}\norm{V}^2
\le \norm{(1-\eta\Lambda)^{2t}\Lambda} \kappa/s,
\end{align*}
where the last inequality used \eqref{eq:norm-V}.
The conclusion follows by noting that $f-f_0\in\calH$, $\Norm{\eta \Lambda}\le 1$ by \prettyref{lmm:eigen-KP}, and $(1-x)^t x \le \frac{1}{et}$ for all $x  \le 1$.
\end{proof}

\begin{lemma}[Variance]\label{lmm:variance}
For all iterations $t=1, 2, \ldots,$ it holds that
\begin{equation}
\label{eq:pred_error_sample_expect_1}
\frac{1}{N} \expect{\norm{( I - L^t)\xi}^2}
\le \kappa \sigma^2  \delta_2(t), 
\end{equation}
where
$$
\delta_2(t) \triangleq  \frac{1}{N} \sum_{i=1}^N \left( 1 - \left( 1- \eta \Lambda_{i} \right)^{t} \right)^2 
\le \frac{1}{N} \sum_{i=1}^N \min \left\{1, \eta t \Lambda_{i} \right\} \le \eta t s \calR^2 \left( \frac{1}{\sqrt{\eta ts}}\right).
$$
\end{lemma}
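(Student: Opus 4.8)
The plan is to reduce the random quantity $\expect{\norm{(I-L^t)\xi}^2}$ to a deterministic Frobenius-norm computation by exploiting the second-moment structure of the noise, and then to invoke \prettyref{lmm:Fnorm-I-Lt}. First I would record that, since the coordinates of $\xi$ are independent, zero-mean, and have variance at most $\sigma^2$, the covariance matrix satisfies $\Cov(\xi)\preceq \sigma^2 I$ (a diagonal matrix with entries $\le\sigma^2$). Setting $M\triangleq (I-L^t)^\top(I-L^t)\succeq 0$, the zero-mean property gives $\expect{\norm{(I-L^t)\xi}^2}=\expect{\xi^\top M\xi}=\Tr\!\left(M\,\Cov(\xi)\right)$. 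Because $M\succeq0$ and $\sigma^2 I-\Cov(\xi)\succeq0$, the trace of their product is non-negative (e.g.\ $\Tr(M(\sigma^2 I-\Cov(\xi)))=\Tr(M^{1/2}(\sigma^2 I-\Cov(\xi))M^{1/2})\ge0$), so $\Tr(M\,\Cov(\xi))\le \sigma^2\Tr(M)=\sigma^2\Fnorm{I-L^t}^2$. This step is where the assumption of merely independent (not identically distributed) noise is absorbed cleanly.

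Next I would bound the Frobenius norm using results already established. \prettyref{lmm:Fnorm-I-Lt} gives $\Fnorm{I-L^t}\le\sqrt{\kappa}\,\Fnorm{I-(I-\eta \Lambda)^t}$, and by the eigen-decomposition \eqref{eq:K-eigen} with $\Lambda=\diag{\Lambda_1,\dots,\Lambda_N}$ one has $\Fnorm{I-(I-\eta \Lambda)^t}^2=\sum_{i=1}^N\left(1-(1-\eta \Lambda_i)^t\right)^2=N\delta_2(t)$. Combining the two displays yields $\frac{1}{N}\expect{\norm{(I-L^t)\xi}^2}\le \frac{\sigma^2}{N}\cdot\kappa\, N\,\delta_2(t)=\kappa\sigma^2\delta_2(t)$, which is exactly \prettyref{eq:pred_error_sample_expect_1}.

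Finally, for the two inequalities on $\delta_2(t)$: since $\gamma<1$ guarantees $0\le\eta \Lambda_i\le1$ by \prettyref{lmm:eigen-KP}, each term $a_i\triangleq 1-(1-\eta \Lambda_i)^t$ lies in $[0,1]$, so $a_i^2\le a_i\le1$; moreover Bernoulli's inequality $(1-\eta \Lambda_i)^t\ge 1-\eta t \Lambda_i$ gives $a_i\le\eta t \Lambda_i$, whence $a_i^2\le\min\{1,\eta t \Lambda_i\}$ and the first inequality follows after averaging. Using $\Lambda_i\le\lambda_i s$ from \eqref{eq:eigen_comp} together with the definition \eqref{eq:R_K_def} of $\calR$ evaluated at $\epsilon^2=1/(\eta ts)$ gives $\frac{1}{N}\sum_{i=1}^N\min\{1,\eta t \Lambda_i\}\le\frac{1}{N}\sum_{i=1}^N\min\{1,\eta ts\lambda_i\}=\eta ts\,\calR^2\!\left(1/\sqrt{\eta ts}\right)$, the second inequality. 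I do not expect a serious obstacle here; the only real content is the reduction to $\Fnorm{I-L^t}$, already handled by \prettyref{lmm:Fnorm-I-Lt}, while the remaining steps are elementary spectral and scalar inequalities.
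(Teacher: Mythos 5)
Your proposal is correct and follows essentially the same route as the paper: both reduce $\expect{\norm{(I-L^t)\xi}^2}$ to $\Tr\left(\expect{\xi\xi^\top}Q\right)\le \sigma^2\Tr(Q)=\sigma^2\Fnorm{I-L^t}^2$ with $Q=(I-L^t)^\top(I-L^t)$ via PSD trace monotonicity, and then invoke \prettyref{lmm:Fnorm-I-Lt} together with $\Lambda_i\le \lambda_i s$ and the definition of $\calR$. The only cosmetic difference is that you re-derive the scalar bound $\left(1-(1-\eta\Lambda_i)^t\right)^2\le\min\{1,\eta t\Lambda_i\}$ directly via Bernoulli's inequality, whereas the paper obtains the same chain of inequalities inside the proof of \prettyref{lmm:Fnorm-I-Lt}.
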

\begin{proof}
Note that
\begin{align}
\label{eq: lm: variance: noise}
\norm{(I - L^t) \xi}^2
= \xi^\top Q \xi 
= \Tr\left( \xi \xi^\top Q \right),
\end{align}
where $Q=(I - L^t)^\top (I - L^t)\succeq 0$.
By the assumption $\expect{\xi \xi^\top} \preceq \sigma^2 I$ and the fact 
\begin{equation}
\label{eq:trace_fact}
\Tr(YZ) \ge 0, \quad \text{if $Y\succeq 0$ and $Z\succeq 0$},
\end{equation}
we have
\begin{equation}
\label{eq:expect-noise-norm-ub}    
\Expect\Norm{(I - L^t) \xi}^2 =\Tr\left(\Expect [\xi \xi^\top] Q \right)
\le \sigma^2 \Tr(Q)
= \sigma^2\Fnorm{I-L^t}^2.
\end{equation}
Then \eqref{eq:pred_error_sample_expect_1} follows from \prettyref{lmm:Fnorm-I-Lt}.
\end{proof}



\subsection{Proofs of Theorems~\ref{thm:pred_error_emp} --~\ref{thm: exponential} }
\label{sec:proof_corollaries}

We first deduce the convergence result with early stopping from~\prettyref{prop:pred_error_general}.
\begin{proof}[Proof of~\prettyref{thm:pred_error_emp}]
Plugging the upper bounds of $\delta_1(t) $ and $\delta_2(t)$ in Lemma~\ref{lmm:bias} and Lemma~\ref{lmm:variance} into~\prettyref{eq:error_bound_general} in~\prettyref{prop:pred_error_general},
we get that 
\begin{align*}
\frac{1}{N} \expects{\norm{f_t(\bx) - f(\bx) }^2}{\xi}
&\le \frac{3\kappa}{2e\eta ts} \lnorm{f_0-f}{\calH}^2 + 3\kappa \sigma^2 \eta ts \calR^2\left( \frac{1}{\sqrt{\eta ts}} \right) 
 + \frac{3\kappa}{N}\norm{\Delta_{f}}^2\\
&\le \frac{3 \kappa }{2e\eta ts} \left( \lnorm{f_0-f}{\calH}^2 +1 \right)  + \frac{3\kappa}{N}\norm{\Delta_{f}}^2, \quad \forall 1 \le t \le T,    
\end{align*}
where the last inequality holds because by the definition of early stopping time $T$ given in~\prettyref{eq:def_early_stopping},
we have
$\eta ts \calR(1/\sqrt{\eta ts}) \le 1/(\sqrt{2e} \sigma)$ for all $t \le T$ given that $\calR(\epsilon)/\epsilon^2$ is non-increasing in $\epsilon$.
\end{proof}



Then we deduce the convergence results that hold with high probability. 


\begin{proof}[Proof of~\prettyref{thm: Light-tailed noises}]
For any $t\ge 0$, let
$$
q_t \triangleq \prob{\frac{1}{N} \norm{\left[ I - L^t \right] \xi}^2
\ge \frac{1}{N} \expect{ \norm{\left[ I - L^t \right] \xi}^2}
+ \frac{\delta}{N} },
$$
where $\delta=N \kappa/(e\eta ts)$. 
By definition of $\varepsilon_t$ in~\prettyref{eq:def_epsilon_t} and~\prettyref{thm:pred_error_emp}, we have
$\varepsilon_t \le q_t.$
Recall from~\eqref{eq: lm: variance: noise} that 
$\norm{(I - L^t) \xi}^2
= \Tr\left( \xi \xi^\top Q \right) $. 
It remains to show the concentration of the quadratic expression $ \Tr\left( \xi \xi^\top Q \right)$.
\paragraph{Sub-Gaussian noise. }
Using Hanson-Wright's inequality~\cite{rudelson2013hanson}, we get 
\begin{equation}
\label{eq:Hanson-Wright}    
\prob{ \Tr\left( \xi \xi^\top Q \right) - 
\expect{\Tr\left( \xi \xi^\top Q \right)} \ge \delta}
\le \exp \left( -c_1 \min \left\{ \frac{\delta}{\sigma^2\norm{Q}}, \frac{\delta^2}{\sigma^4 \fnorm{Q}^2} \right\} \right),
\end{equation}
where $c_1>0$ is a universal constant. 
Note that 
\begin{align}
\label{eq: bound Q}
\norm{Q} 
&\le  \norm{V^{-1}}^2 \norm{V}^2 
= \norm{P}\norm{P^{-1}} \le \kappa,\\
\label{eq:bound Q-F}
\fnorm{Q}^2
&=\Tr(QQ^\top) \le \norm{Q} \Tr(Q),
\end{align}
where the last inequality follows from~\prettyref{eq:trace_fact}.
Applying~\prettyref{lmm:Fnorm-I-Lt} yields that 
\begin{align*}
&\frac{\delta^2}{\sigma^4 \fnorm{Q}^2}
\ge \frac{\delta}{\sigma^2\norm{Q}}\frac{\delta}{\sigma^2 \Tr(Q)}
\ge \frac{\delta}{\sigma^2\norm{Q}}  \frac{\delta}{\sigma^2 \kappa N\eta ts \calR^2\pth{\frac{1}{\sqrt{\eta t s}}} },\\
& \frac{\delta}{\sigma^2\norm{Q}}
\ge \frac{\delta}{\sigma^2 \kappa}.
\end{align*}
Thus we obtain that
\begin{equation}
\label{eq:HW-prob}
\min \left\{ \frac{\delta}{\sigma^2\norm{Q}}, \frac{\delta^2}{\sigma^4 \fnorm{Q}^2} \right\}
\ge \frac{\delta}{\sigma^2 \kappa} \min \left\{ 1, 
\frac{\delta}{\sigma^2 \kappa N\eta ts \calR^2\pth{\frac{1}{\sqrt{\eta t s}}} }\right\}.
\end{equation}
 Combining~\eqref{eq:expect-noise-norm-ub}, \eqref{eq:Hanson-Wright}, and \eqref{eq:HW-prob} yields that 
 \begin{align}
q \le \exp \left( - 
\frac{c_1 \delta}{\sigma^2 \kappa} \min \left\{ 1, 
\frac{\delta}{\sigma^2 \kappa N\eta ts \calR^2\pth{1/\sqrt{\eta t s}} }\right\}
\right).
\label{eq:pred_error_sample_hp_1}
\end{align}
Recalling $\delta=N \kappa/(e\eta ts)$, we deduce that for $t \le T,$
\begin{align*}
q & \le \exp \left( - 
\frac{c_1 N }{\sigma^2 e \eta ts} \min \left\{ 1, 
\frac{1}{\sigma^2 e (\eta ts)^2 \calR^2\pth{\frac{1}{\sqrt{\eta t s}}} }\right\} 
\right) \\
& \le \exp \left( - \frac{c_1 N }{\sigma^2 e \eta ts}\right),
\end{align*}
where the last inequality holds due to $\eta t s \calR(1/\sqrt{\eta t s}) \le 1/(\sqrt{2e}\sigma)$ for $t \le T$.

\paragraph{Heavy-tailed noise. }
We first prove a concentration inequality analogous to the Hanson-Wright inequality. Note that 
$\Tr\left( \xi \xi^\top Q \right)=\xi^\top Q \xi$. 
We decompose the deviation into two parts and bound their tail probabilities separately:
\begin{align}
\Prob\qth{\abs{\xi^\top Q \xi - \Expect[\xi^\top Q \xi]}>\delta}
&\le \prob{\abs{\sum_{i} Q_{ii}(\xi_i^2-\Expect \xi_i^2)}>\frac{\delta}{2}}
+ \prob{\abs{\sum_{i\ne j}Q_{ij}\xi_i\xi_j}>\frac{\delta}{2}}\nonumber\\
&\le \frac{\Expect\abs{\sum_{i} Q_{ii}(\xi_i^2-\Expect \xi_i^2)}^{p/2}}{(\delta/2)^{p/2}}
+ \frac{\Expect|\sum_{i\ne j}Q_{ij} \xi_i\xi_j|^p}{(\delta/2)^{p}}\label{eq:tail-moment}.
\end{align}
The first term involves a sum of independent random variables. 
Since $\Expect(\xi_i^2-\Expect \xi_i^2)^2\le \Expect|\xi_i|^4\triangleq M_4$ and $\Expect|\xi_i^2-\Expect \xi_i^2|^{p/2}\le 2^{p/2}\Expect|\xi_i|^p$,
by the Rosenthal-type inequality \cite[Theorem 5.2]{pinelis1994},
\begin{equation}
\label{eq:moment-diag}
\Expect\abs{\sum_{i} Q_{ii}(\xi_i^2-\Expect \xi_i^2)}^{p/2}
\le C_p\pth{M_p\sum_{i}|Q_{ii}|^{p/2}+\pth{M_4\sum_i |Q_{ii}|^2}^{p/4}}
\le 2C_p\Fnorm{Q}^{p/2}M_p,
\end{equation}
where $C_p$ only depends on $p$, and we used $M_4^{1/4}\le M_p^{1/p}$ and $\|x\|_p\le \|x\|_q$ for $p\ge q$.
For the second term, the decoupling inequality gives \cite{de1995}:
\[
\Expect\abs{\sum_{i\ne j}Q_{ij} \xi_i\xi_j}^p
\le 4^p\Expect\abs{\xi^\top Q \xi'}^p,
\]
where $\xi'$ is an independent copy of $\xi$.
By the moment inequalities for decoupled $U$-statistics \cite[Proposition 2.4]{gine2000}, we have
\begin{equation}
\label{eq:moment-cross}
\Expect\abs{\xi^\top Q \xi'}^p
\le C_p \pth{\sigma^{2p} \Fnorm{Q}^p  + \sigma^p M_p \|Q\|_{2,p}^p+M_p^2\| Q\|_{p,p}^p}
\le 3C_p\Fnorm{Q}^pM_p^2,
\end{equation}
where $\| \cdot \|_{p,q}$ denotes the $L_{p,q}$-norm given by $\| A \|_{p,q}^q=\sum_{j}(\sum_{i} |A_{ij}|^p)^{q/p}$.
Finally, we apply \eqref{eq:moment-diag} -- \eqref{eq:moment-cross} in the upper bound \eqref{eq:tail-moment} and obtain 
\begin{align}
\Prob\qth{\abs{\xi^\top Q \xi - \Expect[\xi^\top Q \xi]}>\delta}
 \le C'_p\pth{M_p\pth{\frac{\Fnorm{Q}}{\delta}}^{p/2}+M_p^2\pth{\frac{\Fnorm{Q}}{\delta}}^{p}}
 \label{eq:moment_tail_bound_1}
\end{align}
for some constant $C'_p \ge 1$ only depends on $p.$
We claim that 
\begin{align*}
\Prob\qth{\abs{\xi^\top Q \xi - \Expect[\xi^\top Q \xi]}>\delta}
 \le 2 C'_p M_p\pth{\frac{\Fnorm{Q}}{\delta}}^{p/2}.
\end{align*}
This is because when $\Fnorm{Q} \ge \delta$, the last display equation automatically holds; otherwise, it follows from~\prettyref{eq:moment_tail_bound_1}.

Recalling $\delta=N \kappa/(e\eta ts)$ and 
$
\fnorm{Q}
\le \sqrt{\norm{Q}}\sqrt{\Tr(Q)} \le \kappa 
\sqrt{N\eta ts} \calR(1/\sqrt{\eta ts}),
$
we deduce that for $t \le T,$
\begin{align*}
q & \le 
2C'_p M_p \left( \frac{ \kappa \sqrt{N\eta ts} \calR(1/\sqrt{\eta ts})  e \eta ts }{N \kappa } \right)^{p/2} \\
& \le 2C'_p M_p  \left( \frac{  e \eta ts }{2 N\sigma^2 } \right)^{p/4}.\qedhere
\end{align*}
\end{proof}

Finally we deduce the exponential convergence result from \prettyref{prop:pred_error_general} 
in the special case of finite-rank kernels. 
\begin{proof}[Proof of \prettyref{thm: exponential}]
Since $\Lambda_{i}=0$ for $i>d$, in view of~\prettyref{lmm:bias}, we have
$$
\delta_1(t) \le \frac{1}{s} \max_{1 \le i \le d} \left( 1- \eta \Lambda_{i} \right)^{2t}\Lambda_{i}
\le \frac{1}{\eta s} \left( 1- \eta \lambda_d s/\kappa \right)^{2t}
 \le \frac{1}{\eta s} \exp \left( - 2  \lambda_d \eta t s /\kappa \right),
$$
where the second inequality holds due to $\Lambda_{d} \ge \lambda_d s/\kappa$ in view of~\prettyref{eq:eigen_comp}. 
Moreover, in view of~\prettyref{lmm:variance}, we have
$
\delta_2(t) \le d/N.
$
It follows from~\prettyref{prop:pred_error_general} that
\[
\frac{1}{N} \expects{\norm{f_t(\bx)- f(\bx)}^2}{\xi}
\le 3 \frac{ \kappa }{\eta s}  \lnorm{f_0-f}{\calH}^2\exp\left( - 2 \frac{\eta s}{\kappa} \lambda_d t \right) + 
3 \kappa \sigma^2 \frac{d}{N}  + \frac{3\kappa}{N} \norm{\Delta_{f}}^2, \quad \forall t. \qedhere
\]
\end{proof}

\subsection{Upper bound to the RKHS norm}

\begin{lemma}\label{lmm:bound_ft_H}
There exists a universal constant $c$
such that, for any $t\le T$, 
with probability at least
$1-\exp \left( -c N/(\sigma^2 \eta t s) \right)$, 
\[
\Hnorm{f_t-f} \le \Hnorm{f_0-f} + 1 +  
\sqrt{\frac{\eta s t}{N}}\norm{\Delta_{f}},
\quad \forall~f\in\calH. 
\]
\end{lemma}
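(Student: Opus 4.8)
The plan is to track $f_t-f$ directly in the RKHS norm rather than only at the observed design. Substituting $y=f(\bx)+\Delta_f+\xi$ into $f_t=\calL f_{t-1}+y\cdot\Psi$ and using $f(\bx)\cdot\Psi=f-\calL f$ from \prettyref{lmm:block_magic} gives $f_t-f=\calL(f_{t-1}-f)+(\Delta_f+\xi)\cdot\Psi$, so that
\[
f_t-f=\calL^t(f_0-f)+w_\Delta+w_\xi,\qquad w_v\triangleq\sum_{\tau=0}^{t-1}\calL^\tau(v\cdot\Psi),
\]
taken at $v=\Delta_f$ and $v=\xi$. Since $\opnorm{\calL}\le1$ by \prettyref{lmm:eigen-KP}, the first term has RKHS norm at most $\Hnorm{f_0-f}$, so by the triangle inequality it remains to bound $\Hnorm{w_\Delta}$ deterministically and $\Hnorm{w_\xi}$ with high probability. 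Crucially $w_\xi$ does not depend on $f$, so a single good event yields the uniform-in-$f$ claim.

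The heart of the argument is a closed form for $\Hnorm{w_v}$. Using self-adjointness of $\calL$, the identity $\calL^\tau g(\bx)=L^\tau g(\bx)$ (iterate \prettyref{lmm:block_magic}), and $\Hprod{h}{v\cdot\Psi}=\frac{\eta}{N}(Pv)^\top h(\bx)$, I would show $\Hprod{g}{w_v}=\frac{\eta}{N}v^\top PS\,g(\bx)=\Hprod{g}{z\cdot k_\bx}$ for every $g$, where $S\triangleq\sum_{\tau=0}^{t-1}L^\tau$ and $z\triangleq\frac{\eta}{N}S^\top Pv$; hence $w_v=z\cdot k_\bx$ and
\[
\Hnorm{w_v}^2=Nz^\top K_\bx z=\tfrac1N v^\top Mv,\qquad M\triangleq\eta^2 PSK_\bx S^\top P.
\]
Diagonalizing via $PS=P^{1/2}UDU^\top P^{1/2}$ and $P^{1/2}K_\bx P^{1/2}=U\Lambda U^\top$ from \eqref{eq:EVD_PKP}, with $D=\diag{d_1,\dots,d_N}$ and $d_i=\sum_{\tau=0}^{t-1}(1-\eta\Lambda_i)^\tau$, yields $M=P^{1/2}U\,\diag{g_1^2/\Lambda_1,\dots,g_N^2/\Lambda_N}\,U^\top P^{1/2}$, where $g_i\triangleq1-(1-\eta\Lambda_i)^t$ and the entry is read as $0$ when $\Lambda_i=0$.

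With this closed form, both bounds reduce to elementary spectral estimates. Because $g_i\le1$ and $g_i/\Lambda_i=\eta d_i\le\eta t$, we have $\Norm{M}\le\Norm{P}\eta t\le\eta ts$, giving $\Hnorm{w_\Delta}^2\le\frac1N\Norm{M}\norm{\Delta_f}^2\le\frac{\eta ts}{N}\norm{\Delta_f}^2$, the claimed heterogeneity term. For the noise, $\Expect_\xi[\Hnorm{w_\xi}^2]\le\frac{\sigma^2}{N}\Tr(M)$; using $g_i^2/\Lambda_i\le\eta t\,g_i$ together with $g_i\le\min\{1,\eta t\Lambda_i\}$ and the $\delta_2$ estimate of \prettyref{lmm:variance} gives $\Tr(M)\le\Norm{P}\eta t\sum_i g_i\le Ns^2\eta^2t^2\calR^2(1/\sqrt{\eta ts})$, so $\Expect_\xi[\Hnorm{w_\xi}^2]\le\sigma^2(\eta ts\,\calR(1/\sqrt{\eta ts}))^2\le\frac1{2e}$ for $t\le T$ by \eqref{eq:def_early_stopping}. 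Finally, writing $\Hnorm{w_\xi}^2=\frac1N\xi^\top M\xi$ and applying the Hanson-Wright inequality for sub-Gaussian $\xi$ (as in \prettyref{thm: Light-tailed noises}) with deviation $\delta\asymp N$, together with $\Fnorm{M}^2\le\Norm{M}\Tr(M)$ and $\Norm{M}\le s\eta t$, makes both arguments of the Hanson-Wright minimum of order $N/(\sigma^2\eta ts)$—the ratio of the Frobenius term to $N/(\sigma^2\eta ts)$ being $(\sigma^2(\eta ts\,\calR)^2)^{-1}\ge2e$ for $t\le T$—which gives $\Hnorm{w_\xi}\le1$ with probability at least $1-\exp(-cN/(\sigma^2\eta ts))$.

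The main obstacle will be establishing the closed form $\Hnorm{w_v}^2=\frac1N v^\top Mv$ and the diagonalization of $M$, which must carefully combine self-adjointness of $\calL$, the evaluation identities of \prettyref{lmm:block_magic}, and the eigenstructure \eqref{eq:EVD_PKP}, while correctly handling the $\Lambda_i=0$ modes (where $g_i=0$ so the entry vanishes). A secondary subtlety is the estimate $\sum_i g_i^2/\Lambda_i\le\eta t\sum_i g_i$, which is exactly what converts the extra $1/\Lambda_i$ factor—absent in the $L^2(\mathbb{P}_N)$ analysis of \prettyref{prop:pred_error_general}—into the Rademacher complexity already controlled by the early-stopping rule.
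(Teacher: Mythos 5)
Your proposal is correct, and it shares the paper's overall skeleton --- the same decomposition $f_t-f=\calL^t(f_0-f)+\sum_{\tau=0}^{t-1}\calL^\tau\bigl((\xi+\Delta_f)\cdot\Psi\bigr)$ as in \eqref{eq:ft-f-decompose}, the same bound $\Hnorm{\calL^t(f_0-f)}\le\Hnorm{f_0-f}$ via $\opnorm{\calL}\le1$, and the same endgame (early stopping forcing $\Expect[\xi^\top\Sigma\xi]\le 1/(2e)$, then Hanson--Wright with $\Fnorm{\cdot}^2\le\Norm{\cdot}\Tr(\cdot)$, the event being independent of $f$ so the bound is uniform) --- but the technical core is handled by a genuinely different route. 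The paper never writes the quadratic-form matrix explicitly: it defines $\Sigma_{ij}=\Hprod{\calT(e_i\cdot\Psi)}{\calT(e_j\cdot\Psi)}$ for $\calT=\sum_{\tau=0}^{t-1}\calL^\tau$ and controls $\Norm{\Sigma}\le\frac{s\eta t}{N}$ and $\Tr(\Sigma)\le(\eta ts)^2\calR^2(1/\sqrt{\eta ts})$ abstractly, through the self-adjoint-operator bound of \prettyref{lmm:norm-trace} combined with the eigenvalues $\tilde\lambda_i$ of $\calL$ (\prettyref{lmm:T-property}, using $1-\tilde\lambda_i\le s\eta\lambda_i$ from \prettyref{lmm:L-psd}). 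You instead compute the noise/heterogeneity term in closed form: your representation $w_v=z\cdot k_\bx$ with $z=\frac{\eta}{N}S^\top Pv$ is sound (symmetry of $P$ gives $v\cdot\Psi=\frac{\eta}{N}(Pv)\cdot k_\bx$, $\calL$ is self-adjoint, and testing against arbitrary $g\in\calH$ via $\calL^\tau g(\bx)=L^\tau g(\bx)$ pins down $w_v$), so your $M=\eta^2 PSK_\bx S^\top P$ is exactly $N\Sigma$, which you then diagonalize in the eigenbasis \eqref{eq:EVD_PKP} already used for the prediction-error analysis. The resulting estimates coincide with the paper's: $\Norm{M}\le\Norm{P}\max_i g_i^2/\Lambda_i\le s\eta t$ recovers $\Norm{\Sigma}\le s\eta t/N$ and hence the $\sqrt{\eta st/N}\,\norm{\Delta_f}$ term, while $g_i^2/\Lambda_i\le \eta t\,g_i\le \eta t\min\{1,\eta t\Lambda_i\}$ together with $\Lambda_i\le s\lambda_i$ recovers $\Tr(\Sigma)\le(\eta ts)^2\calR^2(1/\sqrt{\eta ts})$; your handling of the $\Lambda_i=0$ modes ($g_i=0$, so the entry $\eta^2 d_i^2\Lambda_i$ vanishes) is also correct. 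What each approach buys: the paper's operator lemma is reusable in greater generality --- it is invoked again with $\calT=(\calI-\calL)^{-1}$ in the proof of \prettyref{thm:conv_theta} --- and sidesteps explicit diagonalization; your route is more elementary and self-contained, replacing \prettyref{lmm:norm-trace}, \prettyref{lmm:T-property}, and \prettyref{lmm:L-psd} by a single explicit spectral formula, and it makes transparent exactly why the extra $1/\Lambda_i$ factor, absent from the $L^2(\mathbb{P}_N)$ analysis of \prettyref{prop:pred_error_general}, is absorbed by the early-stopping Rademacher bound.
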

\begin{proof}
Similar to \eqref{eq:L-noise}, for any $f\in\calH$, we use $\Delta_f$ in \eqref{eq:def-Delta-f} and obtain
\begin{equation}
\label{eq:ft-f-decompose}
f_t - f = \calL^t(f_0-f)+\sum_{\tau=0}^{t-1}\calL^{\tau}( (\xi+\Delta_f)\cdot \Psi).
\end{equation}
It follows from \prettyref{lmm:eigen-KP} that $\opnorm{\calL}\le 1$ and 
\begin{equation}
\label{eq:hnorm-f0-f}
\Hnorm{\calL^t(f_0-f)}\le \Hnorm{f_0-f}.
\end{equation}
For the second term of \eqref{eq:ft-f-decompose}, using the matrix $\Sigma$ defined in \eqref{eq:def-matrix-T},
we have 
$\Hnorm{\sum_{\tau=0}^{t-1}\calL^{\tau}(a\cdot \Psi)}^2=a^\top \Sigma a$ for any $a\in\reals^N$.
Applying \prettyref{lmm:norm-trace} with $\calT=\sum_{\tau=0}^{t-1}\calL^{\tau}$ yields that
\[
\norm{\Sigma}
\le \frac{s \eta}{N}\opnorm{\sum_{\tau=0}^{t-1}\calL^{\tau}(\calI-\calL^t)}
\le \frac{s \eta t}{N}.
\]
Therefore,
\begin{equation}
\label{eq:hnorm-delta-f}
\hnorm{\sum_{\tau=0}^{t-1}\calL^{\tau}( \Delta_f \cdot \Psi)}
=\sqrt{\Delta_f \Sigma \Delta_f}
\le \sqrt{\norm{\Sigma}}\norm{\Delta_f}
\le \sqrt{\frac{\eta s t}{N}}\norm{\Delta_{f}}.
\end{equation}
Finally we consider $\Hnorm{\sum_{\tau=0}^{t-1}\calL^{\tau}( \xi \cdot \Psi)}^2 = \xi^\top \Sigma \xi$.
Recall the early stopping rule~\prettyref{eq:def_early_stopping}, which implies that $\eta ts \calR(1/\sqrt{\eta ts}) \le 1/(\sqrt{2e} \sigma)$ for $t\le T$.
Then, by \prettyref{lmm:T-property},
\begin{equation}
\label{eq:theta-tr-Q-expect-ub}
\Expect[\xi^\top \Sigma \xi]
=\Expect[\Tr(\xi\xi^\top \Sigma)]
\le \sigma^2 \Tr(\Sigma)
\le \pth{\sigma \eta t s \calR \left( \frac{1}{\sqrt{\eta ts} }\right)}^2
\le \frac{1}{2e}.
\end{equation}
Using the Hanson-Wright inequality~\cite{rudelson2013hanson}, for a universal constant $c_1$,
$$
\prob{ \xi^\top \Sigma \xi - \Expect[\xi^\top \Sigma \xi]\ge \delta}
\le \exp \left( -c_1 \min \left\{ \frac{\delta}{\sigma^2 \norm{\Sigma}}, \frac{\delta^2}{\sigma^4 \fnorm{\Sigma}^2} \right\} \right).
$$
Since $ \fnorm{\Sigma}^2 \le \norm{\Sigma} \Tr(\Sigma)$.
Choosing $\delta = \frac{1}{2e}$ and invoking
$\sigma^2 \Tr(\Sigma) \le \delta$ from \eqref{eq:theta-tr-Q-expect-ub} and $\norm{\Sigma}\le \eta s t/N$, we get that 
\begin{equation}
\label{eq:theta-hp-deviation}
\prob{ \xi^\top \Sigma \xi - \Expect[\xi^\top \Sigma \xi]\ge \delta}
\le \exp \left( -c_1 N /(2\sigma^2 e \eta s t) \right).
\end{equation}
Hence, 
combining \eqref{eq:hnorm-f0-f} -- \eqref{eq:theta-hp-deviation}, we conclude the proof from \eqref{eq:ft-f-decompose}. 
\end{proof}


\begin{lemma}
\label{lmm:norm-trace}
Suppose $\calT:\calH\mapsto \calH$ is a self-adjoint linear operator. 
Let $A$ be a $N\times N$ matrix with $A_{ij}=\Hprod{\calT(\Psi_i)}{\calT(\Psi_j)}$.
Then,
\[
\norm{A}\le \frac{s\eta}{N}\opnorm{\calT(\calI-\calL)\calT},
\qquad
\Tr(A) \le \frac{s\eta}{N}\Tr(\calT(\calI-\calL)\calT).
\]
\end{lemma}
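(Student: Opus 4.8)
The plan is to reinterpret $A$ as a Gram matrix and compare it, through an adjoint identity, with the operator $\calT(\calI-\calL)\calT$. First I would introduce the evaluation operator $E:\calH\to\reals^N$, $Eg=g(\bx)$, whose adjoint is $E^*a=a\cdot k_\bx=\sum_k a_k k_{x_k}$ by the reproducing property \prettyref{eq:kernel_eval}. Since $\Psi=\frac{\eta}{N}Pk_\bx$ by \prettyref{lmm:block_magic} and $P$ is symmetric, the map $\Phi:\reals^N\to\calH$, $\Phi(a)\triangleq a\cdot\Psi$, factors as $\Phi=\frac{\eta}{N}E^*P$, and hence $\Phi^*=\frac{\eta}{N}PE$. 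With this notation the definition reads $A_{ij}=\Hprod{\calT\Phi e_i}{\calT\Phi e_j}$, so, writing $X\triangleq\calT\Phi$ and using that $\calT$ is self-adjoint, we get the clean identity $A=X^*X=\Phi^*\calT^2\Phi$; equivalently $a^\top A a=\hnorm{\calT(a\cdot\Psi)}^2$ and $\Tr A=\sum_k\hnorm{\calT\Psi_k}^2$.

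Second I would record the two operator identities driving the comparison. From $f-\calL f=f(\bx)\cdot\Psi$ (\prettyref{lmm:block_magic}) we have $\calI-\calL=\Phi E=\frac{\eta}{N}E^*PE$, whereas $\Phi\Phi^*=\frac{\eta^2}{N^2}E^*P^2E$. The crux is the spectral bound $\norm{P}\le s$ (established in the proof of \prettyref{lmm:condition-P}, where $\norm{P_{ii}}\le s$ for \FedAvg and $\le 1$ for \FedProx), which gives $P^2\preceq\norm{P}P\preceq sP$ since $P\succ0$. Conjugating by $E$ and rescaling yields the key positive semidefinite inequality on $\calH$:
\[
\Phi\Phi^*=\tfrac{\eta^2}{N^2}E^*P^2E\preceq\tfrac{s\eta}{N}\cdot\tfrac{\eta}{N}E^*PE=\tfrac{s\eta}{N}(\calI-\calL).
\]

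Third I would conjugate this by the self-adjoint $\calT$ to obtain $XX^*=\calT\Phi\Phi^*\calT\preceq\frac{s\eta}{N}\calT(\calI-\calL)\calT$, both sides being positive semidefinite (recall $\calI-\calL\succeq0$, as $\calL$ is self-adjoint with spectrum in $[0,1]$ by \prettyref{lmm:eigen-KP}). The final ingredient is that $X^*X$ and $XX^*$ share their nonzero spectrum, so $\norm{A}=\norm{X^*X}=\opnorm{XX^*}$ and $\Tr A=\Tr(X^*X)=\Tr(XX^*)$. Monotonicity of the operator norm and of the trace under $\preceq$ then turns the displayed inequality into exactly the two claimed bounds $\norm{A}\le\frac{s\eta}{N}\opnorm{\calT(\calI-\calL)\calT}$ and $\Tr A\le\frac{s\eta}{N}\Tr(\calT(\calI-\calL)\calT)$.

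The main subtlety is the bookkeeping across the $\reals^N$–$\calH$ boundary: verifying that $\Phi$ and $E$ are genuinely the mutual building blocks of $\calI-\calL$ (which rests squarely on the identities of \prettyref{lmm:block_magic}), and justifying that passing from $X^*X$ on $\reals^N$ to $XX^*$ on the possibly infinite-dimensional $\calH$ preserves both the operator norm and the trace. The latter is fine because $X$ has finite rank $\le N$, so $XX^*$ is a finite-rank, trace-class operator whose trace equals the finite sum of its eigenvalues; everything else is the routine monotonicity of $\opnorm{\cdot}$ and $\Tr(\cdot)$ under the semidefinite order.
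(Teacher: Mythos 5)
Your proof is correct, and it reaches the two bounds by a genuinely different route than the paper. The paper proves the two inequalities separately and by hand: for the operator norm it uses the variational characterization $\norm{A}=\max_{\norm{a}\le 1}\max_{\hnorm{f}\le 1}\hprod{f}{\calT(a\cdot\Psi)}^2$ together with the identities of \prettyref{lmm:block_magic} and a Cauchy--Schwarz step, arriving at $\hprod{f}{\calT(a\cdot\Psi)}^2\le \frac{s\eta^2}{N^2}\,g(\bx)^\top P g(\bx)=\frac{s\eta}{N}\hprod{f}{\calT(\calI-\calL)\calT f}$ with $g=\calT f$; for the trace it expands $\Tr(A)=\sum_j\hnorm{\calT(\Psi_j)}^2$ over an orthonormal basis $\{\phi_i\}$ and runs the analogous computation with $f_i=\calT\phi_i$. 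You instead package the same two ingredients -- the representations $\Psi=\frac{\eta}{N}Pk_\bx$ and $f(\bx)\cdot\Psi=f-\calL f$ from \prettyref{lmm:block_magic}, and the spectral bound $P^2\preceq \norm{P}P\preceq sP$ -- into a single semidefinite inequality $XX^*=\calT\Phi\Phi^*\calT\preceq\frac{s\eta}{N}\calT(\calI-\calL)\calT$ for $X=\calT\Phi$, and then transfer it to $A=X^*X$ via the fact that $X^*X$ and $XX^*$ share their nonzero spectrum (equivalently, the Hilbert--Schmidt identity $\sum_i\norm{Xe_i}^2=\sum_j\norm{X^*\phi_j}^2$). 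This buys you economy and unification: one positive-semidefinite comparison yields both the norm and the trace bounds simultaneously, and it makes transparent that the only quantitative input is $P^2\preceq sP$. What the paper's version buys is self-containedness: it never forms adjoints across the $\reals^N$--$\calH$ boundary and needs no trace-class or finite-rank bookkeeping in a possibly infinite-dimensional $\calH$, which you correctly flag as the one subtlety of your approach and correctly resolve ($X$ has rank at most $N$, so $\Tr(XX^*)$ is a finite sum, and $\Tr$ of a positive operator is monotone under $\preceq$ over any orthonormal basis, with the right-hand side possibly $+\infty$ in which case the bound is vacuous). Your appeals to $P\succ 0$ (from $\gamma<1$, or unconditionally for \FedProx), to the symmetry of $P$, and to $\calI-\calL\succeq 0$ via \prettyref{lmm:eigen-KP} are all consistent with the paper's standing assumptions, so there is no gap.
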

\begin{proof}
By definition, $a^\top A a = \hnorm{\calT(a\cdot \Psi)}^2$ for any $a \in \reals^N$. Therefore, 
\[
\norm{A}
=\max_{\norm{a}\le 1}\hnorm{\calT(a\cdot \Psi)}^2
=\max_{\norm{a}\le 1}\max_{\hnorm{f}\le 1}\hprod{f}{\calT(a\cdot \Psi)}^2.
\]
For any $f\in\calH$ and $a\in\reals^N$ with $\hnorm{f}\le 1$ and $\norm{a}\le 1$, we have
\[
\hprod{f}{\calT(a\cdot \Psi)}
=\hprod{\calT f}{a \cdot \Psi}
=\frac{\eta}{N}a^\top P g(\bx)
\le \frac{\eta}{N}\norm{P g(\bx)},
\]
where $g=\calT f$ and the second equality used \prettyref{lmm:block_magic}.
Using $\Norm{P}\le s$ and \prettyref{lmm:block_magic}, we get
\[
\hprod{f}{\calT(a \cdot \Psi)}^2
\le \frac{s\eta^2}{N^2}g(\bx)^\top P g(\bx)
=\frac{s\eta}{N}\Hprod{g}{g-\calL g}
=\frac{s\eta}{N}\Hprod{f}{\calT (\calI-\calL) \calT f}.
\]

Next we prove the second inequality. 
Let $\{\phi_1,\phi_2,\dots\}$ be an orthonormal basis of $\calH$, and let $f_i\triangleq \calT \phi_i$.
By the definition of $\calT$ and \prettyref{lmm:block_magic},
\begin{align*}
\Tr(A)
&=\sum_j\hnorm{\calT(\Psi_j)}^2
=\sum_{ij}\hprod{\phi_i}{\calT (\Psi_j)}^2
=\sum_{ij}\hprod{f_i}{e_j \cdot \Psi}^2\\
&=\sum_{ij}\pth{\frac{\eta}{N}e_j^\top P f_i(\bx)}^2
=\sum_{i}\norm{\frac{\eta}{N}P f_i(\bx)}^2.
\end{align*}
Since $\Norm{P}\le s$, we further have 
\[
\Tr(A)
\le \frac{s\eta^2}{N^2}\sum_{i}f_i(\bx)^\top Pf_i(\bx)
= \frac{s\eta}{N}\sum_{i}\hprod{f_i}{f_i-\calL f_i}
= \frac{s\eta}{N}\sum_{i}\hprod{\phi_i}{\calT(I-\calL)\calT \phi_i}.\qedhere
\]
\end{proof}

\section{Proofs in Section \ref{sec:theta_err} }

Again we focus on proving the results for \FedAvg. 
The proof for \FedProx follows verbatim using the facts that $\norm{P} \le 1$ and that $\norm{P^{-1}} \le \kappa$.  

\subsection{Proof of~\prettyref{thm:conv_theta}}
\label{sec:proof_conv_theta}
Since the desired conclusion~\prettyref{eq:theta_bound_desired} trivially holds when $\rho_N=0$, we assume $\rho_N>0$ in the  proof.

It follows from~\prettyref{prop:theta_recursion} and \eqref{eq:fa-equality} that
\begin{align}
f_t - \fa 
&= \calL (f_{t-1}-\fa) - (\fa -\calL \fa) + y \cdot \Psi
=\calL (f_{t-1}-\fa) + \xi\cdot \Psi\nonumber\\
&= \calL^t (f_{0}-\fa) + \sum_{\tau=0}^{t-1}\calL^\tau (\xi\cdot \Psi).\label{eq:L-noise}
\end{align}
To analyze \eqref{eq:L-noise}, we show properties of $\calL$ and the matrix $\Sigma$ of size $N\times N$ with 
\begin{equation}
\label{eq:def-matrix-T}
\Sigma_{ij}=\hprod{\sum_{\tau=0}^{t-1}\calL^\tau (e_i\cdot \Psi)}{\sum_{\tau=0}^{t-1}\calL^\tau (e_j\cdot \Psi)}.
\end{equation}

\begin{lemma}
\label{lmm:L-psd}
For $f \in \calH$, define $\calP f=\frac{1}{N}f(\bx)\cdot k_{\bx}$.
Then,
\begin{align}
\calI - s \eta \calP \preceq \calL\preceq \calI - s \eta \calP / \kappa, \label{eq:P_operator}
\end{align}
where $\calT_1 \preceq \calT_2$ means $\calT_2-\calT_1$ is positive.

Moreover, assume $\phi(x)$ is $d$-dimensional. Then there is a one-to-one correspondence between the eigenvalues of $\calP$ and those of the $d$ by $d$ matrix $\frac{1}{N}\phi(\bx)^\top \phi(\bx)$.
\end{lemma}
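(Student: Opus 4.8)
The plan is to reduce the operator inequality to a local, eigenvalue-by-eigenvalue computation and then assemble the global bound by a weighted sum. First I would introduce the local averaging operators $\calP_i f \triangleq \frac{1}{n_i}\sum_{j=1}^{n_i} f(x_{ij}) k_{x_{ij}}$, so that $\calL_i = \calI - \eta \calP_i$ and, crucially, $\calP = \sum_{i=1}^M w_i \calP_i$ (with $w_i=n_i/N$). Each $\calP_i$ is self-adjoint and positive, since $\Hprod{f}{\calP_i f} = \frac{1}{n_i}\norm{f(\bx_i)}^2 \ge 0$, and it has finite rank, its range being spanned by $k_{x_{i1}},\dots,k_{x_{in_i}}$ with nonzero eigenvalues equal to those of $K_{\bx_i}$. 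Because $\calL = \sum_i w_i \calL_i^s$ and the claimed bounds are linear in $\calP = \sum_i w_i \calP_i$, it suffices to establish the local two-sided bound $\frac{s\eta}{\kappa}\calP_i \preceq \calI - \calL_i^s \preceq s\eta \calP_i$ for each $i$ and then take the $w_i$-weighted sum.

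For the local bound I would use the telescoping identity
\[
\calI - \calL_i^s = (\calI - \calL_i)\sum_{\tau=0}^{s-1}\calL_i^\tau = \eta\calP_i \calQ_i, \qquad \calQ_i \triangleq \sum_{\tau=0}^{s-1}\calL_i^\tau = \sum_{\tau=0}^{s-1}(\calI-\eta\calP_i)^\tau,
\]
where $\calQ_i$ is the operator counterpart of the matrix $P_{ii}$ from \prettyref{lmm:block_magic}. Since $\calQ_i$ is a polynomial in $\calP_i$, it is self-adjoint, commutes with $\calP_i$, and shares its eigenbasis; its eigenvalues are $s$ on $\ker\calP_i$ and $\sum_{\tau=0}^{s-1}(1-\eta\lambda)^\tau$ on each eigenvalue $\lambda\in(0,\norm{K_{\bx_i}}]$ of $\calP_i$. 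The map $x\mapsto \sum_{\tau=0}^{s-1}(1-x)^\tau$ is decreasing on $[0,\gamma]$, so these eigenvalues all lie in $[s/\kappa, s]$ — precisely the bounds $\lambda_{\min}(P_{ii})\ge s/\kappa$ and $\norm{P_{ii}}\le s$ established in the proof of \prettyref{lmm:condition-P}. Hence $\frac{s}{\kappa}\calI \preceq \calQ_i \preceq s\calI$, and multiplying through by $\eta\calP_i$ (justified below) yields the local bound. Summing $\calI - s\eta\calP_i \preceq \calL_i^s \preceq \calI - \frac{s\eta}{\kappa}\calP_i$ with weights $w_i$ gives $\calI - s\eta\calP \preceq \calL \preceq \calI - \frac{s\eta}{\kappa}\calP$, as desired; the \FedProx case is identical with $s=1$, using $\norm{P_{ii}}\le 1$ and $\lambda_{\min}(P_{ii})\ge 1/\kappa$.

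The one genuinely delicate step is passing from $\frac{s}{\kappa}\calI\preceq\calQ_i\preceq s\calI$ to $\frac{s\eta}{\kappa}\calP_i \preceq \eta\calP_i\calQ_i \preceq s\eta\calP_i$: multiplying operator inequalities is invalid in general, and it works here only because $\calP_i$ and $\calQ_i$ commute. I would justify it by writing $\eta\calP_i\calQ_i = (\eta\calP_i)^{1/2}\calQ_i(\eta\calP_i)^{1/2}$ — legitimate since $\calP_i\succeq0$ and $\calQ_i$ commutes with $\calP_i^{1/2}$ — and then sandwiching $\calQ_i$ between $\frac{s}{\kappa}\calI$ and $s\calI$, or equivalently by diagonalizing $\calP_i$ and $\calQ_i$ simultaneously on the finite-dimensional range of $\calP_i$ while noting the inequality is trivially $0=0$ on $\ker\calP_i$. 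The finite-rankness of $\calP_i$ keeps this functional-calculus argument elementary.

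For the final claim, when $\phi$ is $d$-dimensional I would use the isometry $\calH\cong\reals^d$ sending $f=\theta\cdot\phi$ to its coefficient vector $\theta$, under which $\Hnorm{f}^2=\norm{\theta}^2$ by~\prettyref{eq:norm_def} and the representer $k_x$ maps to $\phi(x)$, since $k(z,x)=\phi(z)\cdot\phi(x)$ implies the coefficient of $k_x$ on $\phi_\ell$ is $\phi_\ell(x)$. Then $\calP f = \frac{1}{N}\sum_{i,j}(\theta\cdot\phi(x_{ij}))\phi(x_{ij})$ has coefficient vector $\frac{1}{N}\phi(\bx)^\top\phi(\bx)\,\theta$, so $\calP$ is unitarily equivalent to the $d\times d$ matrix $\frac{1}{N}\phi(\bx)^\top\phi(\bx)$ and the two have identical spectra, giving the claimed one-to-one correspondence of eigenvalues.
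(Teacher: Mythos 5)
Your proposal is correct and takes essentially the same route as the paper: your spectral bound $\frac{s}{\kappa}\calI \preceq \calQ_i \preceq s\calI$ is exactly the paper's scalar inequality $1-s\lambda \le (1-\lambda)^s \le 1-s\lambda/\kappa$ (equivalently, the monotonicity of $x\mapsto x/(1-(1-x)^s)$ used there), applied in the common eigenbasis of the commuting operators $\calP_i$ and $\calL_i$, followed by the same $w_i$-weighted summation, and your isometry argument for the eigenvalue correspondence matches the paper's matrix representation $\calP(a\cdot\phi)=\bigl(\frac{1}{N}\phi(\bx)^\top\phi(\bx)a\bigr)\cdot\phi$. Your explicit square-root sandwich for $\eta\calP_i\calQ_i$ merely makes rigorous the step the paper compresses into ``all terms are polynomial in $\calL_i$, so it suffices to order the eigenvalues.''
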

\begin{proof}
We first show that
\begin{equation}
\label{eq:Li-psd}
\calI - s(\calI-\calL_i)
\preceq \calL_i^s = (\calI - (\calI-\calL_i))^s 
\preceq \calI - s(\calI-\calL_i)/\kappa.
\end{equation}
Since all terms in \eqref{eq:Li-psd} are polynomial in $\calL_i$, it suffices to show the ordering of corresponding eigenvalues. 
Suppose $\lambda_j$ is the $j$-th eigenvalue of $\calI-\calL_i$.
It is shown in the proof of \prettyref{lmm:eigen-KP} that $0 \le \lambda_j \le \gamma$.
Then,
\[
1-s\lambda_j \le (1-\lambda_j)^s \le 1-s\lambda_j/\kappa.
\]
To see the second inequality, we note the function $ x\mapsto \frac{x}{1-(1-x)^s}$ is monotone increasing in $[0,1]$ and thus
\[
\kappa = \frac{\gamma s}{1-(1-\gamma)^s}\ge \frac{\lambda_j s}{1-(1-\lambda_j)^s}.
\]
Then~\prettyref{eq:P_operator} follows from \eqref{eq:Li-psd} as $\calL=\sum_i w_i \calL^s_i$.

It remains to establish the correspondence between the eigenvalues of $\calP$ and those of $\frac{1}{N}\phi(\bx)^\top \phi(\bx)$. 
Recall that $\{\phi_\ell\}_{\ell=1}^d$ forms an orthonormal basis of $\calH$.
 Thus, it suffices to show a matrix representation of $\calP$ is $\frac{1}{N}\phi(\bx)^\top \phi(\bx) \phi$, \ie, 
 for any $f=a \cdot \phi$ with $a \in \reals^d$, 
we have $\calP f= \left( \frac{1}{N}\phi(\bx)^\top \phi(\bx) a \right)\cdot \phi$.
This follows from the fact that 
$\calP \phi = \frac{1}{N}\phi(\bx)^\top \phi(\bx) \phi$ for 
 $\phi=\left(\phi_1,\ldots, \phi_d\right)$.

\end{proof}

\begin{lemma}
\label{lmm:T-property}
Let $\{\tilde\lambda_1,\tilde\lambda_2,\dots\}$ be the eigenvalues of $\calL$. Then,
\[
\Tr(\Sigma) 
\le \frac{\eta s}{N}\sum_{i} \frac{ (1-\tilde\lambda_i^t)^2 }{1-\tilde\lambda_i}
\le (\eta t s)^2 \calR^2\pth{\frac{1}{\sqrt{\eta t s}}},
\]
where $\calR$ is the empirical Rademacher complexity defined in \eqref{eq:R_K_def}.
\end{lemma}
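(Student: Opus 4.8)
The plan is to invoke \prettyref{lmm:norm-trace} with the self-adjoint operator $\calT\triangleq \sum_{\tau=0}^{t-1}\calL^\tau$. Indeed, since $e_i\cdot\Psi=\Psi_i$, the matrix $\Sigma$ defined in \eqref{eq:def-matrix-T} coincides exactly with the matrix $A$ of that lemma, whose entries are $A_{ij}=\Hprod{\calT(\Psi_i)}{\calT(\Psi_j)}$. Hence \prettyref{lmm:norm-trace} immediately gives $\Tr(\Sigma)\le \frac{s\eta}{N}\Tr(\calT(\calI-\calL)\calT)$, and it remains to control this trace. Because $\calT$ is a polynomial in the self-adjoint positive operator $\calL$, whose eigenvalues lie in $[0,1]$ by \prettyref{lmm:eigen-KP}, the operators $\calT$ and $\calI-\calL$ are simultaneously diagonalizable; using the telescoping identity $\calT(\calI-\calL)=\calI-\calL^t$ one finds that $\calT(\calI-\calL)\calT=\calT^2(\calI-\calL)$ has eigenvalues $\frac{(1-\tilde\lambda_i^t)^2}{1-\tilde\lambda_i}$, with the value $0$ assigned at the directions where $\tilde\lambda_i=1$. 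Summing over $i$ yields the first claimed inequality.

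For the second inequality I would first bound the scalar function $g(\lambda)\triangleq \frac{(1-\lambda^t)^2}{1-\lambda}=(1-\lambda^t)\sum_{\tau=0}^{t-1}\lambda^\tau$ on $[0,1]$. The estimates $\sum_{\tau=0}^{t-1}\lambda^\tau\le t$ and $1-\lambda^t\le 1$ give $g(\lambda)\le t$, while $1-\lambda^t=(1-\lambda)\sum_{\tau=0}^{t-1}\lambda^\tau\le t(1-\lambda)$ gives $g(\lambda)\le t^2(1-\lambda)$; together these yield $g(\lambda)\le t\min\{1,t(1-\lambda)\}$. It then remains to relate the eigenvalues $1-\tilde\lambda_i$ of $\calI-\calL$ to the eigenvalues $\lambda_i$ of $K_\bx$ appearing in the definition \eqref{eq:R_K_def} of $\calR$.

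This is where the operator comparison from \prettyref{lmm:L-psd} enters. Its left inequality $\calI-s\eta\calP\preceq\calL$ is equivalent to $\calI-\calL\preceq s\eta\calP$, and the nonzero eigenvalues of the empirical covariance operator $\calP$ coincide with the eigenvalues $\lambda_i$ of the Gram matrix $K_\bx$. By Weyl's monotonicity theorem, ordering eigenvalues decreasingly, the $i$-th eigenvalue of $\calI-\calL$ is at most $s\eta\lambda_i$; since $x\mapsto\min\{1,tx\}$ is nondecreasing and the zero eigenvalues of $\calI-\calL$ contribute nothing, this yields $\Tr(\calT(\calI-\calL)\calT)\le t\sum_{i=1}^N\min\{1,\eta t s\lambda_i\}$. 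Substituting into the first bound gives $\Tr(\Sigma)\le \frac{\eta t s}{N}\sum_{i=1}^N\min\{1,\eta ts\lambda_i\}$, which equals $(\eta ts)^2\calR^2(1/\sqrt{\eta ts})$ after absorbing the factor $(\eta ts)^2$ into the minimum in \eqref{eq:R_K_def}.

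The main obstacle is the passage from the operator inequality $\calI-\calL\preceq s\eta\calP$ to the eigenvalue-wise bound $1-\tilde\lambda_i\le s\eta\lambda_i$: the operators $\calL$ and $\calP$ need not commute, so one cannot diagonalize them together, and the argument must instead rely on the Courant--Fischer/Weyl monotonicity principle combined with the covariance--Gram eigenvalue correspondence. A minor secondary point is the bookkeeping of the at most $N$ nonzero eigenvalues, verifying that the $\tilde\lambda_i=1$ directions drop out of every sum, which is what keeps all the traces finite even when $\calH$ is infinite-dimensional.
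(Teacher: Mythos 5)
Your proof is correct and follows the paper's argument essentially verbatim: both invoke \prettyref{lmm:norm-trace} with $\calT=\sum_{\tau=0}^{t-1}\calL^\tau$, reduce the trace to the spectral sum $\sum_i \frac{(1-\tilde\lambda_i^t)^2}{1-\tilde\lambda_i}$, bound the scalar function by $\min\{t,\,t^2(1-\lambda)\}$, and pass to the eigenvalues $\lambda_i$ of $K_\bx$ via the operator inequality $\calI-\calL\preceq s\eta\calP$ from \prettyref{lmm:L-psd}. Your explicit appeals to Weyl monotonicity (since $\calL$ and $\calP$ need not commute) and to the nonzero-eigenvalue correspondence between $\calP$ and $K_\bx$, as well as the bookkeeping of the $\tilde\lambda_i=1$ directions, merely spell out steps the paper leaves implicit, so this is the same approach, carefully executed.
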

\begin{proof}
Applying \prettyref{lmm:norm-trace} with $\calT=\sum_{\tau=0}^{t-1}\calL^\tau$ yields that
\[
\Tr(\Sigma)
\le \frac{s \eta}{N}\Tr\pth{\sum_{\tau=0}^{t-1}\calL^\tau (I-\calL^t)}.
\]
Let $\{\tilde \lambda_1,\tilde \lambda_2,\dots\}$ denote the eigenvalues of $\calL$, where $\tilde \lambda_i\in [0,1]$ by \prettyref{lmm:condition-P}.
Applying the facts $1-x^t\le \min\{1,t(1-x)\}$ and $\min\{\frac{1}{x},t^2x\} \le \min\{t,t^2x\}$ for $t\ge 0$ and $0\le x\le 1$, we obtain
\begin{align*}
\Tr\left(\sum_{\tau=0}^{t-1}\calL^\tau (I-\calL^t) \right) 
& =\sum_{i} \frac{ (1-\tilde\lambda_i^t)^2 }{1-\tilde\lambda_i}  \\
& \le \sum_{i} \min \left\{ \frac{1}{1-\tilde\lambda_i}, t^2 (1-\tilde\lambda_i)\right\}  \\
& \le  \sum_{i} \min \left\{ t, t^2 (1-\tilde\lambda_i)\right\}. 
\end{align*}
By \prettyref{lmm:L-psd}, 
we have $1-\tilde\lambda_i \le s \eta \lambda_i$
for $1 \le i \le N$ and $1-\tilde\lambda_i=0$ for $i > N$.
It follows that 
\[
\Tr(\Sigma)
\le \frac{s\eta }{N}\sum_{i=1}^N \min \left\{ t, t^2 s \eta \lambda_i \right\}
=t^2s^2\eta^2 \calR^2\left( \frac{1}{\sqrt{\eta ts} }\right),
\]
where the last equality used the definition of $\calR$ in \eqref{eq:R_K_def}.
\end{proof}

For the first term of \eqref{eq:L-noise},
by \prettyref{lmm:L-psd}, we get
\begin{equation}
\label{eq:theta-error-contraction}
\opnorm{\calL} \le 1-\frac{s\eta \lambda_{\min}(\calP)}{\kappa}
=1-\frac{s\eta \rho_N}{\kappa}.
\end{equation}
By linearity, the norm of the second term in \eqref{eq:L-noise} can be represented as 
\[
\hnorm{\sum_{\tau=0}^{t-1}\calL^\tau (\xi\cdot \Psi)}^2
=\xi^\top \Sigma \xi =\Tr(\xi\xi^\top \Sigma).
\]
By the assumption $\expect{\xi \xi^\top} \preceq \sigma^2 I$ and \eqref{eq:trace_fact}, we have
\[
\Expect \left[ \Tr(\xi\xi^\top \Sigma) \right]
\le \sigma^2 \Tr(\Sigma)
\le \sigma^2\frac{\eta s }{N}\sum_i \frac{ (1-\tilde\lambda_i^t)^2 }{1-\tilde\lambda_i} .
\]
Recall that $s\eta \rho_N /\kappa \le 1-\tilde \lambda_i \le 1$ by \prettyref{lmm:eigen-KP} and \eqref{eq:theta-error-contraction}, and that $\phi$ is $d$-dimensional.
We obtain
\begin{equation}
\label{eq:theta-error-noise}
\Expect\Tr(\xi\xi^\top \Sigma)
\le \sigma^2\frac{\eta s }{N}\sum_{i=1}^d\frac{1}{s\eta \rho_N/\kappa}
= \frac{\sigma^2 \kappa d}{N \rho_N}.
\end{equation}
Applying \eqref{eq:theta-error-contraction} and \eqref{eq:theta-error-noise} to \eqref{eq:L-noise} yields the desired~\prettyref{eq:theta_bound_desired}.

It remains to establish~\prettyref{eq:model_bound}.  
By the definition of $\fa$, 
\[
\fa -f_j^* = (\calI-\calL)^{-1}\pth{ \Delta_{f_j^*} \cdot \Psi}.
\]
where $\Delta_{f_j^*} = ( f_1^*(\bx_1)-f_j^*(\bx_1),\dots,f_M^*(\bx_M)-f_j^*(\bx_M))$ defined in \eqref{eq:def-Delta-f}.
Then by linearity,
\[
\Hnorm{\fa -f_j^*}^2
= \Delta_{f_j^*}^\top S \Delta_{f_j^*}
\le \norm{\Delta_{f_j^*}}^2 \norm{S},
\]
where $S$ is a matrix of size $N \times N$ with
$
S_{ij}=\hprod{(\calI-\calL)^{-1} (e_i\cdot \Psi)}{(\calI-\calL)^{-1} (e_j\cdot \Psi)}.
$
Applying \prettyref{lmm:norm-trace} with $\calT=(\calI-\calL)^{-1}$ yields that
\[
\norm{S}\le \frac{s\eta}{N}\opnorm{(\calI-\calL)^{-1}}.
\]
It follows from \eqref{eq:theta-error-contraction} that $\norm{S}\le \frac{\kappa}{N\rho_N}$, which implies \prettyref{eq:model_bound}.

\subsection{Proofs of Corollaries \ref{cor:conv_theta} -- \ref{cor:subspace}}

\begin{proof}[Proof of \prettyref{cor:conv_theta}]
In view of 
\cite[Theorem 5.39]{vershynin2010nonasym} and the union bound, with probability at least $1-e^{-d}$,
$$
\norm{ \frac{1}{N} \phi(\bx)^\top \phi(\bx)
- \frac{1}{N}\sum_{i=1}^M\sum_{j=1}^{n_i}\Sigma_{ij} } 
\le c_1 \max\left\{ \sqrt{ \frac{d}{N} } , \frac{d}{N} \right\},
$$
where $c_1$ is a universal constant.
By the assumption, $\alpha I \preceq \Sigma_{ij} \preceq \beta I$ for some fixed constant $\alpha, \beta>0.$
Therefore,  $$
\rho_N = \lambda_{\min}
\left(\frac{1}{N} \phi(\bx)^\top \phi(\bx) \right)
\ge \alpha - c_1 \max\left\{ \sqrt{ \frac{d}{N} } , \frac{d}{N} \right\}
\ge \alpha/2,
$$
where the first inequality follows from Weyl's inequality
and the second inequality holds by choosing 
$N \ge d \max\{ 4 c_1^2/\alpha^2, 2c_1/\alpha\}$.
The desired \prettyref {eq:conv_theta_random} readily follows from 
\eqref{eq:theta_bound_desired}.

It remains to prove~\eqref{eq:model_theta_random}.
By the definition of $\Norm{\Delta_{f_j^*}}$, we have
\begin{equation}
\label{eq:Delta-Gamma}
\norm{\Delta_{f_j^*}}^2
= \sum_{i=1}^M\norm{f_i^*(\bx_i)-f_j^*(\bx_i)}^2
=\sum_{i=1}^M \sum_{j=1}^{n_i}
 \iprod{\phi(x_{ij})}{\theta_i^*-\theta_j^*}^2.
\end{equation}
Recall that $\Gamma=\max_{i,j} \norm{\theta_i^*-\theta_j^*}$. Thus $\iprod{\phi(x_{ij})}{\theta_i^*-\theta_j^*}$ are independent and sub-Gaussian random variables with 
the sub-Gaussian norm bounded by $c_2\Gamma$ for a constant $c_2$.
It follows from the Hanson-Wright inequality that 
$$
\prob{\norm{\Delta_{f_j^*}}^2 \ge 
\expect{\norm{\Delta_{f_j^*}}^2}
+t }
\le \exp\left( - c_3 \min \left(
\frac{t^2}{\Gamma^4 N}, \frac{t}{\Gamma^2} \right) \right).
$$
Setting $t=c_4 \Gamma^2 N$ for some large constant $c_4$, we get that
with probability at least $1-\exp\left( - N  \right)$, 
$$
\norm{\Delta_{f_j^*}}^2
\le \expect{\norm{\Delta_{f_j^*}}^2}+ c_4 \Gamma^2 N
\le \Gamma^2 \left(\beta+c_4\right)N,
$$
where the last inequality holds due to $\Sigma_{ij} \preceq \beta I$.
The conclusion \eqref{eq:model_theta_random} follows from~\prettyref{eq:model_bound}.
\end{proof}

\begin{proof}[Proof of~\prettyref{cor:subspace}]
We first lower bound $\rho_N$.
By assumption $F_i^\top F_i/n_i  \succeq \alpha I_d$, and then
\begin{align}
\frac{1}{N} \phi(\bx)^\top \phi(\bx) 
= \frac{1}{N} \sum_{i=1}^M \phi (\bx_i)^\top \phi (\bx_i) 
= \frac{1}{N} \sum_{i=1}^M \frac{d}{r_i} U_i F_i^\top F_i U_i^\top
\succeq \alpha \sum_{i=1}^M  w_i\frac{d}{r_i}
U_i U_i^\top. 
\label{eq:subspace_1}
\end{align}
Note that 
$\expect{U_i U_i^\top} = \frac{r_i}{d} I_d$. Thus,
\begin{equation}
\label{eq:phi-cov-expectation}
\sum_{i=1}^M  w_i \frac{d}{r_i} 
\expect{U_i U_i^\top}= I_d,
\end{equation}
Let 
$$
Y_i=  w_i \frac{d}{r_i}\left[ U_i U_i^\top - \expect{U_i U_i^\top}\right].
$$
Next we use the matrix Bernstein  inequality to bound
the deviation $\sum_{i=1}^M Y_i $. 
Note that $\opnorm{Y_i} \le 2 w_i d/r_i$
and
\begin{align*}
\opnorm{\sum_{i=1}^M \expect{Y_i^2} }
&=\opnorm{ \sum_{i=1}^M w_i^2 \frac{d^2}{r_i^2}\left( \expect{U_i U_i^\top} - \left( \expect{U_i U_i^\top} \right)^2 \right) } \\
&=\opnorm{\sum_{i=1}^M  w_i^2 \frac{d^2}{r_i^2} \left( \frac{r_i}{d} I_d -  \left(\frac{r_i}{d}  \right)^2 I_d \right)  }\\
&\le \sum_{i=1}^M w_i^2 \frac{d}{r_i}.
\end{align*}
Therefore, by  the matrix Bernstein  inequality, with probability at least $1-d^{-1}$, for a universal constant $c_3>0$,
\begin{align}
\opnorm{\sum_{i=1}^M Y_i } & \le c_3 \sqrt{\sum_{i=1}^M w_i^2  \frac{d}{r_i} \log d } + c_3 
\max_{1 \le i \le M} 
w_i \frac{d}{r_i} \log d \nonumber\\
& \overset{(a)}{\le} 
c_3 \sqrt{ \frac{\nu d\log d}{N} } + c_3 
 \frac{ \nu d \log d}{N} \nonumber\\
& \overset{(b)}{\le} \frac{1}{2},\label{eq:phi-cov-perturb}
\end{align}
where $(a)$ holds 
by definition 
$\nu = \max_{i \in [M]} n_i/r_i$ and $w_i=n_i/N$; $(b)$ holds by the assumption that $N \ge C \nu d \log d$ for a sufficiently large constant $C$.
Therefore, combining \eqref{eq:phi-cov-expectation} and \eqref{eq:phi-cov-perturb},
$$
\frac{1}{N} \phi(\bx)^\top \phi(\bx)
\succeq  \alpha\sum_{i=1}^M w_i \frac{d}{r_i} U_i U_i^\top \succeq \frac{\alpha}{2} I_d.
$$
Thus the desired conclusion \prettyref {eq:conv_theta_random_2} readily follows from \eqref{eq:theta_bound_desired} in~\prettyref{thm:conv_theta}.
The proof of~\prettyref{eq:model_theta_random_2} follows 
similarly from \eqref{eq:Delta-Gamma} and
\[
\sum_{i=1}^M\norm{f_i^*(\bx_i)-f_j^*(\bx_i)}^2
\le 
\sum_{i=1}^M \norm{\phi(\bx_i)}^2\norm{\theta_i^*-\theta_j^*}^2,
\]
\[
\sum_{i=1}^M \norm{\phi(\bx_i)}^2 
\le \sum_{i=1}^M \frac{d}{r_i}\norm{F_i}^2 
\le \beta  d\sum_{i=1}^M \frac{n_i }{r_i}
\le \beta \nu Md. \qedhere
\]
\end{proof}

\section{Proofs in Section~\ref{subsec: Characterization of Federation gains}}

\begin{proof}[Proof of~\prettyref{thm:FG}]
It follows from~\prettyref{cor:conv_theta} that 
$$
R^{\mathsf{Fed}}_j \lesssim \kappa  \left(\sigma^2 d/N + \Gamma^2  \right). 
$$
Then the desired conclusion readily follows from the following claim:
\[
 R^{\mathsf{Loc}}_j \gtrsim
  \min\{ \sigma^2 d/n_j, B^2\} + \max\{1- n_j/d, 0\} B^2.
\]
It remains to check the claim. Note that to estimate the model $f_j^*\in \calH_B$, it is equivalent to estimating the model coefficient $\theta_j^*$ in the $\ell^2$ ball of radius $B$ centered at the origin.

For any $n_j$ and $d$,
we bound the minimax risk from below using the celebrated Fano's inequality. Let $\calV$
denote a $1/2$-packing set of the unit $\ell^2$ ball in $\ell^2$ norm. By simple volume ratio argument (see e.g.~\cite[Lemma 5.5 and Lemma 5.6]{wainwright2019high}, such a set of cardinarlity $|\calV|\ge 2^d$ exists.
For each $v \in \calV$, define $\theta_v=4\delta v$, where 
$\delta$ will be optimized later. For every pair of $v \neq v'$, $\norm{\theta_v-\theta_{v'}} = 4\delta \norm{v-v'} \ge 2\delta$. Also, let $\calP_v$ denote the distribution of $y_j$ conditional on $\theta_j^*=\theta_v$. 
Let $D_{\rm KL}$ denote the Kullback–Leibler divergence.
Then 
by the convexity of $D_{\rm KL}$, 
we have
\begin{align*}
D_{\rm KL} \left( \calP_v \| \calP_{v'}\right)
& \le \expects{D_{\rm KL} \left( \calN\left(\phi(\bx_j) \theta_v, \sigma^2 \identity \right)\| \calN \left(\phi(\bx_j) \theta_v', \sigma^2 \identity \right) \right)}{\bx_j} \\
& =\expects{\frac{\norm{\phi(\bx_j) (\theta_v-\theta_v')}^2}{2\sigma^2}}{\bx_j} \\
& \le n_j \beta \norm{\theta_v-\theta_v'}^2 /(2\sigma^2) \\
& \le n_j \beta 32 \delta^2/\sigma^2.
\end{align*}
Therefore, 
$$
\max_{v, v'} 
D_{\rm KL} \left( \calP_v \| \calP_{v'}\right)
\le 32 n_j \beta  \delta^2/\sigma^2.
$$
Finally, applying Fano's inequality (see e.g.~\cite[Proposition 15.12]{wainwright2019high}), we get 
$$
R^{\mathsf{Loc}}_j \ge \delta^2 \left( 1 - \frac{32 n_j \beta  \delta^2/\sigma^2 + \log 2}{d \log 2} \right).
$$
Picking $\delta^2=\frac{1}{64} \min\{ \sigma^2 d /(\beta n_j), B^2\}$. Then by construction, $\norm{\theta_v} \le B$. Further, it follows from the last displayed equation that 
$$
R^{\mathsf{Loc}}_j \ge c(\beta) \min\{ \sigma^2 d /(\beta n_j), B^2\},
$$
where $c(\beta)$ is a constant that only depends on $\beta.$

When $n_j<d$, we bound the minimax risk from below by assuming $\theta_j^*$
is uniformly distributed over the $\ell^2$ sphere $\calS$ of radius $B$. Moreover, we use the standard genie-aided argument by assuming that the estimator also has access to $\bar{y}_j\triangleq \phi(\bx_j)\theta_j^*$. In this case, the posterior distribution of $\theta_j^*$ (conditional on $\{\bx_j, y_j, \bar{y}_j\}$) is the uniform distribution over $
\calS'\triangleq \calS \cap \{\theta: \phi(\bx_j) \theta =\bar{y}_j\}$. Construct matrix $V$ (resp.\ $V_{\perp}$) by choosing
its columns as a set of basis vectors in the row (resp.\ null) space of $\phi(\bx_j).$ Then 
$\calS'=\{\theta: \theta=V_{\perp} \alpha + V \beta \}$, where $\beta$ is the unique solution such that 
$\phi(\bx_j)V\beta=\bar{y}_j$ and $\alpha$ satisfies $\norm{\alpha}^2=B^2-\norm{\beta}^2.$
Therefore,  for any estimator $\hat{\theta}_j \left( \bx_j, y_j, \bar{y}_j \right)$,
$$
\expect{\norm{\hat{\theta}_j - \theta_j^* }^2 \mid \bx_j, y_j, \bar{y}_j }
\ge \inf_{\theta} 
\expect{ \norm{\theta-\theta_j^*}^2 \mid \bx_j, y_j, \bar{y}_j } = 
B^2-\norm{\beta}^2
$$
Taking the average over both hand sides and using 
$V^\top \theta_j^*=\beta$ so that 
$$
\expects{\norm{\hat{\theta}_j - \theta_j^*}^2}{\theta_j^*,\bx_j,\xi_j}
\ge B^2 - \expect{\norm{V^\top \theta_j^*}^2}
\ge \left( 1-\frac{n_j}{d}\right) B^2, \quad \forall \hat{\theta}_j,
$$
where the last inequality holds because the rank of $V$ is at most $n_j$ and the prior distribution of $\theta_j^*$
is uniform over the sphere $\calS$, so that 
$$
\expect{\norm{V^\top \theta_j^*}^2 \mid \bx_j}=
\iprod{VV^\top}{\expect{\theta_j^* (\theta_j^*)^\top}}
=\frac{B^2}{d} \iprod{VV^\top}{\identity}
=\frac{B^2}{d} \fnorm{V}^2 \le \frac{B^2 n_j}{d}.
$$
Therefore, 
\[
R^{\mathsf{Loc}}_j  \ge \inf_{ \hat{\theta}_j } \expects{\norm{\hat{\theta}_j - \theta_j^*}^2}{\theta_j^*,\bx_j,\xi_j} ] \ge \left( 1-\frac{n_j}{d}\right) B^2.
\qedhere
\]
\end{proof}

\begin{proof}[Proof of~\prettyref{thm:fg_subspace}]
It follows from~\prettyref{eq:conv_theta_random_2} in~\prettyref{cor:subspace} that 
$$
R^{\mathsf{Fed}}_j \lesssim \sigma^2 \kappa d/N.
$$
Then the desired conclusion readily follows from the following claim:
\[
 R^{\mathsf{Loc}}_j \gtrsim
 \min\{ \sigma^2 d/n_j, B^2\} + (1- r_j/d) B^2.
\]
The proof of the claim follows verbatim as that in~\prettyref{thm:FG} with the rank of $V$ being at most $r_j$, and is omitted for simplicity. 
%
%
\end{proof}

\end{document}